\providecommand{\keywords}[1]{\textbf{Keywords:} #1}
\newtheorem{theorem}{Theorem}
\newtheorem{lemma}{Lemma}[section]
\newtheorem{proposition}{Proposition}
\newtheorem{definition}{Definition}[section]
\newtheorem{remark}{Remark}
\newtheorem{assumption}{Assumption}[section]
\def\abs#1{\left|#1\right|}
\def\argmax{{\arg\max}}
\def\br{\mathbf{r}}
\def\bbC{\mathbb{C}}
\def\bbE{\mathbb{E}}
\def\bbI{\mathbb{I}}
\def\bbN{\mathbb{N}}
\def\bbR{\mathbb{R}}
\def\bbS{\mathbb{S}}
\def\cB{\mathcal{B}}
\def\cE{\mathcal{E}}
\def\cF{\mathcal{F}}
\def\cG{\mathcal{G}}
\def\cH{\mathcal{H}}
\def\cL{\mathcal{L}}
\def\cM{\mathcal{M}}
\def\cN{\mathcal{N}}
\def\cP{\mathcal{P}}
\def\cQ{\mathcal{Q}}
\def\cR{\mathcal{R}}
\def\cT{\mathcal{T}}
\def\xmax{x_{\sf max}}
\def\bmax{b_{\sf max}}
\def\diag{{\sf diag}}
\def\ind{{\sf ind}}
\def\lspan{{\sf span}}
\def\norm#1{\left\|#1\right\|}
\def\Beta{{\sf Beta}}
\def\supp{{\sf supp}}
\def\comp{{\sf comp}}
\def\Lap{{\sf Lap}}
\def\pr{\mathbb{P}}
\def\R{{\mathbb{R}}}
\def\his{\mathcal{H}}
\def\calS{\mathcal{S}}
\def\sfN{{\mathsf{N}}}
\def\ind{\mathbbm{1}}
\newcommand{\innerprod}[1]{\left\langle#1\right\rangle}
\mathchardef\mhyphen="2D
\newcommand{\kibitz}[2]{\ifnum\Comments=1\textcolor{#1}{#2}\fi}
\definecolor{darkgreen}{rgb}{0,0.5,0}
\definecolor{purple}{rgb}{1,0,1}
\title{Thompson Sampling for High-Dimensional Sparse Linear Contextual Bandits}
\author{
   Sunrit Chakraborty 
 {\color{blue}*}
   \quad Saptarshi Roy {\color{blue}*}
   \quad Ambuj Tewari\\
   University of Michigan, Ann Arbor\\
   \texttt{\{sunritc, roysapta, tewaria\}@umich.edu}\\
   }
\begin{document}
\maketitle
\def\thefootnote{{\color{blue}*}}\footnotetext{Alphabetical order: First two authors have equal contribution.}\def\thefootnote{\arabic{footnote}}

\begin{abstract}
We consider the stochastic linear contextual bandit problem with high-dimensional features. We analyze the Thompson sampling algorithm using special classes of sparsity-inducing priors (e.g., spike-and-slab) to model the unknown parameter and provide a nearly optimal upper bound on the expected cumulative regret. To the best of our knowledge, this is the first work that provides theoretical guarantees of Thompson sampling  in high-dimensional and sparse contextual bandits. For faster computation, we use variational inference instead of Markov Chain Monte Carlo (MCMC) to approximate the posterior distribution. Extensive simulations demonstrate the improved performance of our proposed algorithm over existing ones. 
\end{abstract}

 \keywords{linear contextual bandit\and high-dimension \and sparsity \and Thompson sampling \and slab-and-spike prior \and variational Bayes.}

\section{Introduction}
\label{sec:introduction}
Sequential decision-making, including bandits problems and reinforcement learning, has been one of the most active areas of research in machine learning. It formalizes the idea of selecting actions based on current knowledge to optimize some long term reward over sequentially collected data. On the other hand, the abundance of personalized information allows the learner to make decisions while incorporating this contextual information, a setup that is mathematically formalized as contextual bandits. 
Moreover, in the big data era, the personal information used as contexts often has a much larger size, which can be modeled by viewing the contexts as high-dimensional vectors. Examples of such models cover internet marketing and treatment assignment in personalized medicine, among many others. 

A particularly interesting special case of the contextual bandit problem is the linear contextual bandit problem, where the expected reward is a linear function of the features \citep{abe2003reinforcement, auer2002using}. Under this setting, \cite{dani2008stochastic}, \cite{chu2011contextual} and \cite{abbasi2011improved} showed polynomial dependence of the cumulative regret on ambient dimension $d$ and time horizon $T$ in low dimensional case. {\color{black}Specifically, \cite{dani2008stochastic} and \cite{abbasi2011improved} proved a regret upper bound scaling as $O(d \sqrt{T})$, while \cite{chu2011contextual} showed a regret upper bound of the order $O(\sqrt{d T})$.} It is worthwhile to mention that all of the aforementioned algorithms fall under a certain class of algorithms known as upper confidence bound (UCB) type algorithms that rely on the construction of a specific confidence set for the unknown parameter. In contrast, Thompson Sampling (TS) maintains uncertainty about the unknown parameter in the form of a posterior distribution. The first TS algorithm under this setting was proposed by \cite{agrawal2013thompson} where they established a regret bound of the order $O(d^2 \delta^{-1} \sqrt{T^{1+\delta}})$ for any $\delta \in (0,1)$.

There is also a large body of work present in high-dimensional
sparse linear contextual bandit setup, where the reward only
depends on a small subset of features of the observed contexts.  This area has recently attracted considerable attention
due to its abundance in modern reinforcement learning
applications (e.g, clinical trials, personalized recommendation systems, etc.) and has quite naturally spawned theoretical research in this direction. To mention a few, the LASSO-bandit algorithm in \cite{bastani2020online}, doubly robust-LASSO \citep{kim2019doubly} and MCP-bandit algorithm in \cite{wang2018minimax} are shown to have a regret bound with logarithmic dependence on $d$. \cite{HaoLattimore2020} proposed \emph{explore the sparsity and then commit} (ESTC) algorithm which also enjoys the regret scaling as $O(T^{2/3} \log d)$ in ``data poor'' regime, i.e., when $T\ll d$. Very recently, \cite{chen2022nearly} proposed sparse-LinUCB algorithm and sparse-SupLinUCB algorithm for high dimensional contextual bandits with linear rewards. Both of the algorithms are based on the best subset selection approach and enjoy $O(\sqrt{T})$ scaling in the time horizon and poly-logarithmic dependence in the ambient dimension $d$. However, there has been very limited work dedicated to analyzing TS algorithms in high-dimensional sparse bandit setups. \cite{hao2021information} proposed a sparse information-directed sampling (IDS) algorithm which under a special case reduces to a TS algorithm based on a spike-and-slab Gaussian-Laplace prior. However, the regret bound of IDS scales polynomially in $d$, which is sub-optimal in the high-dimensional regime. In related work, \cite{gilton2017sparse} proposed a linear TS algorithm based on a relevance vector machine (RVM) which again suffers from sub-optimal dependence on $d$.

In this paper, we specifically focus on the high-dimensional
sparse linear contextual bandit (SLCB) setup and propose a TS algorithm based on a sparsity-inducing prior that enjoys almost dimension-independent regret bound. While TS algorithms have been known to empirically perform better than optimism-based algorithms \citep{chapelle2011empirical, kaufmann2012thompson}, theoretical understanding of these is challenging due to the complex dependence structure of the bandit environment. Moreover, posterior sampling in high-dimensional regression (which is the crucial step for TS in high-dimensional SLCB), using MCMC, generally suffers from computational bottleneck.
Our work overcomes all these challenges and makes  the following contributions:
\vspace{-3mm}
\begin{enumerate}
\itemsep0em
    \item We use the sparsity inducing prior proposed in \cite{castillo2015bayesian} for posterior sampling and establish posterior contraction result for \emph{non-i.i.d.\ observations} coming from bandit environment and for a wide class of noise distributions.
    \item Using the posterior contraction result, we establish an almost \emph{dimension free} regret bound for our proposed TS algorithm under different arm-separation regimes parameterized by $\omega$. The algorithm enjoys minimax optimal performance for $\omega \in [0,1)$. To the best of our knowledge, this is the first work that proposes a novel TS algorithm with desirable regret guarantees in high-dimensional and sparse SLCB setup.
    \item Our algorithm does {\em not} need the knowledge of model sparsity level, unlike other algorithms such as LASSO-bandit, MCP-bandit, ESTC, etc.
    \item Finally, the prior allows us to design a computationally efficient TS algorithm based on Variational Bayes. 
    
\end{enumerate}

The rest of the paper is organized as follows. In Section \ref{sec:problem}, we introduce the problem formally and discuss the assumptions and prior distribution. In Section \ref{sec:main_results}, we present the crucial posterior contraction result and the main regret bound for our proposed algorithm. In Section \ref{sec:computation}, we discuss the challenges of drawing samples from the posterior in such problems and present a faster alternative relying on variational inference. In Section \ref{sec:numerical}, we present simulation studies under different setups comparing our proposed method with existing algorithms. Detailed proofs of results and technical lemmas are deferred to the appendix.

\textbf{Notation:} Let $\R$ and $\R^+$ denote the set of real numbers and the set of non-negative real numbers respectively. Denote by $\R^p$ the $p$-dimensional Euclidean space and $\bbS^{p-1}:=\{x\in\bbR^p \mid \norm{x}_2=1\}$ the $(p-1)$-dimensional unit sphere. For a positive integer $K$, denote by $[K]$ the set $\{1, 2, \ldots, K\}$. 
Regarding vectors and matrices, for a vector $v \in \R^p$, we denote by $\norm{v}_0, \norm{v}_1,\norm{v}_2, \norm{v}_\infty$ the $\ell_0, \ell_1, \ell_2, \ell_\infty$ norms of $v$ respectively. We use $\bbI_{p}$ to denote the $p$-dimensional identity matrix. For a $p\times q$ matrix $M$  we define the $\norm{M}:= \max_{j \in [q]} \sqrt{(M^\top M)_{j,j}}$. 

 Regarding distributions, $\sfN(\mu,\sigma)$ denotes the Gaussian distribution with mean $\mu$ and standard deviation $\sigma$, $\Lap(\lambda)$ denotes the Laplace distribution with density $f(x)= (\lambda/2)\exp(-\lambda\abs{x})$, and $\Beta(a,b)$ denotes the beta distribution with parameters $a,b$.

Throughout the paper, let $O(\cdot)$ denote the standard big-O notation, i.e., we say $a_n = O(b_n)$ if there exists a universal constant $C>0$, such that $a_n \leq C b_n$ for all $n \in \mathbb{N}$. Sometimes for notational convenience, we write $a_n \lesssim b_n$ in place of $a_n = O(b_n)$. We write $a_n \asymp b_n$ or $a_n = \Theta(b_n)$ if $a_n = O(b_n)$ and $b_n = O(a_n)$.

\section{Problem Formulation}
\label{sec:problem}
We consider a linear stochastic contextual bandit with $K$ arms. At time $t\in [T]$, context vectors $\{x_i(t)\}_{i\in [K]}$ are revealed for every arm $i$. We assume $x_i(t)\in \bbR^d$ for all $i\in[K], t\in[T]$ and for every $i$, $\{x_i(t)\}_{t\in[T]}$ are i.i.d.\ from some distribution $\cP_i$. At every time step $t$, an action $a_t \in [K]$ is chosen by the learner and a reward $r(t)$ is generated according to the following linear model:
\begin{align}
\label{eq: base model}
    r(t) = x_{a_t}(t)^\top \beta^* + \epsilon(t)
\end{align}
where $\beta^*\in\bbR^d$ is the unknown true signal and $\{\epsilon(t)\}_{t\in[T]}$ are independent sub-Gaussian random noise, also  independent of all the other processes. We assume that the true parameter $\beta^*$ is $s^*$--sparse, i.e., $\norm{\beta^*}_0 = s^*$. We denote by $S^*$ the true support of $\beta^*$, i.e., $S^* = \{ j : \beta_j^* \neq 0\}$.

The goal is to design a sequential decision-making policy $\pi$ that maximizes the expected cumulative reward over the time horizon. To formalize the notion, we define the history $\cH_t$ up to time $t$ as follows:
$$\cH_t := \left\{(a_\tau, r(\tau), \{x_i(\tau)\}_{i\in[K]}) : \tau \in [t] \right\},$$
and an admissible policy $\pi$ generates a sequence of random variables $a_1, a_2,\dots$ taking values in $[K]$ such that $a_t$ is measurable with respect to the $\sigma$-algebra generated by the previous feature vectors from each arm, observed rewards of the chosen arms till the previous round and the current feature vectors, i.e., measurable with respect to the filtration
$\cF_t := \sigma\left(x_{a_{\tau}}(\tau), r(\tau), x_{i}(t); \tau\in [t-1], i\in[K]\right).$

Thus, an algorithm for contextual bandits is a policy $\pi$, which at every round $t$, chooses an action (arm) $a_t$ based on history $\cH_{t-1}$ and current contexts. We note that although contexts of the previous round corresponding to arms that were not chosen are in $\cF_t$, however, they do not provide useful information on the parameter, since we do not observe rewards corresponding to them under the bandit feedback, and hence are not included in the history $\cH_t$. To measure the quality of performance, we compare it with the oracle policy $\pi^*$ which uses the knowledge of the true $\beta^*$ to choose the optimal action $a_t^* := \argmax_{i\in[K]} x_i(t)^\top \beta^*$.
Define $\Delta_i(t)$ to be the difference between the mean rewards of the optimal arm and $i$th arm at time $t$, i.e.,
$\Delta_i(t) = x_{a_t^*}(t)^\top\beta^* - x_i(t)^\top \beta^*.$
Note that under the random-design assumption, $a_t^*$ is also random. Then the regret at time $t$ is defined as $\text{regret}(t) = \Delta_{a_t}(t)$ and the objective of the learner is to minimize the total regret till time $T$, defined as 
$
\label{eq: regret definition}
    R(T) = \sum_{t\in[T]} \text{regret}(t).
$
We also define the matrix $X_t := (x_{a_1}(1), \ldots, x_{a_t}(t))^\top$. 
The time horizon $T$ is finite but possibly unknown, but much smaller compared to the ambient dimension of the parameter, i.e. $d \gg T$. \cite{hao2021information} refers to this regime as ``data-poor" regime; such a regime adds an extra layer of hardness on top of the difficulty incurred by the sparse structure of $\beta$. We also assume that $K$ is fixed and much smaller compared to both $d$ and $T$.

\subsection{Assumptions}
\label{subsec:assumptions}
In this section, we discuss the assumptions of our model. 


\begin{definition}[Sparse Riesz Condition (SRC)]
\label{def: sparse eigen value}
Let $M$ be a $d \times d$ positive semi-definite matrix. The maximum and minimum sparse eigenvalues of M with parameter $s\in [d]$ are defined as follows: 
\begin{align*}
    \label{eq: sparse-eigen}
    \phi_{\min}(s; M)&:= \inf_{\delta: \delta \neq 0, \norm{\delta}_0 \leq s}\frac{\delta^\top M \delta}{\norm{\delta}_2^2}, \\ \phi_{\max}(s; M)&:= \sup_{\delta: \delta \neq 0, \norm{\delta}_0 \leq s}\frac{\delta^\top M \delta}{\norm{\delta}_2^2}.
\end{align*}
We say $M$ has Sparse Riesz Condition if  $0< \phi_{\min}(s, M) \leq \phi_{\max}(s; M) < \infty$.
\end{definition}

 Now we are ready to state the assumptions on the context distributions, which are as follows:

\begin{assumption}[Assumptions on Context Distributions] \label{assumptions: contexts}
We assume that
\vspace{-5pt}
\begin{enumerate}[label=(\alph*)]
\itemsep0em
    \item \label{item: context_bound} For some constant $\xmax\in \bbR^+$, we have that for all $i\in[K]$, $\cP_i (\norm{x}_\infty \leq \xmax) =1$.
    \item \label{item: context_subgaussian} For all arms $i \in [K]$ the distribution $\cP_i$ is sub-Gaussian, i.e.,
    $
    \max_{i \in [K]}\; \bbE_{x \sim \cP_i} \{\exp(t u^\top x)\} \leq \exp(\vartheta^2 t^2/2),
    $
    for all $t \in \bbR$ and $u\in \bbS^{d-1}$. 
    \item \label{item: anti-concentration} There exists a  constant $\xi\in \bbR^+$ such that for each $u\in \bbS^{d-1}\cap \{v\in\bbR^d : \norm{v}_0 \leq Cs^*\}$ and $h\in \bbR^+$ 
    $\cP_i(\innerprod{x, u}^2 \leq h) \leq \xi h,$
    for all $i\in [K]$, ,where $C\in (2,\infty)$.
    \item \label{item: sparse_eigenvalue}The matrix $\Sigma_i := \bbE_{x\sim \cP_i} [x x^\top]$ has bounded maximum sparse eigenvalue, i.e., 
    $
\phi_{\max}(Cs^*, \Sigma_i)
        \leq \phi_u   < \infty,
    $
    for all $i \in [K]$,
     where $C$ is the same constant as in part (c).
\end{enumerate}
\end{assumption}

Assumption \ref{assumptions: contexts}\ref{item: context_bound} basically tells that the contexts are bounded; such assumptions are standard in the bandit literature to obtain results on regret bound that are independent of the scaling of the contexts (or parameter). Assumption \ref{assumptions: contexts}\ref{item: context_subgaussian} says that all the arm-contexts are generated from \emph{mean zero} sub-Gaussian distribution with parameter $\vartheta$, for all time point $t$. This is indeed a very mild assumption on the context distribution and a broad class of distributions enjoys such property. For example, truncated multivariate normal distribution with covariance matrix $\bbI_d$, where the truncation is over the set $\{u \in \bbR^d : \norm{u}_\infty \leq 1\}$ is a valid distribution for the contexts.
 Assumption \ref{assumptions: contexts}\ref{item: anti-concentration} talks about anti-concentration condition that plays a critical role in controlling the estimation accuracy of $\beta^*$. Intuitively, this condition prohibits the context features to fall along a singular direction. In particular, this condition ensures that the directions of the arms are well spread across every direction and thus implicitly promotes exploration in the feature space.
 Moreover, this anti-concentration is very mild as it holds if the density of $x_i(t)^\top u$ is bounded above.
 Assumption \ref{assumptions: contexts}\ref{item: sparse_eigenvalue} imposes an upper bound on the maximum sparse eigenvalue of $\Sigma_i$ which is a common assumption in high-dimensional literature \citep{zhang2008sparsity,zhang2010nearly}.
 

 Next, we come to the assumptions on the true parameter $\beta^*$
 \begin{assumption}[Assumptions on the true parameter]
\label{assumptions: arm-separation}
We assume the followings:
\vspace{-5pt}
\begin{enumerate}[label = (\alph*)]
\itemsep0em
    \item \label{item: beta-l1} Sparsity and Soft-sparsity: There exist positive constants $s^*\in \bbN$ and $\bmax \in \bbR^+$  such that $\norm{\beta^*}_0 = s^*$ and $\norm{\beta^*}_1 \leq \bmax$.
    
    \item \label{item: margin-cond} Margin condition: There exists positive constants $\Delta_*, A$ and $\omega\in [0, \infty]$, such that for $h \in \left[ A \sqrt{\log(d)/ T}, \Delta_*\right]$ and for all $t\in [T]$,
    \[ \pr\left( x_{a^*_t}(t)^\top \beta^* \leq \max_{i \neq a^*_t} x_{i}(t)^\top \beta^* +h\right) \leq  \left(\frac{h}{\Delta^*}\right)^\omega.\]
\end{enumerate}
\end{assumption}
The first part of the assumption requires boundedness of the true parameter $\beta^*$ to make the final regret bound scale free. Such an assumption is also standard in bandit literature \citep{bastani2020online, abbasi2011improved}.

The second part of the assumption imposes a margin condition on the arm distributions. Essentially, this assumption controls the probability of the optimal arm falling into $h$-neighborhood of the sub-optimal arms. As $\omega$ increases, the margin condition becomes stronger as the sub-optimal arms are less likely to fall close to the optimal arms. As a result, it becomes easier for any bandit policy to distinguish the optimal arm. As an illustration, consider the two extreme cases $\omega = \infty$ and $\omega = 0$. The $\omega = \infty$ case tells that there is a deterministic gap between rewards corresponding to the optimal arm and sub-optimal arms. This is the same as the ``gap assumption'' in \cite{abbasi2011improved}. Thus, quite evidently it is easy for any bandit policy to recognize the optimal arm. This phenomenon is reflected in the regret bound of Theorem 5 in \cite{abbasi2011improved}, where the regret depends on the time horizon $T$ only though poly-logarithmic terms. In contrast, $\omega = 0$ corresponds to the case when there is no apriori information about the separation between the arms, and as a consequence, we pay the price in regret bound by a $\sqrt{T}$ term \citep{hao2021information, agrawal2013thompson, chu2011contextual}. 

The margin condition with $\omega =1$ has been assumed in \cite{goldenshluger2013linear, bastani2020online, wang2018minimax} and will be satisfied when the density of $x_i(t)^\top \beta^*$ is uniformly bounded for all $i \in [K]$. 
\cite{li2021regret} also discusses an example where the margin condition holds for different values of $\omega$. 

The final assumption is on the noise variables:
\begin{assumption}[Assumption on Noise]\label{assumptions: noise}
We assume that the random variables $\{\epsilon(t)\}_{t\in[T]}$ are independent and also independent of the other processes and each one is $\sigma-$Sub-Gaussian, i.e.,  $\bbE [e^{a \epsilon(t)}] \leq e^{\sigma^2 a^2/2}$  for all $ t \in [T]$ and $a\in \bbR$. 
\end{assumption}
Various families of distribution satisfy such a requirement, including normal distribution and bounded distributions, which are commonly chosen noise distributions. Note that such a requirement automatically implies that for every $t\in[T]$, $\bbE[\epsilon(t)] = 0$ and $\text{Var}[\epsilon(t)]\leq \sigma^2$.

\subsection{Thompson Sampling and Prior}
\label{subsec:prior}
We discuss the basics of Thompson sampling and introduce the specific structure of the prior that we use and analyze. Typically, we place a prior $\Pi$ on the unknown parameter ($\beta$ in our case) along with a \emph{specified likelihood model} on the data, and do the following: while taking action, we draw a sample from the posterior distribution of the parameter given the data and use that as the proxy for the unknown parameter value, hence in our case at time $t$, we draw a sample $\hat{\beta}_t \sim \Pi(\beta|\cH_{t-1})$ and choose $a_t = \argmax_i\ x_i(t)^\top \hat{\beta}_t$ as the action. While simple enough to describe, Thompson sampling has been difficult to analyze theoretically, particularly because of the complex dependence between the observations due to the bandit structure. The choice of prior plays a crucial role, as we shall see, in providing the correct exploration-exploitation trade-off. In the high-dimensional sparse case that we are dealing with, this choice is specifically important since we do not wish to have a linear dependence on the dimension $d$ in our regret bound - which would be incurred if we use the normal prior-likelihood setup of \cite{agrawal2013thompson}, which analyzes Thompson sampling in contextual bandits.

While there is a rich literature on Bayesian priors for high dimensional regression, including horseshoe priors and slab-and-spike priors among others, we shall be using the complexity prior introduced in \cite{castillo2015bayesian}. Specifically, we consider a prior $\Pi$ on $\beta$ that first selects a dimension $s$ from a prior $\pi_d$ on the set $[d]$, next a random subset $S\subset [d]$ of size $|S|=s$ and finally, given $S$, a set of nonzero values $\beta_S:=\{\beta_i: i\in S\}$ from a prior density $g_S$ on $\bbR^S$. Formally, the prior on $(S,\beta)$ can be written as
\begin{equation}
\label{eq: prior}
    (S,\beta) \mapsto \pi_d(|S|)\frac{1}{\binom{d}{|S|}} g_S(\beta_S)\delta_0(\beta_{S^c}),
\end{equation}
where the term $\delta_0(\beta_{S^c})$ refers to coordinates $\beta_{S^c}$ being set to 0. Moreover, we choose $g_S$ as a product of Laplace densities on $\bbR$ with parameter $\lambda/\sigma$, i.e., $\beta_i \mapsto (2\sigma)^{-1}\lambda \exp\left(-\lambda|\beta_i|/\sigma\right)$ for all $i \in S$. Note that, here we assume that the noise level $\sigma$ is known. In practice, one can add another level of hierarchy by setting a prior on $\sigma$ but in  this paper we do not pursue that direction.

The prior $\pi_d$ plays the role of expressing the sparsity of the parameter. This is in contrast to other priors like product of independent Laplace densities over the coordinates (typically known as Bayesian Lasso), where the Laplace parameter plays the role of shrinking the coefficients towards 0. However, in our case, the scale parameter $\lambda$ of the Laplace does not have this role and we assume that during the $t$th round we use $\lambda \in [(5/3)\overline{\lambda}_t , 2\overline{\lambda}_t ]$, where $\overline{\lambda}_t \asymp \sqrt{t \log d }$,  which is the usual order of the regularization parameter used in the LASSO. 


The choice of the prior $\pi_d$ is very critical; it should down weight big models but at the same time give enough mass to the true model. Following \cite{castillo2015bayesian}, we assume that there are constants $A_1, A_2, A_3, A_4>0$ such that $\forall \, s\in [d]$
\begin{equation}
\label{eq: sparsity-level prior bound}
    A_1 d^{-A_3} \pi_d(s-1) \leq \pi_d(s) \leq A_2 d^{-A_4} \pi_d(s-1).
\end{equation}
Complexity priors of the form $\pi_d(s) \propto c^{-s} d^{-as}$ for constants $a,c$ satisfy the above requirement. Moreover, slab and spike priors of the form $(1-r)\delta_0 + r \Lap(\lambda/\sigma)$ independently over the coordinates satisfy the requirement with 
hyperprior on $r$ being $\Beta(1, d^u)$.

Finally, we specify the data likelihood that is crucial for the TS algorithm. At each time point $t\in [T]$, given the observations coming from model \eqref{eq: base model}, we model the $\{\epsilon(\tau)\}_{\tau\leq t}$ as i.i.d. $\sfN(0,\sigma^2)$. We emphasize that this Gaussian assumption is \emph{only required for likelihood modeling} and our main results hold under any true error distribution satisfying Assumption \ref{assumptions: noise}. 


\section{Main Results}
\label{sec:main_results}

\subsection{Posterior contraction}
Now, we present an informal version of the main posterior contraction result  for the estimation of $\beta$. A more detailed version of the result with exact rates, along with the measure theoretic details, is in Appendix \ref{sec: proof posterior contraction}.

\begin{theorem}[Informal]
    \label{thm: informal posterior contraction}
  Write $\br_t = (r(1), \ldots, r(t))^\top, $ and let the Assumption \ref{assumptions: contexts}--\ref{assumptions: noise} hold  with $C = \Theta(\phi_u \vartheta^2 \xi K \log K)$, and $K \geq 2, d\geq T$. With  $\lambda \asymp \xmax (t \log d)^{1/2}$ and $\varepsilon_{t,d,s}=  s^*\{(\log d + \log t)/t\}\}^{1/2}$, the following holds as $t\to \infty$: 
 
\vspace{-4mm}
\begin{align*}
&
 \bbE_{\br_t} \Pi \left(\norm{\beta - \beta^*}_1 \gtrsim \sigma \varepsilon_{t,d,s}\; \bigg \vert\; \br_t, X_t\right) \overset{a.s.}{\to} 0.
\end{align*}
\end{theorem}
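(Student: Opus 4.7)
The plan is to extend the posterior contraction machinery of \cite{castillo2015bayesian} --- developed for fixed design and i.i.d.\ Gaussian noise --- to the random, adaptively collected design of the bandit setting. Following the general Ghosal--van der Vaart template, the proof has three ingredients: (a) a lower bound on the prior mass in a neighborhood of $\beta^*$, (b) an evidence lower bound for the marginal likelihood, and (c) a test-based upper bound on the posterior numerator over the $\ell_1$-ball complement of $\beta^*$.

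\textbf{Step 1 (uniform sparse-eigenvalue / compatibility for $X_t$).} Because the row $x_{a_\tau}(\tau)$ is drawn from $\cP_{a_\tau}$ with $a_\tau$ measurable w.r.t.\ $\cF_\tau$, the Gram matrix $X_t^\top X_t / t$ is a sum of centered martingale increments. Assumption~\ref{assumptions: contexts}\ref{item: context_subgaussian} supplies sub-Gaussian tails, while Assumption~\ref{assumptions: contexts}\ref{item: anti-concentration} guarantees that $\bbE_{x\sim \cP_i}\innerprod{x, u}^2$ is bounded away from zero uniformly in $i$ and over sparse $u \in \bbS^{d-1}$. Combining a covering of sparse supports of size at most $Cs^*$ with a matrix-Freedman/Bernstein-type martingale inequality should yield $\phi_{\min}(Cs^*; X_t^\top X_t / t) \gtrsim 1/\xi$ with probability tending to $1$ as $t \to \infty$, together with the dual upper bound from Assumption~\ref{assumptions: contexts}\ref{item: sparse_eigenvalue}.

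\textbf{Step 2 (prior mass and evidence lower bound).} The geometric decay in \eqref{eq: sparsity-level prior bound} combined with a Laplace-slab lower bound of order $(\lambda/\sigma)^{s^*}\exp(-\lambda \bmax / \sigma)$ on a small $\ell_1$-ball around $\beta^*_{S^*}$ gives $\Pi(\norm{\beta - \beta^*}_1 \leq \varepsilon_{t,d,s}) \gtrsim \exp\{-C_1 s^*(\log d + \log t)\}$ after plugging in $\lambda \asymp \xmax\sqrt{t\log d}$ and Assumption~\ref{assumptions: arm-separation}\ref{item: beta-l1}. Since the \emph{working} likelihood is Gaussian, the KL divergence between $p_\beta$ and $p_{\beta^*}$ along the bandit path is proportional to $\norm{X_t(\beta - \beta^*)}_2^2 / \sigma^2$, which on the event of Step 1 is $\lesssim t\norm{\beta-\beta^*}_2^2$. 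The Ghosal--van der Vaart evidence lemma then produces a denominator lower bound on $\int e^{\ell_t(\beta) - \ell_t(\beta^*)}\, d\Pi(\beta)$ with probability tending to $1$. Handling the \emph{true} sub-Gaussian noise (rather than Gaussian, for which the likelihood is specified) reduces to controlling the score $\sum_{\tau \leq t}\epsilon(\tau) x_{a_\tau}(\tau)^\top(\beta-\beta^*)$, which is a sub-Gaussian martingale difference sum by Assumption~\ref{assumptions: noise} and thus admits a Freedman-type concentration bound.

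\textbf{Step 3 (numerator upper bound).} Split $\{\beta : \norm{\beta - \beta^*}_1 > M\sigma\varepsilon_{t,d,s}\}$ into the large-support piece $\{|S_\beta| > \bar{s}\}$ for some $\bar{s} = \Theta(s^*)$ --- handled directly by the geometric tail of $\pi_d$ from \eqref{eq: sparsity-level prior bound} --- and a bounded-support piece on which Step 1 applies. On the latter, the Gaussian log-likelihood contrast is bounded above by $-\phi_{\min}t\norm{\beta-\beta^*}_2^2/(2\sigma^2)$ plus the noise score from Step 2, yielding an exponentially small posterior numerator. Combining with the denominator lower bound produces the rate $\varepsilon_{t,d,s} \asymp s^*\sqrt{(\log d + \log t)/t}$, and upgrading the resulting in-expectation statement to the almost-sure conclusion follows by a Borel--Cantelli argument once the tail probabilities are summable over a dyadic subsequence in $t$.

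\textbf{Main obstacle.} The hardest step is Step 1 --- the uniform-in-$t$ sparse-eigenvalue bound for the \emph{bandit-dependent} design. Unlike \cite{castillo2015bayesian}, where the design is fixed or drawn i.i.d., here the row distribution depends on all past randomness through the Thompson sampling policy, so the rows form a martingale difference sequence (after centering) rather than an independent one. Assumption~\ref{assumptions: contexts}\ref{item: anti-concentration} is precisely what prevents the policy from collapsing the Gram matrix onto a lower-dimensional sparse subspace, but lifting this pointwise-in-direction anti-concentration to a uniform minimum-eigenvalue statement on the sparse cone --- in the face of an adaptive mixture over arms --- requires a careful matrix martingale concentration combined with a union bound over $\binom{d}{Cs^*}$ supports and a peeling argument over $t$. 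A secondary subtlety is the likelihood misspecification in Step 2: the evidence bound has to be driven by the martingale structure of $\{\epsilon(\tau)\}$ under Assumption~\ref{assumptions: noise}, not by any i.i.d.\ Gaussian structure, since the Gaussian is used only for posterior construction.
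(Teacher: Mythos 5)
Your overall architecture (design-matrix control, denominator lower bound, numerator bound, a.s. upgrade) is sensible, but Step 1 --- which you yourself flag as the crux --- contains a genuine gap, and the justification you give does not close it. You argue that Assumption~\ref{assumptions: contexts}\ref{item: anti-concentration} bounds $\bbE_{x\sim \cP_i}\innerprod{x,u}^2$ away from zero per arm and that a matrix-Freedman inequality for the centered martingale increments then yields $\phi_{\min}(Cs^*;\widehat{\Sigma}_t)\gtrsim 1/\xi$. The trouble is that the row entering $\widehat{\Sigma}_t$ is not a draw from any fixed $\cP_i$: conditionally on the past, $x_{a_\tau}(\tau)$ is a \emph{selection-biased} functional of all $K$ fresh contexts (the arm is the argmax of $x_i(\tau)^\top\tilde{\beta}_\tau$), so the conditional second moment $\bbE[(v^\top x_{a_\tau}(\tau))^2\mid \cF_\tau]$ is not lower bounded by the per-arm second moments --- the selection event can correlate with small values of $(v^\top x_{a_\tau}(\tau))^2$. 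A Freedman bound centered at this conditional mean is vacuous until you first show the conditional mean itself is bounded below, and your proposal never supplies that step. The paper closes exactly this hole with an elementary device: $v^\top\widehat{\Sigma}_t v \geq t^{-1}\sum_{\tau\le t}\min_{i\in[K]}(v^\top x_i(\tau))^2$, and the minima $Z_{\tau,v}=\min_i (v^\top x_i(\tau))^2$ are policy-free, hence genuinely i.i.d.\ across rounds; the anti-concentration assumption (as a tail bound, not a moment bound) plus a union bound over arms gives $\Pr(Z_{\tau,v}\le h)\le K\xi h$ and therefore $\bbE Z_{\tau,v}\ge 1/(2K\xi)$, after which ordinary scalar Bernstein and a greedy $\varepsilon$-net over sparse supports finish the job --- no martingale matrix concentration or peeling in $t$ is needed. (If you insist on the martingale route, the same min-over-arms observation shows the \emph{selected} row inherits the bound $\Pr((v^\top x_{a_\tau}(\tau))^2\le h\mid\cF_\tau)\le K\xi h$, which is the missing conditional-mean lower bound; but as written your step would fail.)

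Two smaller points of divergence from the paper, neither fatal. For the denominator you invoke a Ghosal--van der Vaart KL-neighborhood evidence lemma under misspecification; the paper instead uses Lemma 2 of Castillo et al., an explicit, essentially pointwise-in-data lower bound on $\int \cL_{t,\beta,\beta^*}\,d\Pi$ that exploits the Laplace slab, and concentrates all stochastic control of the true sub-Gaussian noise into the single event $\{\max_j|\epsilon_t^\top X_t^{(j)}|\le \overline{\lambda}_t\}$, handled by a martingale Bernstein inequality --- this avoids KL/testing machinery for a misspecified non-i.i.d.\ model, which is delicate in the form you sketch. For the numerator, the paper likewise uses no tests: it bounds posterior mass directly in three stages (support size, then $\norm{X_t(\beta-\beta^*)}_2$, then $\ell_1$ via the compatibility number obtained from the sparse eigenvalue bound through a transfer lemma), whereas your split into large/bounded support is compatible with this but would still need the compatibility step to convert $\ell_2$-type prediction control into the claimed $\ell_1$ rate $s^*\sqrt{(\log d+\log t)/t}$.
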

The above result is similar to Theorem 3 in \cite{castillo2015bayesian} under classical linear regression setup with i.i.d. observations and Gaussian noise. However, we generalize their result under bandit setup and sub-Gaussian noise by carefully controlling the correlation between noise and observed contexts, which is crucial for our regret analysis.

\subsection{Algorithm  and regret bound}
In this section we introduce the Thomson sampling algorithm for high-dimensional contextual bandit, a pseudo-code for which is provided below in Algorithm \ref{alg: TS}. Similar to the Thompson sampling algorithm in \cite{agrawal2013thompson}, in the $t$th round Algorithm \ref{alg: TS} sets the a specific prior on $\beta$ and updates it sequentially based on the observed rewards and contexts. In particular, it chooses the prior described in \eqref{eq: prior} with an appropriate choice of round-specific prior scaling $\lambda_t$ and updates the posterior using the observed rewards and contexts until $(t-1)$th round. Then a sample is generated from the posterior and an arm $a_t$ is chosen greedily based on the generated sample. 

Now, we show that the Thompson sampling algorithm achieves desirable regret upper bound. 

\begin{theorem}
\label{thm: regret LSB}
Let the Assumption \ref{assumptions: contexts}--\ref{assumptions: noise} hold  with $C = \Theta(\phi_u \vartheta^2 \xi K \log K)$, and $K \geq 2, d\geq T$. Define the quantity $\kappa(\xi, \vartheta, K) := \min \{(4 c_3 K \xi \vartheta^2)^{-1}, 1/2\}$ where $c_3$ is a universal positive constant. 
Also, set the prior scaling $\lambda_t$ as follows: 
\[
(5/3)\overline{\lambda}_t\leq \lambda_t \leq 2 \overline{\lambda}_t, \quad \overline{\lambda}_t = \xmax \sqrt{2t(\log d + \log t)}.
\]
Then there exists a universal constant $C_0>0$ such that we have the following regret bound for Algorithm \ref{alg: TS}:
\begin{align*}
&\bbE\{R(T)\} \lesssim I_b + I_\omega,
\end{align*}

where,
\[
I_b = \left\{\frac{\bmax \xmax \phi_u \vartheta^2 \xi (K \log K)}{ \min \left\{ \kappa^2(\xi, \vartheta, K), \log K  \right\}}\right\} s^* \log(Kd),
\]
\[
  I_\omega = 
  \begin{cases}
        \Phi^{1+\omega} \left( \frac{ s^{* 1+ \omega} (\log d)^{\frac{1+ \omega}{2}} T^{\frac{1 - \omega}{2}}}{\Delta_*^\omega}\right), & \text{for}\; \omega \in [0,1),\\
     
     \Phi^{2} \left(\frac{ s^{* 2} [\log d  + \log T]\log T}{\Delta_*}\right),  & \text{for}\; \omega =1,\\
     
     \frac{\Phi^2 }{(\omega - 1)} \left(\frac{s^{*2 } [\log d + \log T]}{\Delta_*}\right), & \text{for}\; \omega \in (1, \infty)\\
     
\Phi^2 \left( \frac{s^{*2}  [\log d + \log T]}{\Delta_*} \right), & \text{for}\; \omega = \infty,
     
     \end{cases}
\]
 and $\Phi = \sigma \xmax^2   \xi K \left(2 + 40A_4^{-1} +  C_0  K \xi \xmax^2 A_4^{-1}\right)  $ .
\end{theorem}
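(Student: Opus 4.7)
The plan is to decompose the cumulative regret into a burn-in phase of length $T_0 \asymp s^* \log(Kd)$, during which the posterior has not yet contracted and we pay the uniform per-round bound $2 \xmax \bmax$ (from Assumption~\ref{assumptions: contexts}\ref{item: context_bound} and Assumption~\ref{assumptions: arm-separation}\ref{item: beta-l1}), and a post-burn-in phase in which Theorem~\ref{thm: informal posterior contraction} concentrates $\hat\beta_t$ near $\beta^*$. The burn-in length is dictated by the sample size needed for the sparse Riesz condition on $X_t^\top X_t$ to hold uniformly over $C s^*$-sparse directions under adaptive sampling; tracking constants through a Bernstein-type concentration of $\sum_{\tau \leq t} x_{a_\tau}(\tau) x_{a_\tau}(\tau)^\top$ yields the factors $\phi_u, \vartheta^2, \xi, K \log K$ and $\kappa(\xi, \vartheta, K)$ that appear in $I_b$.

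For $t > T_0$, introduce the good event $\cE_t := \{\norm{\hat\beta_t - \beta^*}_1 \leq \varepsilon_t\}$ with $\varepsilon_t \asymp \sigma s^* \sqrt{(\log d + \log t)/t}$, which carries posterior mass tending to one in $\bbE_{\br_t}$ by Theorem~\ref{thm: informal posterior contraction}. Using the greedy selection rule $a_t = \argmax_i x_i(t)^\top \hat\beta_t$ of Algorithm~\ref{alg: TS}, the standard decomposition
\[
\Delta_{a_t}(t) = (x_{a_t^*}(t) - x_{a_t}(t))^\top (\beta^* - \hat\beta_t) + \bigl(x_{a_t^*}(t) - x_{a_t}(t)\bigr)^\top \hat\beta_t
\]
has a non-positive second summand, so H\"older's inequality with $\norm{x_i(t)}_\infty \leq \xmax$ gives $\Delta_{a_t}(t) \leq 2 \xmax \norm{\hat\beta_t - \beta^*}_1 \leq 2 \xmax \varepsilon_t$ on $\cE_t$. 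On $\cE_t^c$ I would fall back to $2 \xmax \bmax$; summing $\sum_{t > T_0} 2 \xmax \bmax \, \bbP(\cE_t^c)$ against the exponential posterior tail absorbs into the $I_b$ term.

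The $\omega$-dependent refinement uses the margin condition. For a threshold $h_t$, a perturbation argument shows that whenever the sub-optimality gap $G_t := x_{a_t^*}(t)^\top \beta^* - \max_{i \neq a_t^*} x_i(t)^\top \beta^*$ exceeds $h_t$ and one is on $\cE_t$ with $2\xmax \varepsilon_t \leq h_t$, the TS choice coincides with $a_t^*$ and the instantaneous regret is zero; otherwise the regret is at most $2\xmax \varepsilon_t$. Thus
\[
\bbE[\Delta_{a_t}(t) \ind_{\cE_t}] \leq 2 \xmax \varepsilon_t \cdot \bbP(G_t \leq h_t) \leq 2 \xmax \varepsilon_t \cdot (h_t / \Delta_*)^\omega,
\]
by Assumption~\ref{assumptions: arm-separation}\ref{item: margin-cond}. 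Setting $h_t \asymp \xmax \varepsilon_t$ yields a per-round bound of order $(\xmax \varepsilon_t)^{1+\omega}/\Delta_*^\omega$, and summing $\sum_{t=1}^T \varepsilon_t^{1+\omega} \asymp (s^*)^{1+\omega}(\log d)^{(1+\omega)/2} \sum_t t^{-(1+\omega)/2}$ produces the four regime cases: polynomial $T^{(1-\omega)/2}$ for $\omega \in [0,1)$, an extra $\log T$ at $\omega = 1$, a convergent tail with pre-factor $(\omega - 1)^{-1}$ for $\omega \in (1,\infty)$, and the deterministic-gap bound for $\omega = \infty$. The composite constant $\Phi = \sigma \xmax^2 \xi K(2 + 40 A_4^{-1} + C_0 K \xi \xmax^2 A_4^{-1})$ emerges by consolidating the posterior-contraction prefactors with the $\xmax$ factors from H\"older and the $\xi$, $A_4^{-1}$ contributions inherited from the sparsity prior~\eqref{eq: sparsity-level prior bound}.

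The main obstacle is propagating Theorem~\ref{thm: informal posterior contraction} rigorously through the adaptive filtration $\cF_t$. That theorem is an in-expectation statement about $\bbE_{\br_t}\Pi(\cdot \mid \br_t, X_t)$ for a fixed round, whereas our algorithm draws $\hat\beta_t \sim \Pi(\cdot \mid \cH_{t-1})$ and the design matrix $X_t$ depends on every prior draw. Converting this into a high-probability per-round statement uniformly in $t$, while simultaneously certifying that the anti-concentration (Assumption~\ref{assumptions: contexts}\ref{item: anti-concentration}) and sparse-eigenvalue (Assumption~\ref{assumptions: contexts}\ref{item: sparse_eigenvalue}) conditions transfer from the population $\Sigma_i$ to the realized $X_t^\top X_t$ despite adaptive sampling, is the delicate core of the argument and fixes the precise sparsity inflation constant $C = \Theta(\phi_u \vartheta^2 \xi K \log K)$ used throughout.
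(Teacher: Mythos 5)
Your overall skeleton matches the paper's proof: a burn-in phase bounded by $2\xmax\bmax T_0$ with $T_0$ dictated by when the adaptive design satisfies a sparse-eigenvalue/compatibility condition, a good event $\cE_t$ driven by posterior contraction, the H\"older bound $\Delta_{a_t}(t)\le 2\xmax\norm{\tilde\beta_t-\beta^*}_1$ using that the second term in your decomposition is non-positive, the perturbation argument showing the optimal arm is selected when the gap exceeds $h_t\asymp\xmax\varepsilon_t$, and the margin condition to convert this into the $\omega$-dependent sum. However, there are two genuine gaps. First, your treatment of $\omega\in(1,\infty]$ does not produce the stated bound. Summing $\varepsilon_t(h_t/\Delta_*)^\omega\asymp \varepsilon_t^{1+\omega}/\Delta_*^\omega$ from $T_0$ onward gives a quantity scaling like $\Delta_*^{-\omega}$ with $(s^*)^{1+\omega}(\log d)^{(1+\omega)/2}T_0^{-(\omega-1)/2}$, which does not match the claimed $\Phi^2 s^{*2}(\log d+\log T)/\Delta_*$; moreover the margin condition is only informative for $h\le\Delta_*$, and for $\omega=\infty$ the expression $(h_t/\Delta_*)^\omega$ is vacuous. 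The paper instead bounds the per-round factor by $\min\{1,(3\delta_{t-1}/\Delta_*)^\omega\}$ and splits the horizon at $T_1\asymp \Phi^2 s^{*2}\log d/\Delta_*^2$, the time after which $3\delta_{t-1}\le\Delta_*$: before $T_1$ one sums $\delta_{t-1}$ alone (giving the $s^{*2}(\log d+\log T)/\Delta_*$ term), and after $T_1$ the tail integral evaluated at $T_1$ produces the cancellation yielding the $(\omega-1)^{-1}$ prefactor and the $\Delta_*^{-1}$ (not $\Delta_*^{-\omega}$) dependence; for $\omega=\infty$ the tail is exactly zero because the gap is deterministic once $3\delta_{t-1}\le\Delta_*$. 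Without this split your cases $\omega>1$ and $\omega=\infty$ do not follow.

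Second, the step you flag as "the delicate core" is not an obstacle you can defer: it constitutes the bulk of the proof and fixes the constants $I_b$ and $\Phi$ you quote. Concretely, the paper (a) proves an empirical sparse Riesz condition for $\widehat\Sigma_t=X_t^\top X_t/t$ under adaptive sampling by sandwiching $v^\top\widehat\Sigma_t v$ between $\min_i\{v^\top x_i(\tau)\}^2$ and $\max_i\{v^\top x_i(\tau)\}^2$, which are i.i.d.\ sub-exponential across $\tau$ for fixed $v$, followed by a greedy $\varepsilon$-net over $Cs^*$-sparse directions (Proposition \ref{prop: empirical SRC}); (b) converts this to a compatibility condition via the transfer lemma (Lemma \ref{lemma: transfer lemma}, Proposition \ref{prop: restricted eigenvalue empirical}); (c) controls $\max_j\abs{\epsilon_t^\top X_t^{(j)}}$ by a martingale Bernstein bound (Proposition \ref{prop: error-design correlation}), which is what makes the choice $\overline\lambda_t=\xmax\sqrt{2t(\log d+\log t)}$ work; and (d) re-derives the Castillo--van der Vaart posterior bounds conditionally on these events and under sub-Gaussian (possibly non-Gaussian) noise, i.e.\ Theorem \ref{thm: posterior contraction}, rather than citing the informal i.i.d.\ statement. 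Your proposal correctly identifies all of these ingredients by name but supplies none of them, and in particular does not explain how the in-expectation contraction statement becomes the bound $\pr(\cE_t^c)\lesssim d^{-s^*}+t^{-1}+\exp(-ct)$ that your "absorb into $I_b$" step silently requires.
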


\paragraph{Discussion on the above result:}

The regret bound provided by Theorem \ref{thm: regret LSB} shows that the regret of the algorithm grows poly-logarithmically in $d$, i.e., $\bbE\{R(T)\} =   O((\log d)^{\frac{1+\omega}{2}})$, when $\omega \in [0,1)$; logarithmically in $d$, i.e., $O(\log d)$ when $\omega \in [1, \infty]$. {Meanwhile, the expected cumulative regret depends polynomially in $T$, i.e., $\bbE\{R(T)\} = O(T^{\frac{1-\omega}{2} })$ when $\omega \in [0,1)$; ploy-logarithmically in $T$; i.e., $\bbE\{R(T)\} = O((\log T)^2)$, when $\omega =1 $.  In $\omega \in (1, \infty]$ regime, the expected cumulative regret depends poly-logarithmically in both the time horizon $T$ and ambient dimension $d$. As $T\ll d$, the expected regret ultimately scales as $O(\log d)$. Comparing our upper bound result with minimax regret lower bound established in Theorem 1 of \cite{li2021regret},  it follows that our algorithm enjoys optimal dependence on both ambient dimension $d$ and time-horizon $T$ when $\omega \in [0,1)$. In $\omega =1$ region, the regret upper bound in the above theorem is optimal up to a $O(\log T)$ term. To the best of our knowledge, there does not exist any result on  minimax lower bound in the regime $\omega > 1$ in the high-dimensional linear contextual bandit literature.} It is worth mentioning that this is an upper bound on the expected (frequentist) regret, as compared to Bayesian regret which is often considered for Thompson sampling based algorithms..

\begin{algorithm}[ht!]
\SetAlgoLined
 Set  $\his_0 = \emptyset$\;
 \For{$t=1,\cdots, T$}{
  \If{$t \leq 1$}{Choose action $a_t$ uniformly over $[K]$\;}
  \Else{
  Set $\overline{\lambda}_t = \xmax \sqrt{2t(\log d + \log t)}$ and choose $\lambda_t \in (5 \overline{\lambda}_t/3, 2 \overline{\lambda}_t)$\; 
  Generate sample $\tilde{\beta}_t\sim {\Pi}(\cdot \mid \cH_{t-1})$ with prior $\Pi$ in \eqref{eq: prior}-\eqref{eq: sparsity-level prior bound}, $\lambda = \lambda_t$ and Gaussian likelihood\;
  Play arm: $a_t = \argmax_{i \in {[K]}} \;x_{i}(t)^\top \tilde{\beta}_t$\;
  }
  Observe reward $r_{a_t}(t)$\;
  Update $\his_{t} \leftarrow \his_{t-1} \cup \{(a_t, r_{a_t}(t), x_{a_t}(t))\}$.
 }
 \caption{Thompson Sampling Algorithm}
 \label{alg: TS}
\end{algorithm}

Intuitively, the initial term $I_b$ in regret upper bound in Theorem \ref{thm: regret LSB} describes the regret caused by the ``burn-in'' period of exploring the space of contexts and it does not contribute to the asymptotic regret growth. Note that we consider running Thompson sampling from the very beginning, without an explicit random exploration phase, in contrast to most of the existing algorithms; the distinction between the burn-in phase and the subsequent phase is only a construct of our theoretical analysis. Furthermore, the constant $\Delta_*$ plays the role of gap parameter which commonly appears in a problem-dependent regret bound \citep{abbasi2011improved}. Note that, for $\omega =0$, we get a problem-independent regret bound of the order $O(s^* \sqrt{T} \log d)$. The appearance of the $\sqrt{T}$, term is not surprising, as the condition $\omega = 0$ poses no prior knowledge on the arm-separability, Thus, in the worst case, the context vectors may fall into each other, making the bandit environment harder to learn. In contrast, as $\omega$ increases the optimal arm becomes more distinguishable than the sub-optimal arms and the bandit environment becomes easier to learn. As a result, the effect of the time horizon becomes less and less severe as $\omega$ increases. In particular, when $\omega \in [1, \infty]$, the time horizon does not affect the asymptotic growth of the regret bound. Finally, as we mainly focus on the case when the number of arms is very small, the quantity $\Phi$ roughly has an inflating effect of $O(1)$ on the regret bound.
\paragraph{Sketch of the proof of Theorem \ref{thm: regret LSB}}
While a self-contained and detailed proof of the above result is given in the Appendix, here we go through the main steps and ideas of the proof. The proof is broadly divided into 3 parts for clarity:
\vspace{-5pt}
\begin{enumerate}[label = (\roman*)]
\itemsep0em
    \item \label{item: empirical sparse eigen} In Section \ref{sec: proof part 1} we will first show that the estimated covariance matrix $\widehat{\Sigma}_t:= X_t^\top X_t/t$ enjoys SRC condition with high probability for sufficiently large $t$. 
    In our analysis, we carefully decouple this complex dependent structure and exploit the special temporal dependence structure of the bandit environment to establish SRC property of $\widehat{\Sigma}_t$.
    
     \item \label{item: empirical restrited eigen} Next, in Section \ref{sec: proof part 2} we will establish a compatibility condition 
     for the matrix $\widehat{\Sigma}_t$. 
     We use a Transfer Lemma (Lemma \ref{lemma: transfer lemma}) which essentially translates the uniform lower bound on $\phi_{\min}(Cs^*, \widehat{\Sigma}_t)$ to a certain compatibility number. 
    
    \item \label{item: Bayesian concentration} Finally, in Section \ref{sec: proof part 3}, under the compatibility condition we use the posterior contraction result in Theorem \ref{thm: informal posterior contraction} to give bound on the per round regret $\Delta_{a_t}(t)$. 
\end{enumerate}
\subsection{Comparison with existing literature}
The stochastic linear bandit problem was first introduced by \cite{auer2002using}, and later was subsequently studied by \cite{dani2008stochastic, chu2011contextual} and many others. Later, \cite{abbasi2011improved} and \cite{abbasi2012online} proposed OFUL algorithm for both low-dimensional and high-dimensional settings. Although there are some similarities in our model parametrization with the setting considered in \cite{abbasi2011improved} and \cite{dani2008stochastic}, there are some significant differences that need attention. To mention a few, the set of contexts considered in \cite{abbasi2011improved, dani2008stochastic} consists of infinitely many feature vectors, whereas in our setting the set of contexts consists of finitely many feature vectors coming from an underlying distribution. Moreover, in \cite{dani2008stochastic}, the set of contexts does not change over time, therefore the optimal 
arm remains the same in every round. In high dimensional bandit, a similar setting is also considered in \cite{HaoLattimore2020, hao2021information}. In contrast, in our setting, due to the randomness of the observed contexts, the optimal arm does not necessarily remain the same in every round. 

Now we shift focus to the regret bound analysis. In low-dimensional setting \cite{rusmevichientong2010linearly} proved a lower bound of $\Omega(d \sqrt{T})$ for both cumulative regret and Bayesian regret in linear bandit setting, where the set of contexts is a compact set of infinitely many feature vectors, e.g., the unit sphere $\bbS^{d-1}\subseteq \bbR^d$. Later, \cite{chu2011contextual} proposed LinUCB algorithm, which has near-optimal regret upper bound $O(\sqrt{T d \log(T \log (T)/ \delta)})$ with probability $1 - \delta$, which corresponds to $\omega = 0$ case. \cite{abbasi2011improved} proved that the expected regret of OFUL algorithm scales as $O(d\sqrt{T})$ in both low-dimensional and high-dimensional setting. In all of these works, the regret bound analysis is based on the worst-case scenario, which leads to polynomial dependence on $d$. In high-dimension, such a polynomial dependence of $d$ may lead to very sub-optimal performance of the aforementioned algorithms. Later, \cite{bastani2020online} proposed LASSO-bandit algorithm, and \cite{wang2018minimax} proposed MCP bandit algorithm which enjoys improved regret upper bound scaling as $O(\log d)$ under the margin condition $\omega =1$, but require forced sampling which could be costly in some practical settings such as medical and marketing applications.
In comparison, our method does not need any forced sampling and does not require the knowledge of the gap parameter $\omega$. On the other hand, our theoretical analysis also covers the regime $\omega \in [0, \infty]$, whereas the results of \cite{bastani2020online} and \cite{wang2018minimax} are only valid for $\omega =1$ case, for which our algorithm enjoys the same $O(\log d)$ dependence in regret bound as LASSO-bandit and MCP-bandit. 

\begin{remark}
  It is worthwhile to point out that the setup in \cite{bastani2020online} and \cite{wang2018minimax} consider different $\beta_{i}^* \in \bbR^d$ for different arms $i \in [K]$ and a single context $x(t)\in \bbR^d$ every round, which is in sharp contrast to our setting. Their formulation can be mathematically reparametrized into our formulation. In particular, for the $i$th arm we construct the new context vectors $\tilde{x}_{i}(t) = (\boldsymbol{0}, \boldsymbol{0},\ldots, x(t)^\top, \boldsymbol{0}, \ldots,\boldsymbol{0})^\top$ where the $i$th block is $x(t)$, thus $x_i(t)\in \bbR^{Kd}$. The common parameter is $\beta^* = (\beta_{1}^{*\top}, \ldots, \beta_{K}^{*\top})^\top$. However, in this case, the new contexts are highly degenerate and violate  Assumption \ref{assumptions: contexts}\ref{item: anti-concentration} and restrict us  from directly applying our result in this case.
\end{remark}

\section{Computation}
\label{sec:computation}
In this section, we discuss the computational challenges and how these are overcome by using Variational Bayes (VB). While priors as \eqref{eq: prior} have been shown to perform well, both empirically and in theory, the discrete model selection component of the prior makes it challenging to allow computation and inference on the posterior. For $\beta\in \bbR^d$, inference using the slab and spike prior requires a combinatorial search over $2^d$ possible models, which in the case of high dimension is computationally infeasible. Fast algorithms are known only in the special diagonal design case and traditional Markov Chain Monte Carlo methods have very slow mixing in such high dimensional cases. Thus, following \cite{ray2021variational} we use Variational Bayes to make computations faster. Specifically, in the sampling step of Algorithm \ref{alg: TS}, we consider the VB approximation of the posterior $\Pi (\cdot \mid \cH_{t-1})$ arising from slab and spike prior with $\Lap(\lambda/\sigma)$ slab in the mean-field family
\begin{align*}
\left\{  \bigotimes_{j=1}^d [\gamma_j \sfN(\mu_j, \sigma_j^2) + (1-\gamma_j)\delta_0]: (\mu_j, \sigma_j, \gamma_j) \in \cR\right\},
\end{align*}
where $\cR = \bbR\times \bbR^+ \times [0,1]$.
We use the \texttt{sparsevb} package \citep{clara2021sparsevb} in R to use the Coordinate Ascent Variational Inference (CAVI) algorithm proposed in \cite{ray2021variational} to obtain the VB posterior. 
This makes the Thompson sampling algorithm much faster as one can efficiently obtain samples from the VB posterior due to its structure. The details of the algorithm for the Variational Bayes Thompson Sampling (VBTS) are in the appendix (see Section \ref{sec: pseudo-code VBTS}).


\section{Numerical Experiment}
\label{sec:numerical}

In both simulations and real data experiments, we present results corresponding to $\lambda_t=1$ for all $t\in [T]$.
Recall that Theorem \ref{thm: regret LSB} suggests that in $t$th round $\lambda_t \asymp \sqrt{t \log d}$ is a reasonable choice for the exact Thompson sampling algorithm. However, in practice, we noticed that such choices of $\lambda_t$ lead to numerical instability. Some recent findings in \cite{ray2021variational} suggest that $\lambda_t$ in the order of $O(\sqrt{t \log d}/s^*)$ should be an appropriate choice, which is smaller than the predicted order of $\lambda_t$ in our main theorem.
Motivated by this, we also present the simulation results for synthetic data experiments 
 with $\lambda_t = \lambda_*\sqrt{t}$ for $\lambda_*\in \{0.2,0.3,0.4,0.5\}$ in the Appendix \ref{sec: lambda simulation}. We found the performance of VBTS to be robust with respect to the choice of the tuning parameter $\lambda_t$. 

\subsection{Synthetic data}
In this section, we illustrate the performance of the VBTS algorithm on a simulated data set. As a benchmark, we consider ESTC \citep{HaoLattimore2020}, LinUCB \citep{abbasi2011improved}, DRLasso \citep{kim2019doubly}, Lasso-L1 confidence ball algorithm \citep{li2021regret}, LinearTS \citep{agrawal2013thompson} and TS algorithm based on Bayesian Lasso \citep{park2008bayesian} (BLasso TS) to compare the performance of VBTS (Algorithm \ref{alg: VBTS}).  

\subsubsection*{Equicorrelated (EC) structure}
We set the number of arms $K = 10$ and we generate the context vectors $\{x_i(t)\}_{i =1}^K$ from multivariate $d$-dimensional Gaussian distribution $\sfN_d(\boldsymbol{0}, \Sigma)$, where $\Sigma_{ij} = \rho^{\abs{i -j} \wedge 1}$ and $\rho = 0.3$. We consider $d = 1000$ and the sparsity $s^* = 5$. We choose the set of active indices $S^*$ uniformly over all the subsets of $[d]$ of size $s^*$. Next, for each choice of $d$, we consider two types generating scheme for $\beta$: 


\vspace{-3mm}
\begin{itemize}
\itemsep0em
    \item \textbf{Setup 1:} $\{U_i\}_{i\in S^*} \overset {i.i.d.}{\sim} \text{Uniform}(0.3, 1)$ and set $\beta_j = U_j(\sum_{\ell \in S^*} U_\ell^2)^{-1/2}\ind(j \in S^*)$.
    \item  \textbf{Setup 2: } $\{Z_i\}_{i\in S^*} \overset {i.i.d.}{\sim} \sfN(0, 1)$ and set $\beta_j = Z_j (\sum_{\ell \in S^*} Z_\ell^2)^{-1/2} \ind(j \in S^*)$.
\end{itemize}
\vspace{-2mm}

We run 40 independent simulations and plot the mean cumulative regret with 95\% confidence band in Figure \ref{fig: EC, setup=1}-\ref{fig: EC, setup=2}. In all the setups, we see that VBTS outperforms its competitors by a wide margin. VBTS also enjoys superior empirical performance under the autoregressive (AR) model (see Figure \ref{fig: AR, setup=1}-\ref{fig: AR, setup=2}) with auto-correlation coefficient $0.3$ and the details of the simulations can be found in Appendix \ref{sec: simulation details-AR(1)}. 
Table \ref{tab: time table} shows the mean execution time (across Setup 1 and 2) of all TS algorithms. Among the class of TS algorithms, VBTS outperforms its other competing algorithms.

\begin{figure*}
\centering
    \begin{subfigure}{0.48\linewidth}
        \includegraphics[width=\columnwidth]{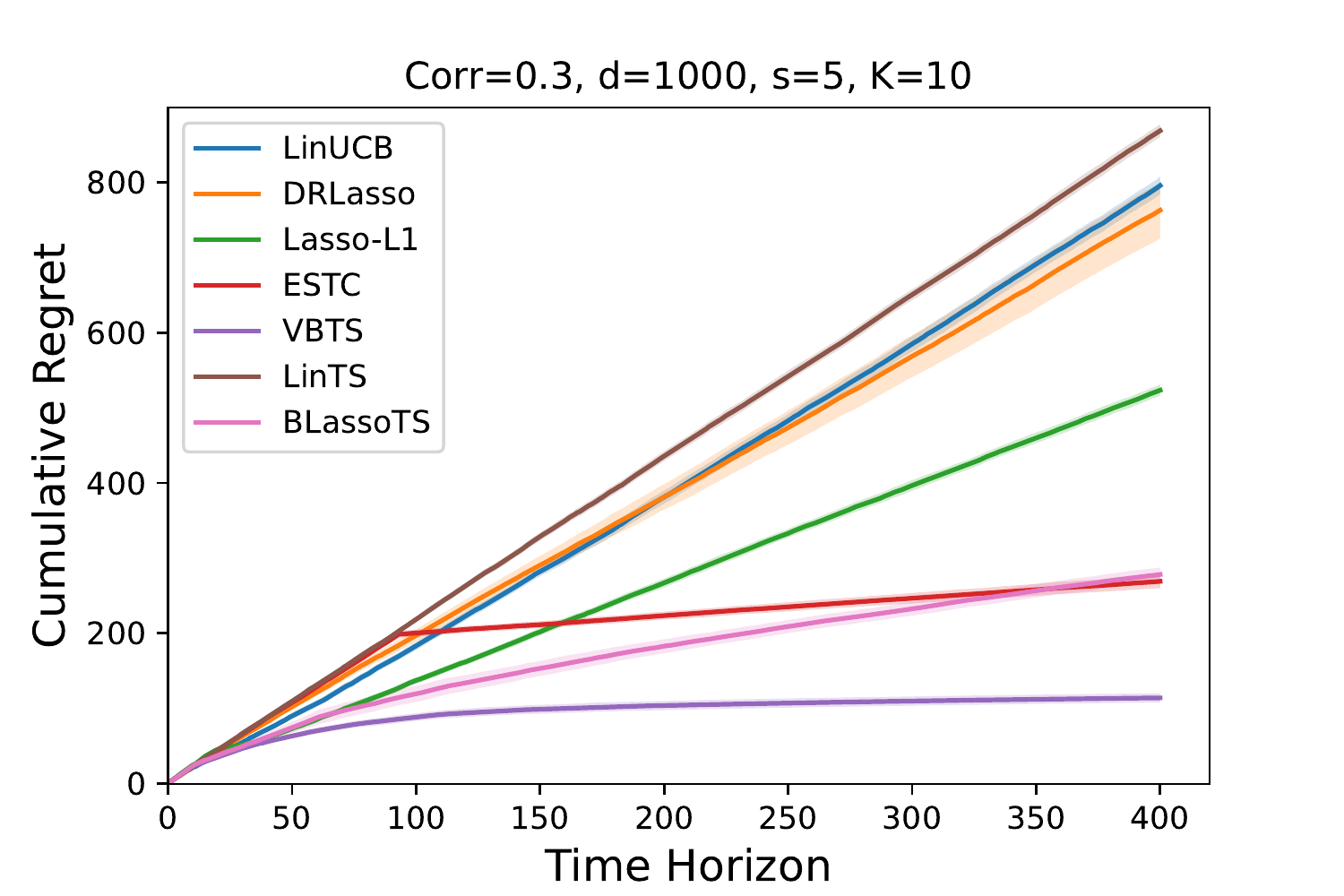}
        \caption{EC (Setup 1)}
        \label{fig: EC, setup=1}
    \end{subfigure}\hfill   
    \begin{subfigure}{0.48\linewidth}
        \includegraphics[width=\columnwidth]{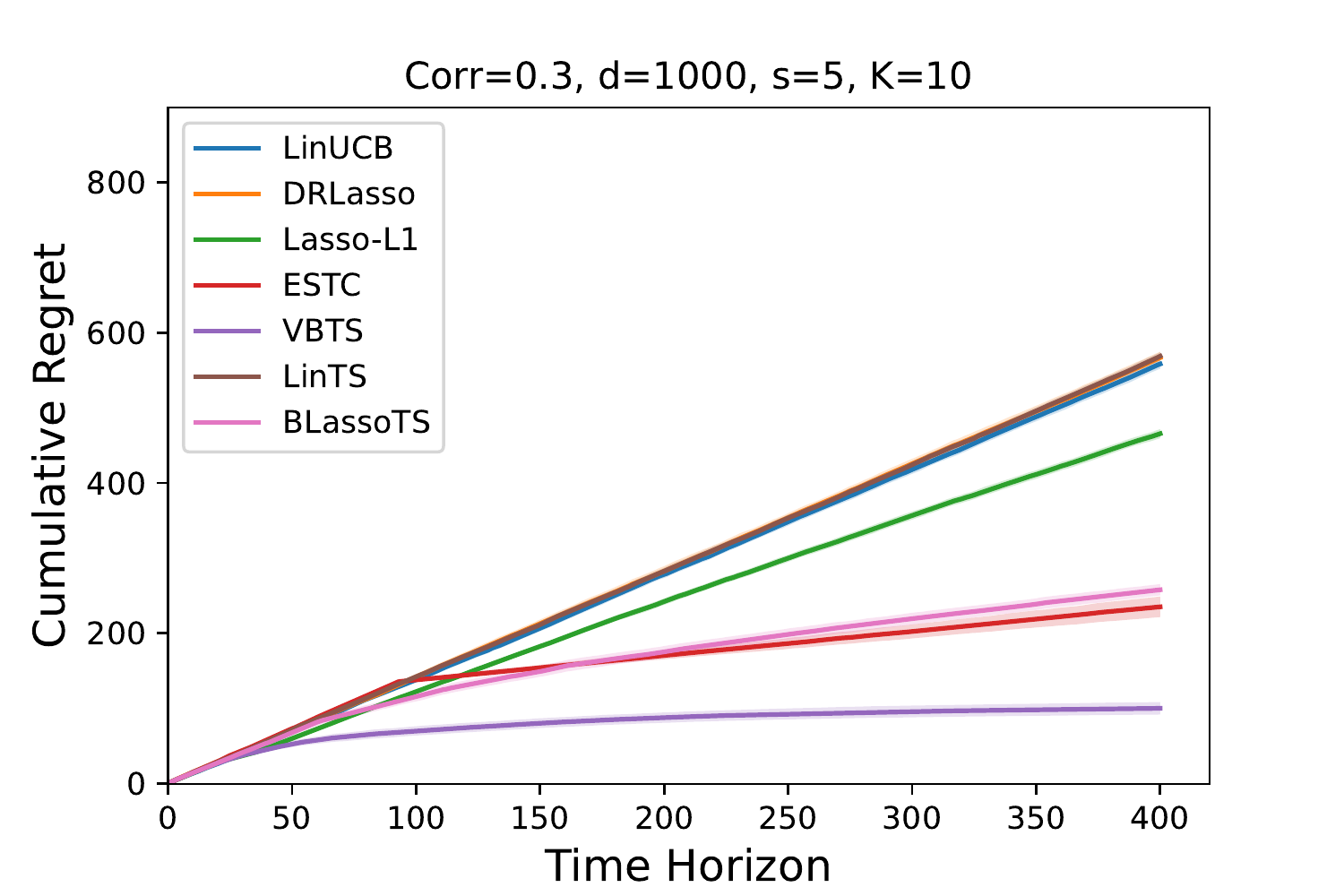}
        \caption{EC (Setup 2)}
        \label{fig: EC, setup=2}
    \end{subfigure}\hfill
    \begin{subfigure}{0.48\linewidth}
        \includegraphics[width=\columnwidth]{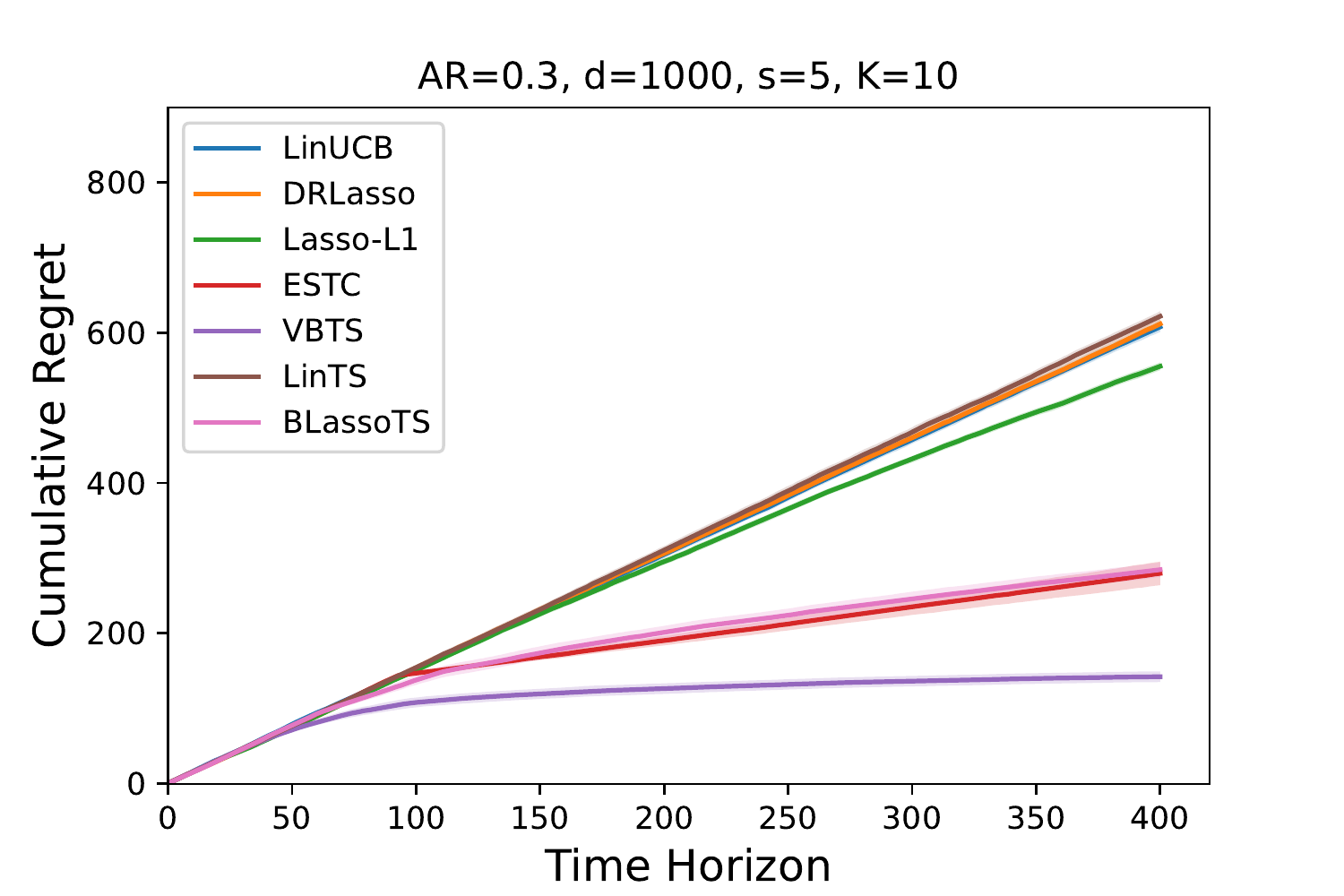}
            \caption{AR (Setup 1)}
        \label{fig: AR, setup=1}
    \end{subfigure}\hfill
    \begin{subfigure}{0.48\linewidth}
        \includegraphics[width=\columnwidth]{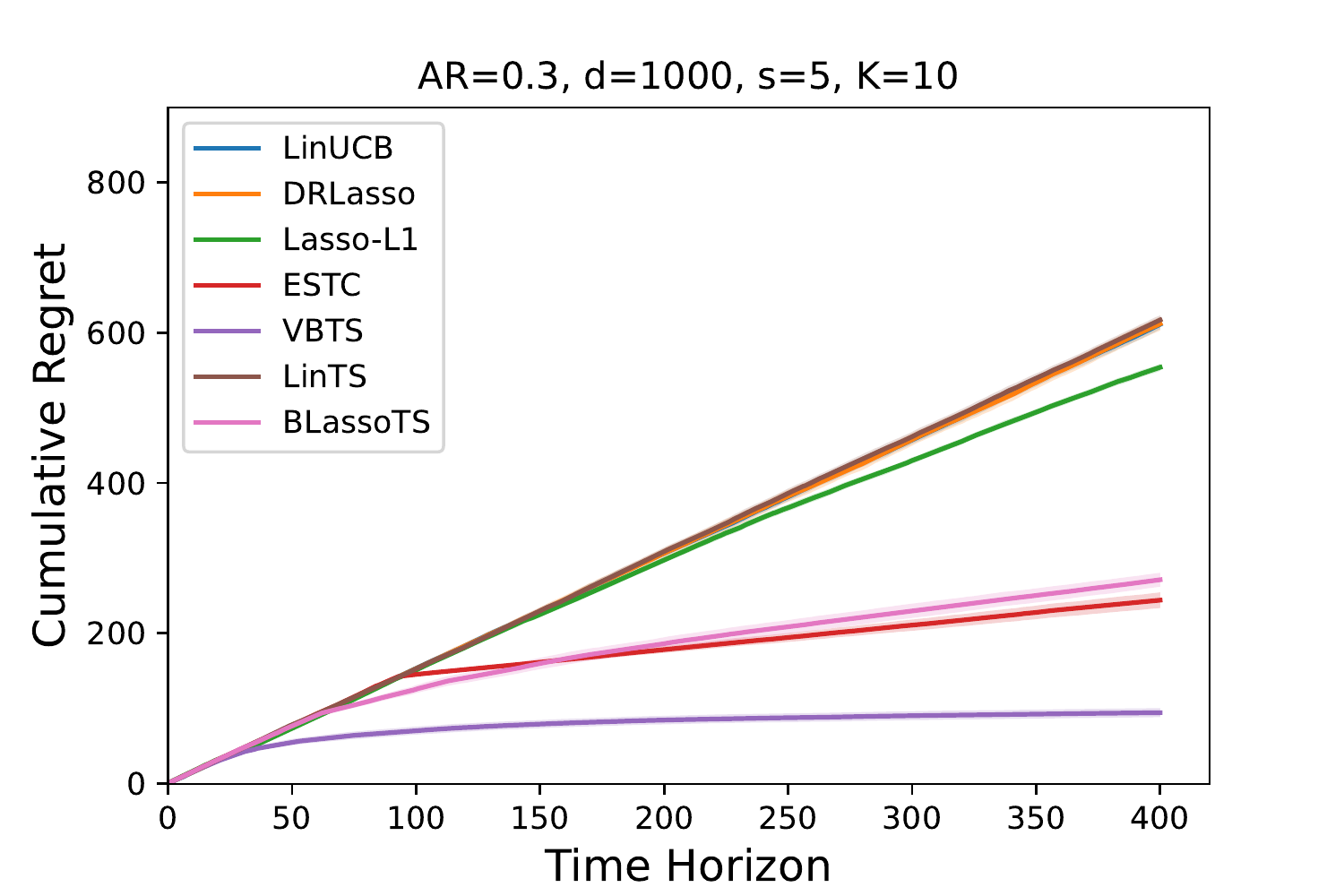}
        \caption{AR (Setup 2)}
        \label{fig: AR, setup=2}
    \end{subfigure}
\vspace{-3mm}    
\caption{Cumulative regret of competing algorithms.}
    \label{fig: cumulative regret plots}
    \end{figure*}


\renewcommand{\arraystretch}{1.5}
\setlength{\tabcolsep}{1pt}
\begin{table}[h]
\small
    \centering
    \caption{Time comparison among the competing algorithms.}
    \vspace{0.1in}
    \begin{tabular}{ |c|c|c|c| }
    \hline
    \multirow{2}{*}{Type} & \multirow{2}{*}{Algorithm} & \multicolumn{2}{|c|}{Mean time of execution (seconds)} \\
    \cline{3-4}
    & & Equicorrelated & Auto-regressive \\
    \hline
    \hline
    \multirow{3}{5em}{\hspace{15pt}TS} & LinTS & 1344.39 & 1346.46\\ 
    & BLasso TS & 1511.68 & 1455.53\\
    & \textbf{VBTS} & \textbf{29.33} & \textbf{27.65} \\ 
    \hline
    \end{tabular}
    
    \label{tab: time table}
\end{table}

\subsection{Real data - \texttt{gravier} Breast Carcinoma Data}

We consider breast cancer data \texttt{gravier} (\texttt{microarray} package in \texttt{R}) for 168 patients to predict metastasis of  breast carcinoma based on 2905 gene expressions (bacterial artificial chromosome or BAC array). The goal of the learner is to identify the positive cases.


Similar to \cite{kuzborskij2019efficient, chen2021efficient}, 
in our experimental setup, we convert the breast cancer classification problem into 2-armed contextual bandit problem.  More details about the data and reward generation process are provided in Appendix \ref{sec: details real-data}.
We perform 10 independent Monte Carlo simulations and plot the expected regret of VBTS  in Figure \ref{fig: cumaltive regret breast cancer} along with its competitors. In this experiment, we omit LinUCB and LinTS algorithms as they were performing far worse compared to the existing ones in Figure \ref{fig: cumaltive regret breast cancer}. The figure shows that VBTS and Lasso-L1 confidence ball algorithms are by far the clear winners in terms of cumulative regret. However, upon closer look, we see that VBTS is slightly better than Lasso-L1 confidence in terms of cumulative regret. In terms of accuracy, Lasso-L1 and VBTS are in the same ball park as seen in Table \ref{tab: spam accuracy}. 

\begin{figure}[h!]
\centering
\includegraphics[width=.8\columnwidth]{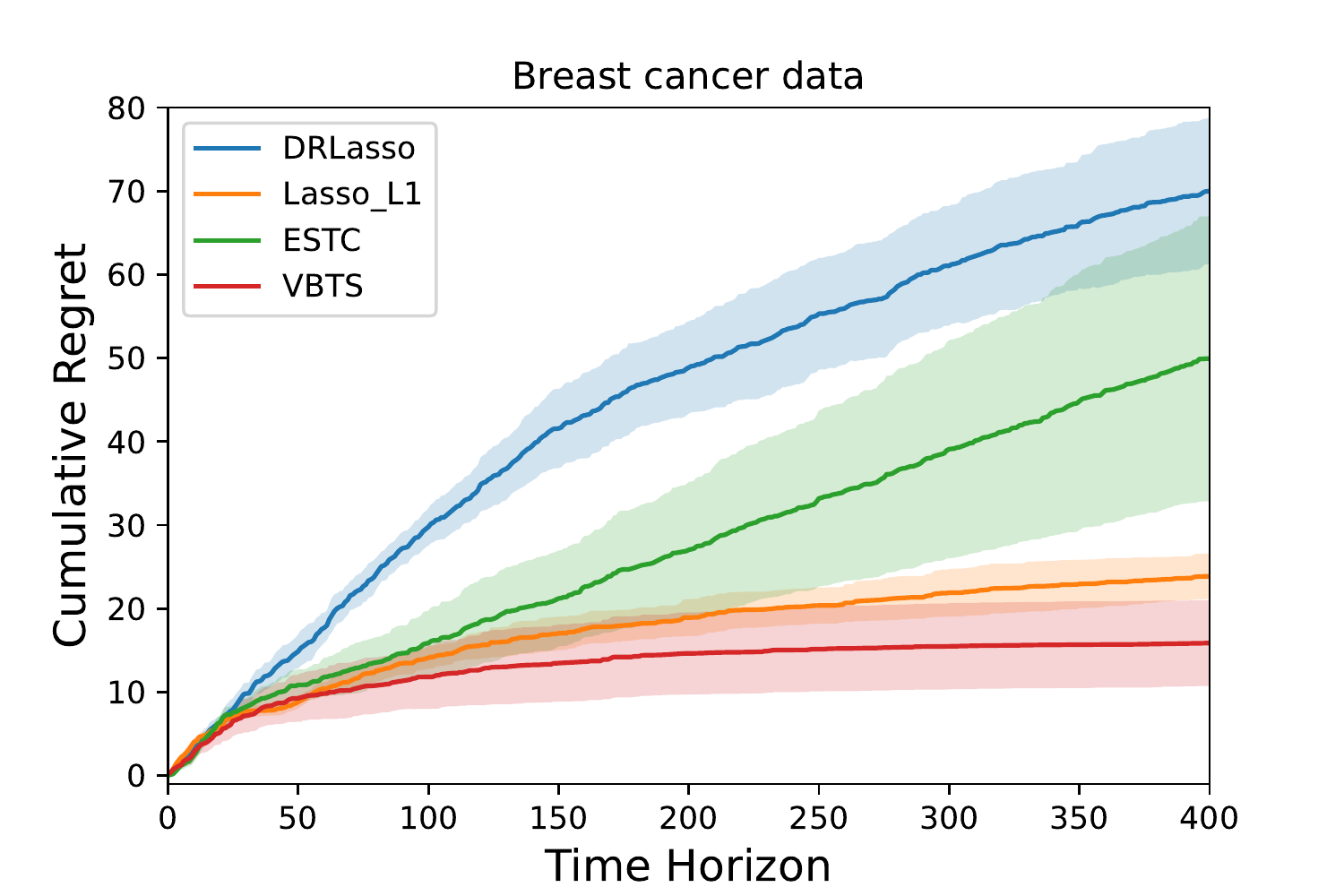}
\vspace{-3mm}
\caption{Cumulative regret plot for breast cancer data set.}
    \label{fig: cumaltive regret breast cancer}
    \end{figure}

\vspace{-3mm}
\renewcommand{\arraystretch}{1.5}
\setlength{\tabcolsep}{3pt}
\begin{table}[h]
\small
    \centering
    \caption{Classification accuracy of competing algorithms.}
    \vspace{0.1in}
\begin{tabular}{ |c|c|c|c|c| } 
 \hline
 Algorithm & DRLasso & Lasso-L1 & ESTC & \bf{VBTS}  \\ 
 \hline
 Accuracy(\%) & 65.63   & 81.20   & 73.32 & \textbf{81.88}\\
 \hline
\end{tabular}
\label{tab: spam accuracy}
\end{table}

\section{Conclusion}
In this paper, we consider the stochastic linear contextual bandit problem with high-dimensional sparse features and a fixed number of arms. We propose a Thompson sampling algorithm for this problem by placing a suitable \emph{sparsity-inducing} prior on the unknown parameter to induce sparsity. We also develop a crucial posterior contraction result for \textit{non-i.i.d.}\ data that allows us to obtain an almost \emph{dimension independent} regret bound for our proposed algorithm. We explicitly point out the dependences on $d$ and $T$ for different arm-separation regimes parameterized by $\omega$, which is also minimax optimal for $\omega\in [0,1)$. Moreover, the choice of prior allows us to devise a Variational Bayes algorithm that enjoys computational expediency over traditional MCMC. We demonstrate the superior performance of our algorithm through extensive simulation studies. We finally perform an experiment on a \texttt{gravier} dataset by converting the classification problem into a 2-armed contextual bandit problem, for which our method performs better compared to other existing algorithms.

Now we point the readers toward some of the natural research directions that we plan to cover in our future works. 
In terms of regret analysis, similar to most of the recent works in high dimensional contextual bandits, relies on upper bounding the regret through estimation of the parameter, i.e., we rely on the estimation of $\beta^*$ to be able to provide meaningful regret bound. However, this should not be required - as an example, consider the case where the first coordinate of $x_{i}(t)$ is 0 for all $i\in [K],t\in [T]$. Then the first coordinate of $\beta^*$ is not estimable, however, this does not pose any problem to designing a sensible policy since this coordinate does not appear in the regret. Unfortunately, Assumption \ref{assumptions: contexts}\ref{item: anti-concentration} is not satisfied for such degeneracy in the contexts and as a result, it would require a modified analysis of the regret bound.
 Secondly, we underscore the fact that in our setup we adopt the Variational Bayes framework only to sidestep the computational hurdles of MCMC arising from a myriad of challenges such as   slow mixing times of the chains,  lack of easy implementation, etc. However, in high-dimensional regression setup \cite{yang2016computational} has proposed Metropolis-Hastings algorithms based on truncated sparsity priors that do not meet the above roadblocks. It could be very well possible that some other prior structure will allow us to design more efficient MCMC algorithms with faster mixing times in the high-dimensional SLCB setup along with theoretical guarantees.
Finally, we also plan to analyze Thompson sampling for high-dimensional generalized contextual bandit problems.

\paragraph{Author Contribution:}
All authors conceived and carried out the research project
jointly. S.C. and S.R. jointly wrote the paper and code for numerical experiments. A.T. helped edit the paper.
\bibliographystyle{apalike}  
\bibliography{references} 

\begin{appendices}
\section{Details of Simulations}
\subsection{Simulation details for AR(1) structure}
\label{sec: simulation details-AR(1)}
We set the number of arms $K = 10$ and we generate the context vectors $\{x_i(t)\}_{i =1}^K$ from multivariate $d$-dimensional Gaussian distribution $\sfN_d(\boldsymbol{0}, \Sigma)$, where $\Sigma_{ij} = \phi^{\abs{i -j} }$ and $\phi = 0.3$. We consider $d = 1000$ and the sparsity $s^* = 5$. We choose the set of active indices $\calS^*$ uniformly over all the subsets of $[d]$ of size $s^*$. Next, for each choice of $d$, we consider two types generating scheme for $\beta$:
\begin{itemize}
    
    \item Setup 1: $\{U_i\}_{i\in \calS^*} \overset {i.i.d.}{\sim} \text{Uniform}(0.3, 1)$ and set $\beta$ as the following:
    \[
    \beta_j = \begin{cases}
  \frac{U_j}{\sqrt{\sum_{\ell \in \calS^*} U_\ell^2}},  &  \text{ if $ j \in \calS^*$,} \\
  0, &  \text{ otherwise.}
\end{cases}.
\]
    \item  Setup 2: $\{Z_i\}_{i\in \calS^*} \overset {i.i.d.}{\sim} \text{Normal}(0, 1)$ and set $\beta$ as the following:
    \[
    \beta_j = \begin{cases}
  \frac{Z_j}{\sqrt{\sum_{\ell \in \calS^*} Z_\ell^2}},  &  \text{ if $ j \in \calS^*$,} \\
  0, &  \text{ otherwise.}
\end{cases}.
\]
\end{itemize}
We run 40 independent simulations and plot the mean cumulative regret with 95\% confidence band in Figure \ref{fig: cumulative regret plots}. In all the setups, we see that VBTS outperforms its competitors by a wide margin. Similar to the previous simulation example, in this case also Table \ref{tab: time table} shows that VBTS is far better in terms of mean execution time than its competitors in the class of TS algorithms. 

\subsection{Siumlation for different choices of $\lambda$}
\label{sec: lambda simulation}
As discussed in the first paragraph of Section \ref{sec:numerical}, for each of these simulation settings, we tried a few choices for the tuning parameter $\lambda_t$. In addition to the default choice of $\lambda_t=1$ (for all time points $t$), we also explored the performance of the algorithm under growing $\lambda$, as required by our theoretical results. In particular, we tried $\lambda_t = \lambda_*\sqrt{t}$ for $\lambda_*\in\{0.2, 0.3, 0.4, 0.5\}$. For comparison, we only kept the faster optimism based methods DRLasso, Lasso-L1 and ESTC. We found the results to be roughly robust to the choice of this tuning parameter. The results are summarized in Figure \ref{fig: cumaltive regret equi lambda} and Figure \ref{fig: cumaltive regret AR lambda} below. However, we found that larger values of $\lambda_*$ lead to numerical issues, we conjecture that this is an artifact of the variational Bayes approximation, rather than the prior itself. For our simulation settings, the choice $\sqrt{t\log d}/s^* \approx 0.5 \sqrt{t}$ and hence by the findings in \cite{ray2021variational}, values of $\lambda_*$ higher than this may yield inaccurate Variational Bayes estimation. 

    


    

\begin{figure}[h!]
\centering
    \begin{subfigure}{0.47\linewidth}
	\includegraphics[width=\linewidth]{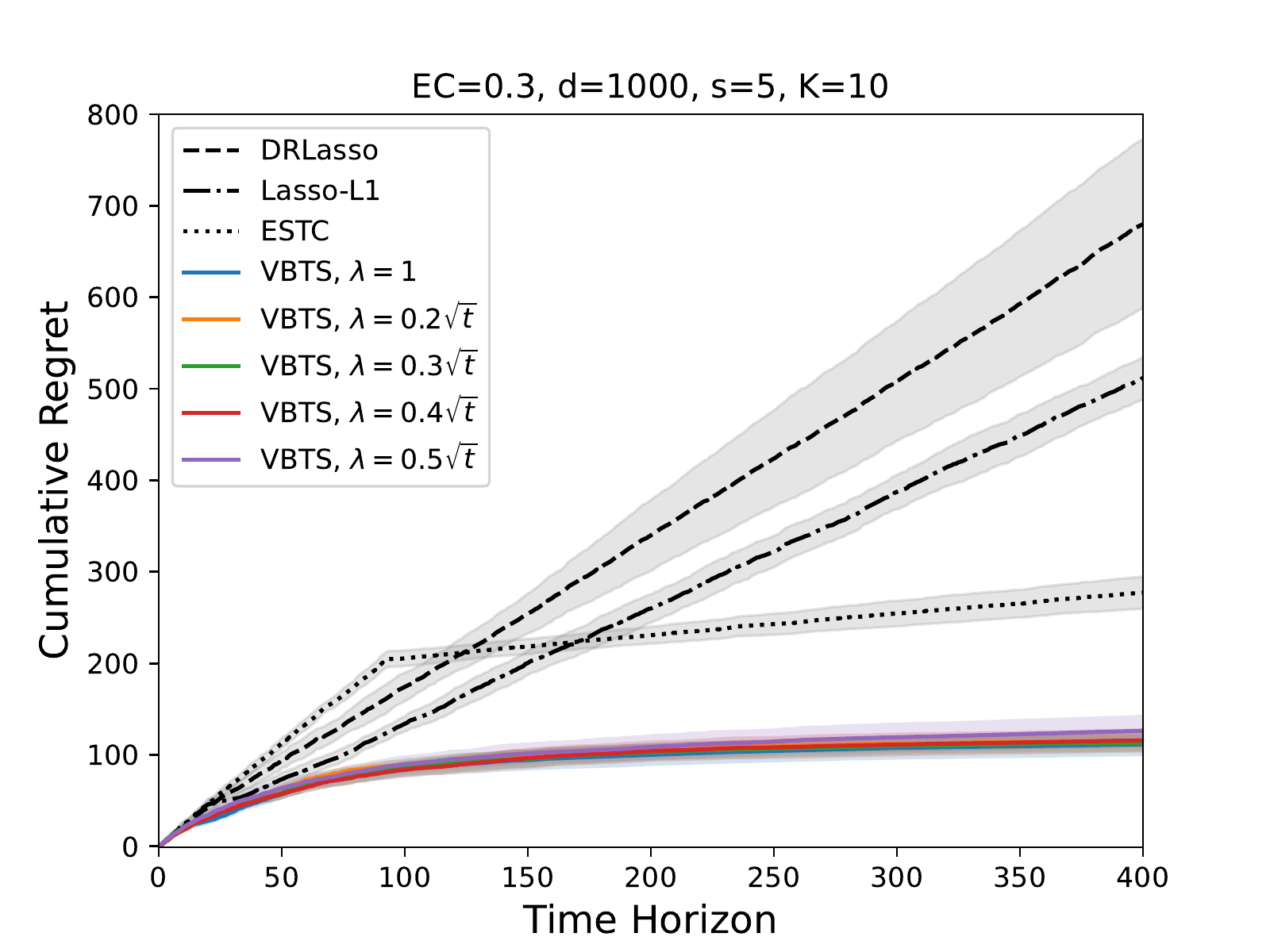}
    \caption{Setup 1}
    \end{subfigure}\hfill
    \begin{subfigure}{0.47\linewidth}
\includegraphics[width=\linewidth]{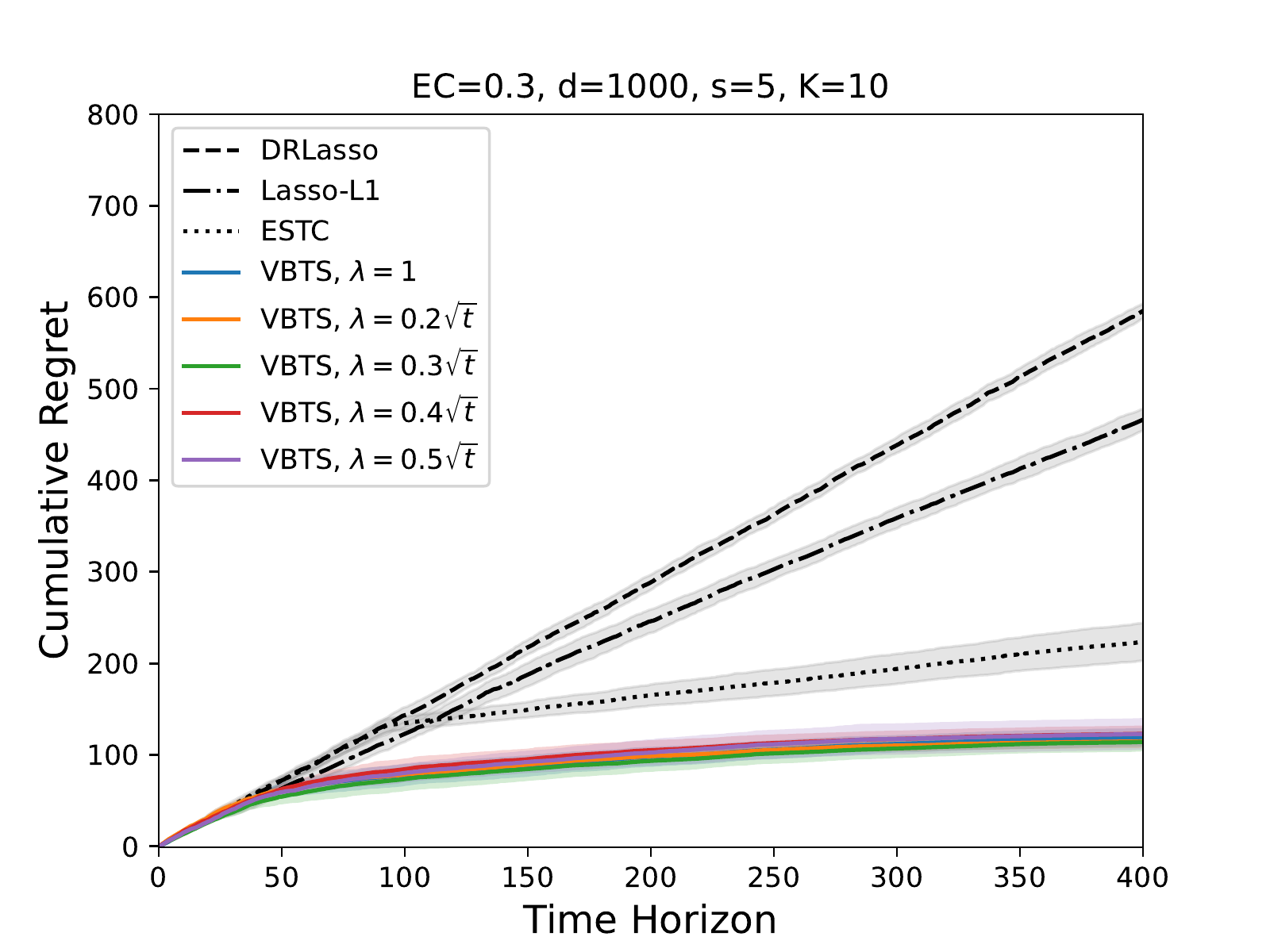}
    \caption{Setup 2}
    \end{subfigure}
\caption{Regret bound for equi-correlated design for different tuning parameter choices}
    \label{fig: cumaltive regret equi lambda}
    \end{figure}

\begin{figure}[h!]
\centering
    \begin{subfigure}{0.47\linewidth}
\includegraphics[width=\linewidth]{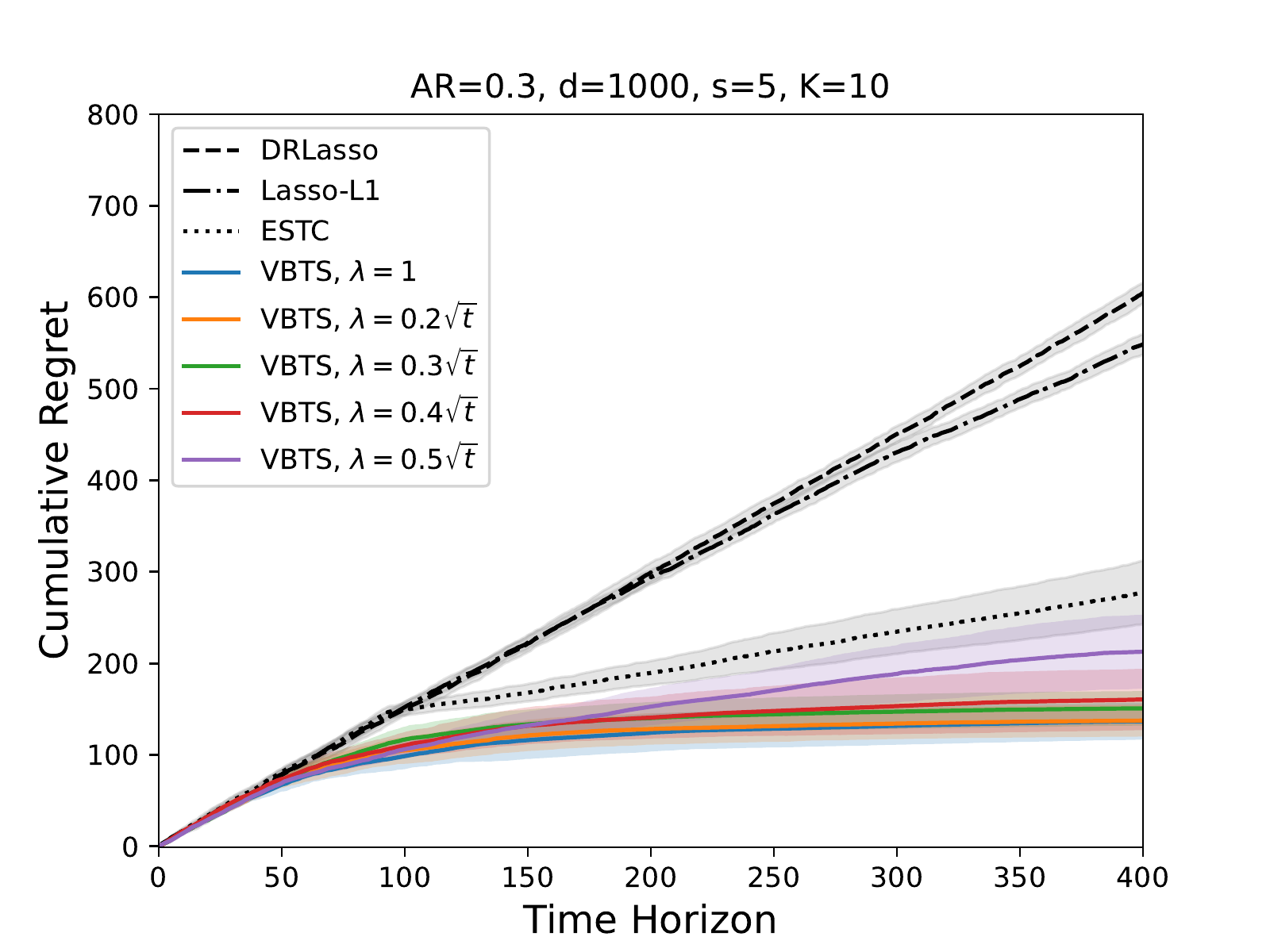}
    \caption{Setup 1}
    \end{subfigure}\hfill
    \begin{subfigure}{0.47\linewidth}
\includegraphics[width=\linewidth]{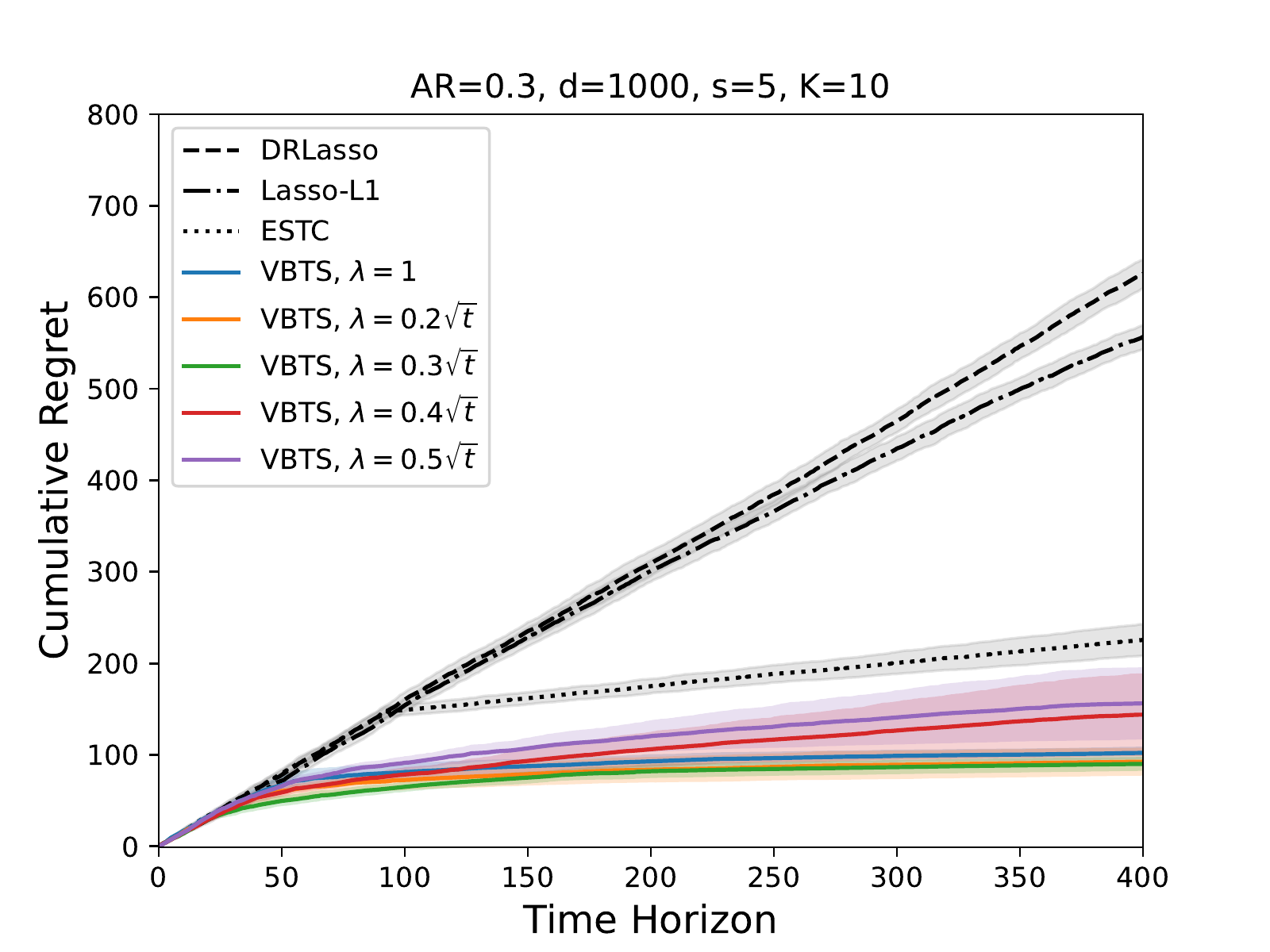}
    \caption{Setup 2}
    \end{subfigure}
\caption{Regret bound for auto-regressive design for different tuning parameter choices}
    \label{fig: cumaltive regret AR lambda}
    \end{figure}

\subsection{Details of real data experiment}
\label{sec: details real-data}
We consider breast cancer data \texttt{gravier} (\texttt{microarray} package in \texttt{R}) for 168 patients to predict metastasis of  breast carcinoma based on 2905 gene expressions (bacterial artificial chromosome or BAC array). \citep{gravier2010prognostic} considered small, invasive ductal carcinomas without axillary lymph node involvement (T1T2N0) to predict metastasis of small node-negative breast carcinoma. Using comparative genomic hybridization arrays, they examined 168 patients over a five-year period. The 111 patients with no event after diagnosis were labeled good (class 0), and the 57 patients with early metastasis were labeled poor (class 1). The 2905 gene expression levels were normalized with a $\log_2$ transformation.

Similar to \cite{kuzborskij2019efficient, chen2021efficient}, 
in our experimental setup we convert the breast cancer classification problem into 2-armed contextual bandit problem as follows: Given the \texttt{gravier} data set with 2 classes, we first set Class 1 as the target class. In each round, the environment randomly draws one sample from each class and composes a set of contexts of 2 samples. The learner chooses one sample and observes the reward following a logit model. In particular, we model the reward as 
\[
r(t):= \log \left\{\frac{\pr(\text{Selected class = 1})}{\pr(\text{Selected class = 0})}\right\} = x_{a_t}^\top \beta^* + \epsilon(t),
\]
where $a_t \in \{1,2\}$ is the selected arm at round $t$. Thus, small cumulative regret insinuates that the learner is able to differentiate the positive patients eventually. Such concepts can be used for constructing online classifiers to differentiate carcinoma metastasis from healthy patients based on gene expression data.  However, in practice, we can not measure the regret defined in \eqref{eq: regret definition}, unless we have the knowledge of $\beta^*$. To resolve this issue, we first fit a logit model on the whole \texttt{gravier} data set and consider the estimated $\hat{\beta}$  as the ground truth and report the expected regret with respect to the estimated $\beta$. As reported in \cite{gravier2010prognostic}, 24 (out of 2905) BACs showed statistically significant difference (comparing Cy3/Cy5 values) between the two groups, motivating the use of a sparse logit model in our case. The estimated $\beta^*$ in our sparse logistic model on the dataset had a sparsity of 18 using the dataset. In addition to this, we also treat the estimated noise variance from the fitted logit model as the true noise variance of the error induced by the environment in each round. 

\section{Proof of Theorem \ref{thm: regret LSB}}
\label{appendix: regret bound}
In this section, we present the detailed proof of Theorem \ref{thm: regret LSB}. First, for clarity of presentation, we introduce some notations. We use $X_{t}$ to denote the matrix $(x_{a_1}(1), \ldots, x_{a_t}(t))^\top \in \bbR^{t \times d}$. Given this, we denote the covariance matrix $\widehat{\Sigma}_t = X_{t}^\top X_{t}/t$. Next, we define the set 
$$\bbS_0^{d-1}(s)\triangleq \bbS^{d-1} \cap \{v \; : \; \norm{v}_0 \leq s\}.$$
We also define the following:
\begin{definition}
\label{def: restricted cone}
For a index set $I \subseteq [d]$ and $\alpha \in \bbR^+$, we define the restricted cone as
\[
\bbC_\alpha(I):= \{v \in \bbR^d\;: \; \norm{v_{I^c}}_1 \leq \alpha \norm{v_I}_1 , v_I \neq 0 \}
\].
\end{definition}
In high-dimensional literature one typically assumes compatibility condition on the design matrix $X$, i.e.,
\begin{equation}
    \label{eq: restricted eigen value cond}
    {\phi}_{\comp}(S^*;X):= \inf_{\delta \in \bbC_7(S^*)}  \frac{\norm{X \delta}_2 \abs{S^*}^{1/2}}{t^{1/2} \norm{\delta}_1} >0,
\end{equation}
where $S^* = \{j : \beta^*_j \neq 0\}$. This is mainly to guarantee the estimation accuracy of high-dimensional estimators like LASSO \citep{bickel2009simultaneous} or to show the posterior consistency in Bayesian high dimensional literature \citep{castillo2015bayesian}.

As discussed in the main paper, we prove the theorem in three parts, the subsequent sections deal with each part separately.

\subsection{Proof of part \ref{item: empirical sparse eigen}}
\label{sec: proof part 1}
In this section we will show that the matrix $\widehat{\Sigma}_t$ enjoys the SRC condition with high probability.
As a warm up, we recall the definition of Orlicz norms:

\begin{definition}[Orlicz norms]
\label{def: orlicz norm}
For random variable $Z$ we have the followings:
\begin{enumerate}[label = (\alph*)]
    \item \label{item: psi2} The \emph{sub-Gaussian} norm of a random variable $Z$, denoted $\norm{Z}_{\psi_2}$, is defined as 
    \[
    \norm{Z}_{\psi_2}:= \inf\{\lambda>0 : \bbE\{\exp(Z^2/\lambda^2)\}\leq 2 \}. 
    \]
    
    \item \label{item: psi1} The \emph{sub-Exponential} norm of a random variable $Z$, denoted $\norm{Z}_{\psi_1}$, is defined as 
    \[
    \norm{Z}_{\psi_1}:= \inf\{\lambda>0 : \bbE\{\exp(\abs{Z}/\lambda)\} \leq 2 \}. 
    \]
\end{enumerate}
\end{definition}
The details and related properties can be found in Section 2.5.2 and Section 2.7 in \cite{vershynin2018high}. The following is a relationship between sub-gaussian and sub-exponential random variables.

\begin{lemma}[Sub-Exponential and sub-Gaussian squared]
\label{lemma: sub-exp and sub-gauss}
A random variable $Z$ is sub-Gaussian iff $Z^2$ is sub-Exponential. Moreover,
\[
\norm{Z^2}_{\psi_1} = \norm{Z}_{\psi_2}^2.
\]

\end{lemma}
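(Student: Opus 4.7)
The plan is to prove both the biconditional and the norm identity simultaneously via a change of variables in the Orlicz-norm definitions. First I would observe that since $Z^2 \geq 0$ almost surely, we have $|Z^2| = Z^2$, so Definition \ref{def: orlicz norm}\ref{item: psi1} simplifies to
\[
\norm{Z^2}_{\psi_1} = \inf\{\mu > 0 : \bbE[\exp(Z^2/\mu)] \leq 2\}.
\]
This is essentially identical in form to
\[
\norm{Z}_{\psi_2} = \inf\{\lambda > 0 : \bbE[\exp(Z^2/\lambda^2)] \leq 2\}.
\]

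Next I would perform the substitution $\mu = \lambda^2$, which is a strictly increasing bijection from $(0,\infty)$ onto itself. Under this bijection, the constraint $\bbE[\exp(Z^2/\lambda^2)] \leq 2$ coincides with $\bbE[\exp(Z^2/\mu)] \leq 2$, so the set $\{\lambda > 0 : \bbE[\exp(Z^2/\lambda^2)] \leq 2\}$ maps bijectively onto $\{\mu > 0 : \bbE[\exp(Z^2/\mu)] \leq 2\}$. Since squaring is order-preserving on positive reals, the infimum of the latter set equals the square of the infimum of the former, yielding $\norm{Z^2}_{\psi_1} = \norm{Z}_{\psi_2}^2$. Equivalently, I can argue via two one-sided inequalities: any admissible $\lambda$ gives an admissible $\mu := \lambda^2$ (so $\norm{Z^2}_{\psi_1} \leq \norm{Z}_{\psi_2}^2$), and conversely any admissible $\mu$ gives an admissible $\lambda := \sqrt{\mu}$ (so $\norm{Z}_{\psi_2}^2 \leq \norm{Z^2}_{\psi_1}$).

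The biconditional then follows as an immediate corollary: $Z$ is sub-Gaussian iff $\norm{Z}_{\psi_2} < \infty$, equivalently iff $\norm{Z}_{\psi_2}^2 < \infty$, and by the identity just established this is iff $\norm{Z^2}_{\psi_1} < \infty$, i.e., iff $Z^2$ is sub-exponential (with the standard convention $\inf\emptyset = +\infty$ applying to both norms). There is no real analytical obstacle here; the only subtlety worth flagging explicitly is that nonnegativity of $Z^2$ removes the absolute value in the $\psi_1$ definition, after which the result is a pure change of variables combined with the order-preservation of squaring on $(0,\infty)$.
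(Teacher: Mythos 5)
Your proof is correct: since $Z^2\ge 0$, the $\psi_1$-definition applied to $Z^2$ and the $\psi_2$-definition applied to $Z$ involve the same expectation constraint, and the substitution $\mu=\lambda^2$ (order-preserving on $(0,\infty)$, with $\inf\emptyset=\infty$ handling the infinite case) yields both the identity $\norm{Z^2}_{\psi_1}=\norm{Z}_{\psi_2}^2$ and the equivalence of sub-Gaussianity of $Z$ with sub-exponentiality of $Z^2$. The paper itself gives no argument and simply cites Lemma 2.7.4 of Vershynin (2018), whose proof is exactly this change-of-variables observation, so your write-up supplies the same argument in self-contained form.
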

\begin{proof}
 The proof can be found in Lemma 2.7.4 of \cite{vershynin2018high}
\end{proof}

\begin{lemma}[Bernstein's inequality]
\label{lemma: bernstein}
Let $Z_1, \ldots, Z_N$ be independent mean-zero sub-exponential random variable. Then for every $\delta \geq 0$ we have
\[
\pr \left( \abs{\frac{1}{N} \sum_{i = 1}^N Z_i} \geq \delta \right) \leq 2 \exp \left\{ -c_2  \min \left( \frac{\delta^2}{K_0^2}, \frac{\delta}{K_0}\right) N\right\},
\]
where $K_0 = \max_{i \in [N]} \norm{Z_i}_{\psi_1}$.
\end{lemma}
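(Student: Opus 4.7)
The plan is to execute the classical Cramér--Chernoff argument, exploiting the moment-generating function (MGF) bound implied by the sub-exponential norm. Let $S_N = \sum_{i=1}^N Z_i$. First I would apply Markov's inequality to $e^{\lambda S_N}$ for $\lambda > 0$, which, by independence, gives $\pr(S_N/N \geq \delta) \leq e^{-\lambda N \delta} \prod_{i=1}^N \bbE[e^{\lambda Z_i}]$. Everything then reduces to a good upper bound on the individual MGFs $\bbE[e^{\lambda Z_i}]$.

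The second step is to prove the standard MGF estimate: if $Z$ is mean-zero with $\norm{Z}_{\psi_1} \leq K$, then there exist absolute constants $c_0, C_1 > 0$ such that $\bbE[e^{\lambda Z}] \leq \exp(C_1 \lambda^2 K^2)$ whenever $|\lambda| \leq c_0/K$. I would obtain this by Taylor expanding $e^{\lambda Z} = 1 + \lambda Z + \sum_{k \geq 2} \lambda^k Z^k / k!$, using $\bbE[Z] = 0$ to kill the linear term, and controlling the higher moments via $\bbE[|Z|^k] \leq k!\, K^k$, a standard consequence of Definition \ref{def: orlicz norm}\ref{item: psi1} obtained by integrating the tail bound $\pr(|Z| > u) \leq 2 \exp(-u/K)$. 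Summing the resulting geometric-type series for $|\lambda| \leq c_0/K$ (e.g.\ $c_0 = 1/2$) yields $1 + C_1 \lambda^2 K^2 \leq \exp(C_1 \lambda^2 K^2)$.

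The third step is the optimization over $\lambda$. Substituting into the Chernoff bound gives $\pr(S_N/N \geq \delta) \leq \exp\bigl(-\lambda N \delta + C_1 N \lambda^2 K_0^2\bigr)$ for any $\lambda \in (0, c_0/K_0]$. Unconstrained minimization over $\lambda$ yields $\lambda^\star = \delta/(2 C_1 K_0^2)$. Two regimes naturally emerge. When $\lambda^\star \leq c_0/K_0$, equivalently $\delta \leq 2 c_0 C_1 K_0$, I take $\lambda = \lambda^\star$ and obtain the Gaussian-type bound $\exp\bigl(-N\delta^2/(4 C_1 K_0^2)\bigr)$. Otherwise I saturate at $\lambda = c_0/K_0$, which yields the exponential-type bound $\exp\bigl(-N c_0 \delta/(2 K_0)\bigr)$ (the quadratic penalty term is dominated by the linear gain in this regime). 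Combining the two cases produces the $\min\{\delta^2/K_0^2,\, \delta/K_0\}$ structure in the exponent, with a universal constant $c_2$ absorbing $c_0$ and $C_1$.

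Finally, since $-Z_i$ is also mean-zero and sub-exponential with the same $\psi_1$ norm, the identical argument applied to $-S_N$ controls the lower tail, and a union bound yields the factor of $2$ and the absolute-value statement. The only real technical obstacle is the two-regime bookkeeping: one must verify that on the boundary $\delta \asymp K_0$ both bounds agree up to the constant $c_2$, and that the saturated choice $\lambda = c_0/K_0$ in the large-$\delta$ regime really produces a linear-in-$\delta$ exponent (which follows because $C_1 (c_0/K_0)^2 K_0^2 = C_1 c_0^2$ is absorbed into the constant once $\delta \geq 2 c_0 C_1 K_0$).
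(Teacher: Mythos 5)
Your proof is correct and is essentially the same argument the paper relies on: the paper's "proof" of this lemma is just a pointer to Corollary 2.8.3 of \cite{vershynin2018high}, and what you have written out (Chernoff bound, sub-exponential MGF estimate via Taylor expansion of $e^{\lambda Z}$, two-regime optimization over $\lambda$, then symmetrization and a union bound for the absolute value) is precisely the standard proof given there. The only minor slip is that integrating the tail bound $\pr(|Z|>u)\leq 2e^{-u/K}$ gives $\bbE|Z|^k \leq 2\,k!\,K^k$ rather than $k!\,K^k$, but this factor of $2$ is harmlessly absorbed into the universal constants.
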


\begin{proof}
 The proof can be found in Corollary 2.8.3 of \cite{vershynin2018high}.
\end{proof}

\begin{proposition}[Empirical SRC]
\label{prop: empirical SRC}
Let $\varepsilon = \min\{1/4,  1/(\tilde{c} \phi_u \vartheta^2 \xi K \log K + 3)\}$ for some universal constant $\tilde{c}>0$ and also define the quantity $\kappa(\xi, \vartheta, K) \triangleq \min \{(4 c_3 K \xi \vartheta^2)^{-1}, 1/2\}$ for some universal positive constants $c_3$. Then the followings are true for any constant $C>0$:
\label{prop: sparse eigenvalue empirical}
\begin{equation*}
    \pr\left( \phi_{\max} (Cs^*; \widehat{\Sigma}_t) \geq \frac{c_9 \vartheta^2 \phi_u \log K}{1 - 3 \varepsilon}\right) \leq \exp\left\{ - c_8 t \log K + C s^* \log K + Cs^* \log(3d/\varepsilon)\right\},
\end{equation*}

\begin{equation*}
\begin{aligned}
 \pr \left( \phi_{\min}(C s^*; \widehat{\Sigma}_t)\leq \frac{1}{8K \xi} \right)
 \leq  & \; 2 \exp\{- c_2 \kappa^2(\xi, \vartheta, K)t + C s^* \log K + Cs^* \log(3d/\varepsilon)\}\\
& + \exp\left\{ - c_8  \log (K) t + C s^* \log K + Cs^* \log(3d/\varepsilon)\right\},
\end{aligned}
\end{equation*}
where all $c_j$'s in the above display are universal positive constants.
\end{proposition}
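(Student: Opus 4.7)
The central obstacle is that the rows $x_{a_1}(1),\ldots,x_{a_t}(t)$ of $X_t$ are \emph{not} independent: the policy selects $a_\tau$ using both $\cH_{\tau-1}$ and the full tuple $\{x_i(\tau)\}_{i\in[K]}$, so standard sample-covariance concentration results are unavailable off-the-shelf. My plan is the usual $\varepsilon$-net reduction on the $Cs^*$-sparse unit sphere $\bbS_0^{d-1}(Cs^*)$, whose covering number is $\binom{d}{Cs^*}(3/\varepsilon)^{Cs^*}$, yielding the $Cs^*\log(3d/\varepsilon)$ combinatorial term in the exponent (with an additional $Cs^*\log K$ from bookkeeping over which arms can contribute to the rows of $X_t$). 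For each fixed net point $u$ I then need a tail bound for $u^\top\widehat\Sigma_t u = t^{-1}\sum_\tau (x_{a_\tau}(\tau)^\top u)^2$; the hard part will be decoupling the policy dependence, which I plan to handle differently for the two directions.

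For the upper bound on $\phi_{\max}$, I would dominate pointwise by an i.i.d.\ sequence,
\[
(x_{a_\tau}(\tau)^\top u)^2 \;\leq\; \max_{i\in[K]}(x_i(\tau)^\top u)^2 \;=:\; Y_\tau(u),
\]
which erases the dependence on $a_\tau$; since the tuples $\{x_i(\tau)\}_{i\in[K]}$ are i.i.d.\ across $\tau$, so are the $Y_\tau(u)$. Assumption \ref{assumptions: contexts}\ref{item: context_subgaussian} combined with Lemma \ref{lemma: sub-exp and sub-gauss} gives $\norm{(x_i(\tau)^\top u)^2}_{\psi_1}\lesssim \vartheta^2$ and mean $\leq\phi_u$; a union bound on the MGF across the $K$ arms then yields $\Ex[Y_\tau(u)]\lesssim \phi_u\vartheta^2\log K$ along with $\norm{Y_\tau(u)-\Ex Y_\tau(u)}_{\psi_1}\lesssim \vartheta^2\log K$. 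Applying Bernstein (Lemma \ref{lemma: bernstein}) at deviation of order $\vartheta^2\log K$ would give failure probability $\exp(-c_8 t\log K)$, and the standard inflation identity $\phi_{\max}(s;M)\leq (1-3\varepsilon)^{-1}\max_{u\in\cN}u^\top M u$ together with a union bound over the $\varepsilon$-net delivers the first claim.

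For the lower bound on $\phi_{\min}$, my plan is to exploit Assumption \ref{assumptions: contexts}\ref{item: anti-concentration} together with a union over arms: since each $x_i(\tau)$ is independent of $\cH_{\tau-1}$,
\[
\Pr\!\Bigl((x_{a_\tau}(\tau)^\top u)^2\leq h \given \cH_{\tau-1}\Bigr) \;\leq\; \sum_{i=1}^K \Pr\!\Bigl((x_i(\tau)^\top u)^2\leq h\Bigr) \;\leq\; K\xi h
\]
for every $u\in\bbS_0^{d-1}(Cs^*)$. Using $(x_{a_\tau}(\tau)^\top u)^2 \geq h\,\ind\{(x_{a_\tau}(\tau)^\top u)^2>h\}$ and optimizing at $h=1/(2K\xi)$ will give the conditional-mean lower bound $\Ex[(x_{a_\tau}(\tau)^\top u)^2\given\cH_{\tau-1}]\geq 1/(4K\xi)$. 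I would then apply a Freedman/Bernstein-type inequality to the sub-Exponential martingale differences $M_\tau := (x_{a_\tau}(\tau)^\top u)^2 - \Ex[(x_{a_\tau}(\tau)^\top u)^2\given\cH_{\tau-1}]$ (whose $\psi_1$-norms are $\lesssim \vartheta^2$) at target deviation $\delta=1/(8K\xi)$; the resulting failure probability is $\exp(-c_2\kappa^2(\xi,\vartheta,K) t)$, and the precise form $\kappa=\min\{1/(4c_3 K\xi\vartheta^2),1/2\}$ is exactly the Bernstein crossover between the sub-Gaussian and sub-Exponential regimes at this deviation. Finally, transferring the per-net-point bound to a uniform one on $\bbS_0^{d-1}(Cs^*)$ costs an additive $2\varepsilon\,\phi_{\max}(Cs^*;\widehat\Sigma_t)$, so to obtain $\phi_{\min}\geq 1/(8K\xi)$ I have to intersect with the good event for $\phi_{\max}$ from the previous step; this intersection is exactly the source of the two exponential terms in the second display. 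The main difficulty throughout is controlling sub-Exponential constants under bandit dependence, which I expect to resolve via the twin tricks of (i) pointwise domination by $\max_i$ for the upper bound and (ii) union-over-arms anti-concentration for the lower bound.
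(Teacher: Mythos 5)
Your plan for the $\phi_{\max}$ half is essentially the paper's: dominate $(x_{a_\tau}(\tau)^\top u)^2$ pointwise by $\max_{i\in[K]}(x_i(\tau)^\top u)^2$, which is i.i.d.\ across rounds, then apply a sub-exponential tail bound per net point and transfer to $\bbS_0^{d-1}(Cs^*)$ via the greedy net and the $(1-3\varepsilon)^{-1}$ inflation lemma (the paper runs the Chernoff bound with a union over arms inside the MGF rather than invoking the $\psi_1$ norm of the max, but this is bookkeeping).

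For the $\phi_{\min}$ half you genuinely diverge, and this is where your route has a gap relative to the stated constant. The paper does \emph{not} condition on the history at all: it lower bounds $(x_{a_\tau}(\tau)^\top u)^2 \geq \min_{i\in[K]}(x_i(\tau)^\top u)^2 =: Z_{\tau,u}$, which erases the policy dependence just as effectively as your max-domination does for the upper bound, makes the summands i.i.d., and—crucially—keeps $\norm{Z_{\tau,u}}_{\psi_1}\lesssim \vartheta^2$ with no $K$ or $\log K$ factor (the min is dominated by any single arm). The mean is still bounded below by $1/(2K\xi)$ via the same union-over-arms anti-concentration plus the integration $\bbE Z \geq h(1-K\xi h/2)$, and then plain i.i.d.\ Bernstein at deviation $1/(4K\xi)$ gives exactly $\kappa(\xi,\vartheta,K)=\min\{(4c_3K\xi\vartheta^2)^{-1},1/2\}$. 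Your martingale version (condition on $\cH_{\tau-1}$, lower bound the conditional mean by $1/(4K\xi)$, then Freedman/Bernstein for the centered differences) is sound in outline, but your claim that the differences have conditional $\psi_1$ norm $\lesssim\vartheta^2$ is not justified: given $\cH_{\tau-1}$ the chosen arm $a_\tau$ still depends on the \emph{current} contexts, so you cannot treat $x_{a_\tau}(\tau)$ as a fixed arm; the only available bound is through $\max_i(x_i(\tau)^\top u)^2$, giving $\vartheta^2\log K$ (or a factor $K$ in the conditional MGF). As written, your route therefore yields $\kappa$ degraded by a $\log K$ factor rather than the stated one, and it also requires a sub-exponential martingale Bernstein inequality that the paper never needs (its martingale tool, Lemma \ref{lemma: Bernstein}, is only for sub-Gaussian differences and is used elsewhere). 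If you want to keep the conditioning route you can repair it by truncating, i.e.\ using $(x_{a_\tau}(\tau)^\top u)^2\geq h\,\ind\{(x_{a_\tau}(\tau)^\top u)^2>h\}$ with $h\asymp 1/(K\xi)$ and applying Azuma--Hoeffding to the bounded indicator martingale, but the paper's min-domination is the cleaner fix and is what delivers the proposition's constants. The remaining ingredients of your proposal (net cardinality $\exp\{Cs^*\log(3d/\varepsilon)\}$, intersecting the $\phi_{\min}$ event with the $\phi_{\max}$ event to absorb the $3\varepsilon\phi_{\max}$ loss, hence the two exponential terms) match the paper.
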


\begin{proof}
We will first show the SRC condition for a fixed vector $v \in \bbS_0^{d-1}(C s^*)$. Then, the whole argument will be extended via a $\varepsilon$-net argument.
\\
\textbf{Analysis for a fixed vector $v$:}
Let $v \in \bbS_0^{d-1}(Cs^*)$ be a fixed vector. Now note that following fact:
 \begin{equation*}
     v^\top \widehat{\Sigma}_t v = \frac{1}{t} \sum_{\tau=1}^t \{v^\top x_{a_\tau}(\tau) \}^2  \geq \frac{1}{t}\sum_{\tau =1}^t \min_{i \in [K]}\{v^\top x_{i}(\tau)\}^2.
 \end{equation*}
 We define $Z_{\tau,v} \triangleq  \min_{i \in [K]}\{v^\top x_{i}(\tau)\}^2$ and note that for a fixed $v\in \bbS_0^{d-1}(C s^*)$, the random variables $\{Z_{\tau, v}\}_{\tau =1}^t$ are i.i.d. across the time points. Moreover, due to Assumption \ref{assumptions: contexts}\ref{item: context_subgaussian} and Lemma \ref{lemma: sub-exp and sub-gauss}, we have
 \[
 \norm{Z_{\tau, v}}_{\psi_1} = \norm{\min_{i \in [K]} \abs{v^\top x_i(\tau)}}_{\psi_2}^2 \leq c_3 \vartheta^2.
 \]
 
 Thus, $\{Z_{\tau, v}\}_{\tau = 1}^t$ are i.i.d sub-exponential random variables. First we will show that $\bbE(Z_{\tau,v})$ is uniformly lower bounded. Due to Assumption \ref{assumptions: contexts}\ref{item: anti-concentration} we have 
 \[
 \pr(Z_{\tau , v} \leq h) \leq \sum_{i = 1}^K \pr \{(v^\top x_i(\tau))^2 \leq h\} \leq K \xi h.
 \]
 Thus we have the following:
\begin{equation}
\label{eq: expectation lower bound}
\begin{aligned}
    \bbE(Z_{\tau, v}) &= \int_{0}^\infty \pr(Z_{\tau, v} \geq u)\; du\\
    & \geq \int_{0}^h \pr(Z_{\tau, v} \geq u)\; du\\
    & \geq \int_{0}^h (1 - K \xi u)\; du\\
    & = h(1 - K \xi h/2).
\end{aligned}
\end{equation}
Setting $h = 1/(K \xi)$ in Equation \eqref{eq: expectation lower bound} yields $\bbE(Z_{\tau,v}) \geq \frac{1}{2 K \xi}$.
 Now, using Lemma \ref{lemma: bernstein}, we have the following for a $\mu \in (0, c_1/ \norm{Z_{1,v}}_{\psi_1})$ and $\delta >0$:
 
 \begin{equation*}
 \begin{aligned}
    \pr \left(\frac{1}{t}\sum_{\tau =1}^t \{ Z_{\tau, v} - \bbE(Z_{\tau, v})\} \geq \delta \right) &\leq 2 \exp \left\{ - c_2 \min \left( \frac{\delta^2}{\norm{Z_{1,v}}_{\psi_1}^2}, \frac{\delta}{\norm{Z_{1,v}}_{\psi_1}} \right) t \right\}\\
    &  \leq 2 \exp \left\{ - c_2 \min \left( \frac{\delta^2}{c_3^2 \vartheta^4}, \frac{\delta}{c_3 \vartheta^2} \right) t \right\}  .
    \end{aligned}
 \end{equation*}
 

Now choose $\delta = \min \{( 4K \xi)^{-1}, c_3 \vartheta^2/2\}$ to finally get 
\begin{equation}
\label{eq: sparse eigen concentration 1}
 \begin{aligned}
    \pr \left(\abs{\frac{1}{t}\sum_{\tau =1}^t \{ Z_{\tau, v} - \bbE(Z_{\tau, v})\}} \geq \frac{1}{4 K \xi} \right)  \leq 2 \exp \left\{ - c_2 \kappa^2(\xi, \vartheta, K) t \right\}  ,
    \end{aligned}
 \end{equation}
 where $\kappa(\xi, \vartheta, K) = \min \{(4 c_3 K \xi \vartheta^2)^{-1}, 1/2\}$. Now recall that $\bbE(Z_{\tau,v}) \geq 1/ (2 K \xi)$, which shows that
 \begin{equation}
 \label{eq: sparse eigen concentration 2}
     \pr \left\{ \frac{1}{t} \sum_{\tau =1}^t Z_{\tau,v} \geq \frac{1}{ 4 K \xi} \right\} \leq  2 \exp \left\{ - c_2 \kappa^2(\xi, \vartheta, K) t \right\}, \quad \forall \; v \in \bbS_0^{d-1}(C s^*).
 \end{equation}
\\
\textbf{$\varepsilon$-net argument:} 
We consider a $\varepsilon$-net of the space $\bbS_0^{d-1}(C s^*)$ constructed in a specific way which will be described shortly. We denote it by $\cN_\varepsilon$. Let $J \subseteq [d]$ such that $\abs{J} = C s^*$ and consider the set $E_J = \bbS^{d-1} \cap \lspan\{ e_j \; : \; j \in J\}$. Here $e_j$ denotes the $j$th canonical basis of $\bbR^d$. Thus we have \[\bbS_0^{d-1}(C s^*) = \bigcup_{J : \abs{J} = C s^*}E_J.\] Now we describe the procedure of constructing a net for $\bbS_0^{d-1}(Cs^*)$ which is essential for controlling the parse eigenvalues.
\newline
\underline{Greedy construction of net:}
\begin{itemize}
    \item  Construct a $\varepsilon$-net of $E_J$ for each $J$ of size $C s^*$. We denote this net by $\cN_{\varepsilon, J}$. Note that $\abs{\cN_{\varepsilon, J}} \leq (3/ \varepsilon)^{C s^*}$ \citep[Corollary 4.2.13]{vershynin2018high} for $\varepsilon \in (0,1)$ as $E_J$ can be viewed as an unit ball embedded in $\bbR^{C s^*}$.
    
    \item Then the net $\cN_\varepsilon$ is constructed by taking union over all the $\cN_{\varepsilon, J}$, i.e., 
    \[
    \cN_\varepsilon = \bigcup_{J : \abs{J} = C s^*} \cN_{\varepsilon, J} .
    \]
\end{itemize}

Thus, from from the construction we have

\begin{equation}
\label{eq: covering number}
    \abs{\cN_\varepsilon} \leq \binom{d}{Cs^*}\left( \frac{3}{\varepsilon}\right)^{C s^*} \leq \exp\{ C s^* \log(3d/\varepsilon)\},
\end{equation}
whenever $\varepsilon \in (0,1)$. Now, we state an useful lemma on evaluating minimum eigenvalue on $\varepsilon$-net.
\begin{lemma}
\label{lemma: minimum sparse eigen on net}
Let $A$ be a $m \times m$ symmetric positive-definite matrix and $\varepsilon \in (0,1)$. Then, for $\varepsilon$-net $\cN_\varepsilon$ of $\;\bbS_0^{d-1}(s)$ constructed in greedy way, we have
\[
\phi_{\min}(s; A) \geq \min_{u \in \cN_\varepsilon} u^\top A u - 3\varepsilon \phi_{\max}(s; A).
\]
\end{lemma}
The proof of the lemma is deferred to Appendix \ref{sec: proof minimum sparse eigen on net}. Note that from Equation \eqref{eq: sparse eigen concentration 2} and an union bound argument we get 
\begin{equation}
\label{eq: sparse min eigen concentration}
\pr \left(\min_{u \in \cN_\varepsilon} v^\top \widehat{\Sigma}_t v\geq \frac{1}{4K \xi} \right) \geq 1 - 2 \exp\{- c_2 \kappa^2(\xi, \vartheta, K)t + C s^* \log K + Cs^* \log(3d/\varepsilon)\}.
\end{equation}
If $\phi_{\max}(C s^*, \widehat{\Sigma}_t)$ is bounded with high probability, then for  small $\varepsilon$, then along with Lemma \ref{lemma: minimum sparse eigen on net} we will readily have an uniform lower bound on $\phi_{\min}(C s^*, \widehat{\Sigma}_t)$.
\\
\textbf{Bounding $\phi_{\max}(C s^*, \widehat{\Sigma}_t)$:}
Here we gain start with $v \in \cN_\varepsilon$. Similar, to previous discussion we have
\begin{equation*}
     v^\top \widehat{\Sigma}_t v = \frac{1}{t} \sum_{\tau=1}^t \{v^\top x_{a_\tau}(\tau) \}^2  \leq \frac{1}{t}\sum_{\tau =1}^t \max_{i \in [K]}\{v^\top x_{i}(\tau)\}^2.
 \end{equation*}
 
 We define $W_{\tau,v} \triangleq  \max_{i \in [K]}\{v^\top x_{i}(\tau)\}^2$ and note that for a fixed $v\in \bbS_0^{d-1}(C s^*)$, the random variables $\{W_{\tau, v}\}_{\tau =1}^t$ are i.i.d. across the time points. Moreover, due to Assumption \ref{assumptions: contexts}\ref{item: context_subgaussian} and Lemma \ref{lemma: sub-exp and sub-gauss}, we have
 \[
 \norm{W_{\tau, v}}_{\psi_1} = \norm{\max_{i \in [K]} \{v^\top x_i(\tau)\}^2}_{\psi_1} \leq c_4 K \vartheta^2.
 \]
 
 Thus, $\{W_{\tau, v}\}_{\tau = 1}^t$ are i.i.d sub-exponential random variables. Recall, that $\phi_{\max}(C s^*, \Sigma_i) \leq \phi_u$ for all $i \in [K]$. The next lemma provides an upper bound on the moment generating function (MGF) of sub-Exponential random variables.

 \begin{lemma}{\citep[Lemma 2.8.1]{vershynin2018high}}
\label{lemma: MGF sub-exp}
Let $X$ be a mean-zero, sub-Exponential random variable. Then there exists positive constants $c_5, c_6$, such that for any $\lambda$ with $\abs{\lambda} \leq c_5 / \norm{X}_{\psi_1}$, the following is true:
\[
\bbE\{\exp(\lambda X)\} \leq \exp(c_6 \lambda^2 \norm{X}_{\psi_1}^2).
\]
\end{lemma}

Equipped with the above lemma we have the following;

\begin{equation}
\label{eq: concentration max eigen}
    \begin{aligned}
    \pr \left( \frac{1}{t} \sum_{\tau=1}^t W_{\tau,v} - \phi_u \geq \delta\right) & = \pr \left( \sum_{\tau =1}^t \{ W_{\tau,v} - \phi_u\} \geq \delta t\right)\\
    & = \pr \left(\exp \left\{ \mu \sum_{\tau =1}^t ( W_{\tau,v} - \phi_u)\right\} \geq e^{\mu \delta t} \right)\\
    & \leq e^{- \mu\delta t}  \prod_{\tau =1}^t \bbE \{e^{\mu (W_{\tau,v} - \phi_u)} \}.
    \end{aligned}
\end{equation}
For a fixed $\tau \in [t]$ we note the following;
\begin{equation*}
\begin{aligned}
     \bbE \{e^{\mu (W_{\tau,v} - \phi_u)}\} &\leq \sum_{i = 1}^K \bbE\{ e^{\mu [(v^\top x_i(\tau))^2 - \phi_u]}\}\\
     & \leq \sum_{i = 1}^K \bbE\{e^{\mu [(v^\top x_i(\tau))^2 - v^\top \Sigma_i v]}\}.
\end{aligned}     
\end{equation*}
For brevity let $\kappa_i \triangleq \norm{\{v^\top x_i(\tau)\}^2}_{\psi_1}$. If we choose $\mu \leq c_5/\max_{i \in [K]} \kappa_i$, then by Lemma \ref{lemma: MGF sub-exp} we have

\begin{equation*}
   \bbE \{e^{\mu (W_{\tau,v} - \phi_u)}\} \leq  \exp \left( c_6 \mu^2 \max_{i \in [K]} \kappa_i^2 + \log K \right).
\end{equation*}
Using the above inequality in Equation \eqref{eq: concentration max eigen}, it follows that
\begin{equation}
    \label{eq: concentration max eigen 2}
    \pr \left( \frac{1}{t} \sum_{\tau=1}^t W_{\tau,v} - \phi_u \geq \delta\right) \leq \exp \left( - \mu \delta t + c_6 t \mu^2 \max_{i \in [K]} \kappa_i^2 + t\log K\right).
\end{equation}

The right hand side of Equation \eqref{eq: concentration max eigen 2} is minimized at $\mu = \delta /(2 c_2 \max_{i\in [K]} \kappa_i^2)$ with the minimum value of 

\[
\exp \left( - \frac{\delta^2 t}{4 c_6 \max_{i \in  [K]} \kappa_i^2} + t \log K \right).
\]
If $\delta /(2 c_6 \max_{i\in [K]} \kappa_i^2) > c_5/ \max_{i \in [K]} \kappa_i^2 $, then the right hand side of Equation \eqref{eq: concentration max eigen 2} is minimized at $\mu = c_5/ \max_{i \in [K]} \kappa_i^2$ and we get 
\[
\pr \left( \frac{1}{t} \sum_{\tau=1}^t W_{\tau,v} - \phi_u \geq \delta\right) \leq \exp \left( - \frac{c_5 \delta t}{\max_i \kappa_i} + c_6 c_5^2 t + t \log K \right).
\]
Using the fact that $\delta /(2 c_6 \max_{i\in [K]} \kappa_i^2) > c_5/ \max_{i \in [K]} \kappa_i^2$, the right hand side of the above display can be upper bounded by
\[
\exp \left( - \frac{c_5 \delta t}{ 2 \max_{i} \kappa_i} + t \log K\right).
\]
Thus we have for all $\delta >0$
\begin{equation}
    \label{eq: concentration max eigen 3}
    \pr \left( \frac{1}{t} \sum_{\tau=1}^t W_{\tau,v} - \phi_u \geq \delta\right) \leq \exp \left( - \min\left\{ \frac{\delta^2 t}{4 c_6 \max_{i \in  [K]} \kappa_i^2}, \frac{c_5 \delta t}{2 \max_i \kappa_i} \right\} + t \log K\right).
\end{equation}
Next we set $\delta = c_7 \vartheta^2 \phi_u \log K $ for sufficiently large $c_7>0$. Then Equation \eqref{eq: concentration max eigen 3} yields

\begin{equation*}
    \label{eq: concentration max eigen 4}
    \pr \left( \frac{1}{t} \sum_{\tau=1}^t W_{\tau,v} - \phi_u \geq \delta\right) \leq \exp \left( -  c_8 t \log K\right).
\end{equation*}
Finally taking union bond over all vectors in $\cN_\varepsilon$ we get
\begin{equation}
\label{eq: concentration max eigen 5}
    \pr \left( \forall v \in \cN_\varepsilon : v^\top \widehat{\Sigma}_t v \geq  c_9 \vartheta^2 \phi_u \log K \right) \leq \exp\left\{ - c_8 t \log K + C s^* \log K + Cs^* \log(3d/\varepsilon)\right\}.
\end{equation}
Next, to prove the same for all $v \in \bbS_0^{d-1}(C s^*)$ we need the following lemma.
\begin{lemma}[maximum sparse eigenvalue on net]
\label{lemma: maximum sparse eigen on net}
Let $A$ be a $m \times m$ symmetric positive-definite matrix and $\varepsilon \in (0,1/3)$. Then, for $\varepsilon$-net $\cN_\varepsilon$ of $\;\bbS_0^{d-1}(s)$ constructed in greedy way, we have
\[
\max_{v \in \cN_\varepsilon }v^\top A v \leq \max_{v \in \bbS_0^{d-1}(C s^*)} v^\top A v \leq \frac{1}{1 - 3 \varepsilon} \max_{ v\in \cN_\varepsilon} v^\top A v.
\]
\end{lemma}
The proof of the above lemma is deferred to Appendix \ref{sec: proof maximum sparse eigen on net}. Now we set some $\varepsilon \in (0,1/3)$. In light of the above lemma we immediately have that
\begin{equation}
    \label{eq: concentration max eigen 6}
    \pr\left( \phi_{\max}(Cs^*; \widehat{\Sigma}_t) \geq \frac{c_9 \vartheta^2 \phi_u \log K}{1 - 3 \varepsilon}\right) \leq \exp\left\{ - c_8 t \log K + C s^* \log K + Cs^* \log(3d/\varepsilon)\right\}.
\end{equation}
Finally using Equation \eqref{eq: sparse min eigen concentration}, \eqref{eq: concentration max eigen 6} and Lemma \ref{lemma: minimum sparse eigen on net} we have
\begin{equation}
\begin{aligned}
 &\pr \left( \phi_{\min}(C s^*; \widehat{\Sigma}_t)\geq \frac{1}{4K \xi} - \frac{3 \varepsilon c_9\vartheta^2 \log K}{1 - 3 \varepsilon}  \phi_u\right)\\
 &\geq  1 - 2 \exp\{- c_2 \kappa^2(\xi, \vartheta, K)t + C s^* \log K + Cs^* \log(3d/\varepsilon)\}\\
 &\quad - \exp\left\{ - c_8 t \log K + C s^* \log K + Cs^* \log(3d/\varepsilon)\right\}.
\end{aligned}
\end{equation}
Now the result follows from taking $\varepsilon = \min\{1/4,  1/(24 \phi_u  \vartheta^2\xi K \log K + 3)\}$.
\end{proof}

\subsection{Proof of part \ref{item: empirical restrited eigen}}
\label{sec: proof part 2}
In this section we will show that the matrix $\widehat{\Sigma}_t$ enjoys the compatibility condition \eqref{eq: restricted eigen value cond} with high probability. This is equivalent to showing that the quantity
\[
\Psi(S^*; \widehat{\Sigma}_t) \triangleq \inf_{\delta \in \bbC_7(S^*)} \left(\frac{\delta^\top \widehat{\Sigma}_t \delta }{\norm{\delta}_1^2 } \right) s^*
= \phi_{\comp}(S^*; X_t)^2.
\]
is bounded away from 0 with high probability.
First we present the Transfer lemma \citep[Lemma 5]{oliveira2013lower} below.
\begin{lemma}[Transfer lemma]
\label{lemma: transfer lemma}

Suppose $\widehat{\Sigma}_t$ and $\Sigma$ are matrix with non-negative diagonal entries, and assume $\eta\in (0,1)$, $m \in [d]$ are such that 
\begin{equation}
\label{eq: transfer lemma condition}
    \text{$\forall v \in \bbR^d$ with $\norm{v}_0 \leq m, v^\top \widehat{\Sigma}_t v \geq (1-\eta) v^\top \Sigma v$.}
\end{equation}
Assume $D$ is a diagonal matrix whose diagonal entries $D_{j,j}$ are non-negative and satisfy $D_{j,j} \geq (\widehat{\Sigma}_t)_{j,j} - (1-\eta)\Sigma_{j,j}  $. Then
\begin{equation}
    \label{eq: transfer lemma result}
    \textbf{$\forall x \in \bbR^d, x^\top \widehat{\Sigma}_t x \geq (1- \eta) x^\top \Sigma x - \frac{\norm{D^{1/2}x}_1^2
}{m-1}$}.
\end{equation}
\end{lemma}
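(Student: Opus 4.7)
My plan is to use a Maurey-style random sparsification argument: given an arbitrary $x\in\bbR^d$, I construct a random vector $V$ with $\norm{V}_0\le m$ almost surely, $\bbE[V]=x$, and such that the quadratic form $V^\top (\widehat{\Sigma}_t - (1-\eta)\Sigma)V$ has an explicit expectation. Invoking the sparse hypothesis \eqref{eq: transfer lemma condition} in expectation then yields \eqref{eq: transfer lemma result}. The weights of the sampling distribution must be chosen proportional to $\sqrt{D_{j,j}}\,|x_j|$ in order for the leftover diagonal term to collapse into $\norm{D^{1/2}x}_1^2$ at the end.

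Concretely, after reducing to the case $D_{j,j}>0$ for all $j$ by a standard perturbation $D\leadsto D+\varepsilon I$ and $\varepsilon\downarrow 0$, set $Z:=\norm{D^{1/2}x}_1=\sum_j \sqrt{D_{j,j}}\,|x_j|$ and probabilities $p_j:=\sqrt{D_{j,j}}\,|x_j|/Z$. Draw $I_1,\ldots,I_m$ i.i.d.\ from $\{p_j\}$ and define
\[
V \;:=\; \frac{Z}{m}\sum_{i=1}^{m}\frac{\sign(x_{I_i})}{\sqrt{D_{I_i,I_i}}}\,e_{I_i}.
\]
A direct check gives $\bbE[V]=x$ and $\norm{V}_0\le m$. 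For an arbitrary symmetric $M$, splitting the double sum over $(i,i')$ into $i=i'$ and $i\ne i'$ and using independence of the draws yields
\[
\bbE[V^\top M V] \;=\; \frac{Z}{m}\sum_{j}\frac{|x_j|\,M_{j,j}}{\sqrt{D_{j,j}}} \;+\; \frac{m-1}{m}\,x^\top M x.
\]

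Now apply this identity to $M:=\widehat{\Sigma}_t-(1-\eta)\Sigma$. Since $\norm{V}_0\le m$ a.s., the hypothesis \eqref{eq: transfer lemma condition} gives $V^\top M V\ge 0$ a.s., hence $\bbE[V^\top M V]\ge 0$. Rearranging,
\[
x^\top M x \;\ge\; -\,\frac{Z}{m-1}\sum_{j}\frac{|x_j|\,M_{j,j}}{\sqrt{D_{j,j}}}.
\]
The coordinate-wise hypothesis $D_{j,j}\ge M_{j,j}$ implies $M_{j,j}/\sqrt{D_{j,j}}\le \sqrt{D_{j,j}}$, so the right-hand sum is bounded by $\sum_j |x_j|\sqrt{D_{j,j}}=Z$. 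This gives $x^\top M x\ge -Z^2/(m-1)=-\norm{D^{1/2}x}_1^2/(m-1)$, which is exactly \eqref{eq: transfer lemma result}.

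The main technical nuisance will be the moment computation in the identity for $\bbE[V^\top MV]$, where one has to be careful to track the independence structure and the sign factors; everything else is bookkeeping. A secondary, genuinely minor point is the handling of $j$ with $D_{j,j}=0$ (for which the assumption forces $M_{j,j}\le 0$, so these coordinates contribute non-positively and can be excluded from the sampling support via the $\varepsilon$-perturbation argument above).
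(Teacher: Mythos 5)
Your proposal is correct, and it is worth noting that the paper itself does not prove this lemma at all: it simply imports it as Lemma 5 of Oliveira (2013), so your write-up supplies a self-contained proof where the paper offers only a citation (and your Maurey-style sampling argument is in the same spirit as Oliveira's original transfer-principle proof). I checked the key moment identity: with $Z=\norm{D^{1/2}x}_1$, $p_j=\sqrt{D_{j,j}}\,|x_j|/Z$ and your $V$, each draw contributes $\bbE\bigl[\sign(x_{I_i})D_{I_i,I_i}^{-1/2}e_{I_i}\bigr]=x/Z$, so $\bbE[V]=x$; splitting the double sum over draws into the $m$ diagonal pairs and the $m(m-1)$ off-diagonal pairs and using independence gives exactly
\begin{equation*}
\bbE[V^\top M V]=\frac{Z}{m}\sum_j \frac{|x_j|M_{j,j}}{\sqrt{D_{j,j}}}+\frac{m-1}{m}\,x^\top M x,
\end{equation*}
with $M=\widehat{\Sigma}_t-(1-\eta)\Sigma$. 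Since $\norm{V}_0\le m$ almost surely, hypothesis \eqref{eq: transfer lemma condition} forces $V^\top MV\ge 0$, and the rearrangement together with $M_{j,j}/\sqrt{D_{j,j}}\le\sqrt{D_{j,j}}$ (valid since $D_{j,j}\ge M_{j,j}$, and trivially when $M_{j,j}\le 0$) yields $x^\top Mx\ge -Z^2/(m-1)$, which is \eqref{eq: transfer lemma result}. Your handling of the degenerate coordinates via $D\leadsto D+\varepsilon\bbI_d$ and $\varepsilon\downarrow 0$ is fine because $\norm{(D+\varepsilon\bbI_d)^{1/2}x}_1\to\norm{D^{1/2}x}_1$, and the case $x=0$ (equivalently $Z=0$ after perturbation) is trivial. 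Two cosmetic remarks: the argument, like the statement itself, implicitly needs $m\ge 2$ (the rearrangement divides by $m-1$), and since $\widehat{\Sigma}_t,\Sigma$ are not assumed symmetric you should note that quadratic forms only see the symmetric part of $M$, whose diagonal agrees with that of $M$, so nothing changes. Neither point is a gap.
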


Condition \eqref{eq: transfer lemma condition} basically demands that $\widehat{\Sigma}_t$ enjoys SRC condition with the sparsity parameter $m$. Then under the proper choice of diagonal matrix $D$ with sufficiently large diagonal elements $\{D_{j,j}\}_{j=1}^d$, Equation \eqref{eq: transfer lemma result} will yield the desired compatibility condition for $\widehat{\Sigma}_t$. We formally state the result in the following lemma:

\begin{proposition}[Empirical compatibility condition]
\label{prop: restricted eigenvalue empirical}
Assume the conditions of Proposition \ref{prop: empirical SRC} hold. Also assume that Assumption \ref{assumptions: contexts}\ref{item: sparse_eigenvalue} holds with $C  = C_0 \phi_u \vartheta^2 \xi K \log K$ for some sufficiently large universal constant $C_0>0$. Then there exists a positive constant $C_1$ such that the following is true:

\begin{align*}
&\pr \left( \Psi(S^*; \widehat{\Sigma}_t) \geq \frac{1}{ C_1\xi K } \right)\\
& =  1 - 2 \exp\{- c_2 \kappa^2(\xi, \vartheta, K)t + C s^* \log K + Cs^* \log(3d/\varepsilon)\}\\
 &\quad - 2\exp\left\{ - c_8 t \log K + C s^* \log K + Cs^* \log(3d/\varepsilon)\right\}
\end{align*}
with $\varepsilon = \min\{1/4,  1/(\tilde{c} \phi_u \vartheta^2 \xi K \log K + 3)\}$ for the same universal constant $\tilde{c}>0$ in Proposition \ref{prop: empirical SRC} and  $\kappa(\xi, \vartheta, K) = \min \{(4 c_3 K \xi \vartheta^2)^{-1}, 1/2\}$.
\end{proposition}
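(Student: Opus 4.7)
\medskip
\noindent\textbf{Proof plan for Proposition \ref{prop: restricted eigenvalue empirical}.} The strategy is to promote the sparse Riesz bounds of Proposition \ref{prop: empirical SRC} to a full compatibility estimate on the cone $\bbC_7(S^*)$ via the Transfer Lemma (Lemma \ref{lemma: transfer lemma}). Concretely, Proposition \ref{prop: empirical SRC} supplies, on a single high-probability event, both a uniform lower bound $\phi_{\min}(Cs^*;\widehat\Sigma_t) \geq 1/(8K\xi)$ and an upper bound $\phi_{\max}(Cs^*;\widehat\Sigma_t) \leq M$ with $M \asymp \vartheta^2\phi_u \log K$. The former lets us compare $\widehat\Sigma_t$ to a scaled identity on sparse vectors, which is exactly the input the Transfer Lemma needs; the latter controls the diagonal of $\widehat\Sigma_t$ and hence the corrective diagonal matrix $D$ appearing in the conclusion of that lemma.

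In detail, I would set $\alpha = 1/(8K\xi)$, $\Sigma = \alpha\, \bbI_d$, $m = Cs^*$, and pick $\eta = 1/2$ in Lemma \ref{lemma: transfer lemma}; then $v^\top \widehat\Sigma_t v \geq \alpha\|v\|_2^2 \geq (1-\eta)\,v^\top\Sigma v$ for every $\|v\|_0 \leq m$ on the good event. Since $(\widehat\Sigma_t)_{jj} = e_j^\top\widehat\Sigma_t e_j \leq \phi_{\max}(Cs^*;\widehat\Sigma_t) \leq M$ (as each $e_j$ is $1$-sparse and hence $Cs^*$-sparse), the diagonal choice $D_{jj} = M$ satisfies $D_{jj} \geq (\widehat\Sigma_t)_{jj} - (1-\eta)\Sigma_{jj}$. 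The lemma then yields, for every $x\in\bbR^d$,
\begin{equation*}
x^\top \widehat\Sigma_t x \;\geq\; \tfrac{\alpha}{2}\,\|x\|_2^2 \;-\; \frac{M\,\|x\|_1^2}{Cs^*-1}.
\end{equation*}

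Next I would specialize to $\delta\in\bbC_7(S^*)$. Since $\|\delta\|_1 \leq 8\|\delta_{S^*}\|_1 \leq 8\sqrt{s^*}\|\delta\|_2$, we get $\|\delta\|_2^2 \geq \|\delta\|_1^2/(64 s^*)$, and therefore
\begin{equation*}
\frac{\delta^\top\widehat\Sigma_t\delta}{\|\delta\|_1^2}\;\geq\; \frac{\alpha}{128\,s^*}\;-\;\frac{M}{Cs^*-1}.
\end{equation*}
Substituting $\alpha = 1/(8K\xi)$ and $M \asymp \vartheta^2\phi_u\log K$, one sees that choosing $C = C_0\,\phi_u\vartheta^2\xi K\log K$ with $C_0$ sufficiently large forces the second (negative) term to be at most half of the first, leaving a lower bound of order $1/(K\xi s^*)$. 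Multiplying through by $s^*$ delivers $\Psi(S^*;\widehat\Sigma_t) \geq 1/(C_1\xi K)$ for a suitable constant $C_1$. The probability statement then follows from a union bound on the two events of Proposition \ref{prop: empirical SRC}.

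The main obstacle, in my view, is purely the bookkeeping of constants: one must ensure that the $\varepsilon$ used in Proposition \ref{prop: empirical SRC} is compatible (so that $M$ remains of the stated order), that the choice of $\eta$ and of the slack factor separating $\alpha/(128 s^*)$ from $M/(Cs^*-1)$ is consistent with the final constants $C_0$ and $C_1$, and that the diagonal bound $D_{jj}\leq M$ holds on the same high-probability event that delivers the lower sparse eigenvalue bound. There is no new probabilistic content beyond Proposition \ref{prop: empirical SRC}; the argument is a deterministic Transfer-Lemma step applied on that event, followed by a cone argument to translate the $\ell_2$-based estimate into the $\ell_1$-normalized compatibility quantity $\Psi(S^*;\widehat\Sigma_t)$.
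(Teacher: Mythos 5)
Your proposal is correct and follows essentially the same route as the paper: on the intersection of the two high-probability events from Proposition \ref{prop: empirical SRC}, apply the Transfer Lemma with $\Sigma$ a multiple of the identity and a diagonal $D$ controlled by the maximum sparse eigenvalue, then use $\norm{\delta}_1 \leq 8\sqrt{s^*}\norm{\delta}_2$ on the cone $\bbC_7(S^*)$ and take $C_0$ large enough to absorb the $\ell_1$ correction term, finishing with a union bound. The only differences (your $\Sigma = \tfrac{1}{8K\xi}\bbI_d$ and constant diagonal $D_{jj}=M$ versus the paper's $\Sigma=\tfrac{1}{4K\xi}\bbI_d$ and $D=\diag(\widehat\Sigma_t)$) affect only the value of $C_1$, not the argument.
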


\begin{proof}
As suggested before we will make use of Lemma \ref{lemma: transfer lemma}. Towards this, we set $\Sigma = \frac{1}{4K\xi} \bbI_d$ and $D = \diag(\widehat{\Sigma}_t)$. Next, we define the following two events:
\[
\cG_{t,1} := \left\{ \phi_{\max} (Cs^*; \widehat{\Sigma}_t) \leq \frac{c_9 \vartheta^2 \phi_u \log K}{1 - 3 \varepsilon}\right\},
\]
\[
\cG_{t,2} := \left\{\phi_{\min}(C s^*; \widehat{\Sigma}_t)\geq \frac{1}{8K \xi}\right\},
\]
where the constants $\varepsilon$ and $c_9$ are same as in Proposition \ref{prop: empirical SRC}. Under $\cG_{t,1}$ and $\cG_{t,2}$, the inequality in Equation \eqref{eq: transfer lemma condition} holds with $\eta = 1/2$ and $m = C s^*$. Also, due construction of $D$, we trivially have
\[
D_{j,j} \geq (\widehat{\Sigma}_t)_{j,j} - (1-\eta)\Sigma_{j,j}.
\]
Lastly, note that 
\begin{equation}
    \label{eq: max diag}
    \max_{j \in [d]}D_{j,j} = \max_{j \in [d]} (\widehat{\Sigma}_t)_{j,j}  = \max_{j \in [d]} e_j^\top \widehat{\Sigma}_t e_j \leq \frac{c_9 \vartheta^2 \phi_u \log K}{1 - 3 \varepsilon}
\end{equation}
under $\cG_{t,1}$. Equipped with Lemma \ref{lemma: transfer lemma},
under $\cG_{t,1}$ and $\cG_{t,2}$, for all $x \in \bbC_7(S^*) \cap \bbS^{d-1}$ we have the following:
\begin{equation*}
    \begin{aligned}
     x^\top \widehat{\Sigma}_t x &\geq \frac{1}{8 K \xi} - \frac{\norm{D^{1/2} x}_1^2}{C s^* -1}\\
     & \geq \frac{1}{8K\xi} - \frac{ \left(\frac{c_9 \vartheta^2 \phi_u \log K}{1 - 3 \varepsilon}\right) \norm{x}_1^2 }{C s^* -1}\\
     & \geq \frac{1}{8K \xi} -  \frac{ \left(\frac{c_9 \vartheta^2 \phi_u \log K}{1 - 3 \varepsilon}\right) 64 s^* }{C s^* -1}.
    \end{aligned}
\end{equation*}
The last inequality follows form the fact that
\begin{equation}
    \label{eq: l1 norm break}
    \norm{x}_1 = \norm{x_{S^*}}_1 + \norm{x_{(S^*)^c}}_1 \leq 8 \norm{x_{S^*}}_1 \leq 8 \sqrt{s^*} \norm{x_{S^*}}_2 \leq 8 \sqrt{s^*}.
\end{equation}
Thus, if $C \gtrsim \frac{\phi_u \vartheta^2 \xi  K \log K}{1 - 3 \varepsilon} + \frac{1}{s^*}$ then $x^\top \widehat{\Sigma}_t x \geq 1/(16 K \xi)$. Also, note that from the choice of $\varepsilon$ in Proposition \ref{prop: empirical SRC}, we have $\varepsilon < 1/4$. This further tells that if $C = C_0 \phi_u  \vartheta^2 \xi K \log K$ for large enough $C_0>0$, then 
$$
\inf_{\delta \in \bbC_7(S^*)} \frac{\delta^\top \widehat{\Sigma}_t \delta}{\norm{\delta}_2^2} \geq \frac{1}{16 K \xi}.
$$
Then, the result follows from Proposition \ref{prop: empirical SRC}. Using this and Equation \eqref{eq: l1 norm break} we also have

\[
\Psi(S^*; \widehat{\Sigma}_t) \geq 
\frac{1}{64}
\inf_{\delta \in \bbC_7(S^*)} \frac{\delta^\top \widehat{\Sigma}_t \delta}{\norm{\delta}_2^2} \geq 
\frac{1}{ C_1 \xi K  }
\]
where $C_1 = 1024$. Finally, the result follows from conditioning over the events $\cG_{t,1}$ and $\cG_{t,2}$ and using Proposition \ref{prop: empirical SRC}.

\end{proof}

\subsection{Proof of part \ref{item: Bayesian concentration}}
\label{sec: proof part 3}
In this section we will establish the desired regret bound in Theorem \ref{thm: regret LSB}. The main tools that has been used to prove the regret bound is the Bayesian contraction in high-dimensional linear regression problem. In particular, we will use Theorem \ref{thm: posterior contraction} to control the $\ell_1$-distance between $\tilde{\beta}_t$ and $\beta^*$ at each time point $t \in [T]$. 

We recall that $X_{t} = (x_{a_1}(1), \ldots, x_{a_t}(t))^\top$. Also, note that the sequence $\{x_{a_\tau}(\tau)\}_{\tau =1}^t$ forms an adapted sequence of observations, i.e., $x_{a_\tau}(\tau)$ may depend on the history $\{x_{a_u}(u), r(u)\}_{u=1}^{\tau -1}$. Also, recall that $\{\epsilon(\tau)\}_{\tau=1}^t$ are mean-zero $\sigma$-sub-Gaussian errors.

\begin{lemma}[Bernstein Concentration]
\label{lemma: Bernstein}
Let $\{D_k, \cF_k\}_{k=1}^\infty$ be a martingale difference sequence, and suppose that $D_k$ is a $\sigma$-sub-Gaussian in adapted sense, i.e., for all $\alpha \in \bbR, \bbE[e^{\alpha D_k} \mid \cF_{k-1}] \leq e^{\alpha^2 \sigma^2/2}$ almost surely. Then, for all $t\geq 0$, $\pr \left( \abs{\sum_{k=1}^t D_k} \geq \delta\right) \leq 2 \exp\{- \delta^2/(2 t \sigma^2)\}$.
\end{lemma}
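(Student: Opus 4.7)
The plan is to apply the standard Chernoff--Cramér method adapted to the martingale setting, iteratively peeling off terms using the tower property together with the conditional sub-Gaussian MGF bound. Write $S_t = \sum_{k=1}^t D_k$ and fix $\alpha > 0$. By Markov's inequality applied to $e^{\alpha S_t}$,
\[
\Pr(S_t \geq \delta) \leq e^{-\alpha \delta}\, \bbE[e^{\alpha S_t}].
\]

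Next, I would control $\bbE[e^{\alpha S_t}]$ by induction on $t$. Since $S_{t-1}$ is $\cF_{t-1}$-measurable, the tower property gives
\[
\bbE[e^{\alpha S_t}] = \bbE\!\left[e^{\alpha S_{t-1}}\, \bbE[e^{\alpha D_t}\mid \cF_{t-1}]\right].
\]
The hypothesis that $D_t$ is conditionally $\sigma$-sub-Gaussian forces the inner conditional expectation to be bounded almost surely by $e^{\alpha^2\sigma^2/2}$, which can therefore be pulled outside to yield $\bbE[e^{\alpha S_t}] \leq e^{\alpha^2\sigma^2/2}\, \bbE[e^{\alpha S_{t-1}}]$. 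Iterating and using $S_0 = 0$ gives $\bbE[e^{\alpha S_t}] \leq e^{t\alpha^2\sigma^2/2}$. Plugging this into the Chernoff bound yields $\Pr(S_t \geq \delta) \leq \exp(t\alpha^2\sigma^2/2 - \alpha\delta)$, and optimizing over $\alpha > 0$ with $\alpha = \delta/(t\sigma^2)$ produces the one-sided tail bound $\Pr(S_t \geq \delta) \leq \exp(-\delta^2/(2t\sigma^2))$.

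Finally, note that $\{-D_k, \cF_k\}$ is also a martingale difference sequence, and since the conditional sub-Gaussian condition is symmetric in the sign of $\alpha$, the same argument applied to $\{-D_k\}$ gives $\Pr(S_t \leq -\delta) \leq \exp(-\delta^2/(2t\sigma^2))$. A union bound over the two tails provides the factor of $2$ and completes the proof.

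There is no substantive obstacle; this is the textbook Azuma--Hoeffding argument adapted to conditionally sub-Gaussian increments. The only subtlety worth flagging explicitly is that the conditional MGF bound $\bbE[e^{\alpha D_k}\mid \cF_{k-1}]\leq e^{\alpha^2\sigma^2/2}$ is an almost-sure inequality between random variables, which is exactly what allows one to peel off $D_t$ inside the outer expectation without losing any constant.
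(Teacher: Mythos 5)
Your proof is correct: the Chernoff--Cramér argument with the tower property, peeling off each increment via the almost-sure conditional bound $\bbE[e^{\alpha D_k}\mid\cF_{k-1}]\leq e^{\alpha^2\sigma^2/2}$, optimizing at $\alpha=\delta/(t\sigma^2)$, and symmetrizing with a union bound is exactly the standard Azuma--Hoeffding-type derivation. The paper does not write this argument out at all; it simply invokes Theorem 2.19 of Wainwright (2019) with $\alpha_k=0$ and $\nu_k=\sigma$, and your self-contained proof is precisely the argument underlying that cited result, so there is no substantive difference in approach.
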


\begin{proof}
Proof of Lemma \ref{lemma: Bernstein} follows from Theorem 2.19 of \cite{wainwright2019high} by setting $\alpha_k = 0$ and $\nu_k = \sigma$ for all $k$.
\end{proof}
Lemma \ref{lemma: Bernstein} is the main tool that is used to control the correlation between $\epsilon_t : = (\epsilon(1), \ldots, \epsilon(t))^\top$ and the chosen contexts $X_{t}$ which is important to control the Bayesian contraction of the posterior distribution in each round. To elaborate, let $X_{t}^{(j)}$ be the $j$th column for $j \in [d]$ and define $D_{t,j} := \epsilon(t) x_{a_t,j}(t)$. Note that for a fixed $j \in [d]$, $\{D_{\tau,j}\}_{\tau=1}^t$ forms a martingale difference sequence with respect to the filtration $\{\cF_\tau\}_{\tau =1}^{t-1}$ with $\cF_\tau := \sigma(\cH_{\tau })$ is the $\sigma$-algebra generated by $\cH_\tau$ and $\cF_1 = \emptyset$. Also note that 

\[
\bbE(e^{\alpha D_{t,j}} \mid \cF_{t-1}) \leq \bbE\{ e^{\alpha^2 \sigma^2 x_{a_t, j}^2(t) / 2}\} \leq \bbE\{ e^{\alpha^2 \sigma^2 \xmax^2/2}\}. 
\]
Thus, using Lemma \ref{lemma: Bernstein}, we have the following proposition:

\begin{proposition}[Lemma EC.2, \cite{bastani2020online}]
\label{prop: error-design correlation}
Define the event
\[
\cT_t(\lambda_0(\gamma)) :=  \left\{ \max_{j \in [d]} \frac{ \abs{\epsilon_t^\top X_{t}^{(j)}}}{t} \leq \lambda_0(\gamma) \right\},
\]
where $\lambda_0(\gamma) =  \xmax \sigma \sqrt{(\gamma^2 +  2\log d)/ t}$. Then we have $\pr \left\{ \cT_t(\lambda_0(\gamma)) \right\} \geq 1 - 2 \exp( - \gamma^2/2)$.
\end{proposition}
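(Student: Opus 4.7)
The plan is to combine the martingale Bernstein inequality stated in Lemma \ref{lemma: Bernstein} with a union bound over the $d$ coordinates. The setup already in the paragraph preceding the proposition does most of the work: for each fixed $j\in[d]$, the sequence $D_{\tau,j} := \epsilon(\tau)\, x_{a_\tau, j}(\tau)$ is identified as a martingale difference sequence with respect to the filtration $\{\cF_\tau\}$, since $a_\tau$ and $x_{a_\tau}(\tau)$ are $\cF_{\tau-1}$-measurable under the natural enlargement, and $\epsilon(\tau)$ is independent of $\cF_{\tau-1}$ with mean zero.

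The first step I would execute is verifying the adapted sub-Gaussian parameter. Conditionally on $\cF_{\tau-1}$, the multiplier $x_{a_\tau, j}(\tau)$ is a bounded constant of magnitude at most $\xmax$ (by Assumption \ref{assumptions: contexts}\ref{item: context_bound}), so for any $\alpha\in\bbR$,
\[
\bbE\bigl[e^{\alpha D_{\tau,j}} \mid \cF_{\tau-1}\bigr] = \bbE\bigl[e^{\alpha\, x_{a_\tau,j}(\tau)\, \epsilon(\tau)} \mid \cF_{\tau-1}\bigr] \leq \exp\!\left(\tfrac{1}{2}\alpha^2 x_{a_\tau,j}(\tau)^2 \sigma^2\right) \leq \exp\!\left(\tfrac{1}{2}\alpha^2 \xmax^2 \sigma^2\right),
\]
using Assumption \ref{assumptions: noise}. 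Hence $\{D_{\tau,j},\cF_\tau\}$ is a $(\xmax\sigma)$-sub-Gaussian martingale difference sequence in the adapted sense.

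Next I would apply Lemma \ref{lemma: Bernstein} with sub-Gaussian constant $\xmax\sigma$ to obtain, for any $\delta>0$,
\[
\pr\!\left(\abs{\epsilon_t^\top X_t^{(j)}} \geq \delta\right) = \pr\!\left(\abs{\textstyle\sum_{\tau=1}^t D_{\tau,j}} \geq \delta\right) \leq 2\exp\!\left(-\frac{\delta^2}{2t\,\xmax^2 \sigma^2}\right).
\]
Choosing $\delta = t\,\lambda_0(\gamma) = \xmax\sigma\sqrt{t(\gamma^2 + 2\log d)}$ makes the exponent equal to $-(\gamma^2 + 2\log d)/2$, so the per-coordinate bound is $2\exp(-\gamma^2/2)/d$.

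Finally, a union bound over $j\in[d]$ gives
\[
\pr\!\left(\cT_t(\lambda_0(\gamma))^c\right) = \pr\!\left(\max_{j\in[d]} \frac{\abs{\epsilon_t^\top X_t^{(j)}}}{t} > \lambda_0(\gamma)\right) \leq d \cdot \frac{2\exp(-\gamma^2/2)}{d} = 2\exp(-\gamma^2/2),
\]
which is the desired conclusion. There is no real obstacle here: the only subtle point is being careful that the sub-Gaussian property of $D_{\tau,j}$ holds \emph{conditionally} on $\cF_{\tau-1}$ (so that Lemma \ref{lemma: Bernstein} applies to the dependent bandit data), and this is exactly what the preceding paragraph establishes via the almost-sure bound $|x_{a_\tau,j}(\tau)| \leq \xmax$.
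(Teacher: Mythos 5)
Your proposal is correct and matches the paper's own argument: the paper proves this proposition exactly by treating $D_{\tau,j}=\epsilon(\tau)x_{a_\tau,j}(\tau)$ as a martingale difference sequence that is conditionally $(\xmax\sigma)$-sub-Gaussian (using the almost-sure bound $\abs{x_{a_\tau,j}(\tau)}\leq \xmax$ and Assumption \ref{assumptions: noise}), then applying Lemma \ref{lemma: Bernstein} with $\delta = t\lambda_0(\gamma)$ and a union bound over the $d$ coordinates. Your calculation of the exponent and the cancellation of the $2\log d$ term against the union bound is exactly the intended derivation, so there is nothing to add.
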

The proof of the above proposition mainly relies on the martingale difference structure and Lemma \ref{lemma: bernstein}. It is important to mention that the proof does not depend on any particular algorithm.

For notational brevity we define $\norm{X_{t}} : = \max_{j \in [d]} \sqrt{(X_{t}^\top X_{t})_{j,j}}$. 
Next, we will set $\gamma = \gamma_t : =\sqrt{ 2 \log t}$. Hence by Proposition \ref{prop: error-design correlation} we have $\pr \left\{ \cT_t(\lambda_0(\gamma_t))\right\} \geq 1 - 2 t^{-1}$. Also recall that, under $\cG_{t,1}$ and $\cG_{t,2}$, we have 

\begin{equation}
\label{eq: norm_X bound}
    \frac{1}{\sqrt{8 K \xi}}\leq \norm{X_{t}}/\sqrt{t} \leq \sqrt{4c_9 \phi_u \vartheta^2 \log K}.
\end{equation}
Also, under $\cG_{t,2}\cap \cT_t(\lambda_0(\gamma_t))$ it follows that 
\begin{equation}
    \label{eq: feature-noise corr}
    \begin{aligned}
    \max_{j \in [d]} \frac{\abs{\epsilon_t^\top X_{t}^{(j)}}}{\sigma} & \leq  \xmax \sqrt{2t(\log d + \log t)}= \overline{\lambda}_t
    \end{aligned}
\end{equation}
Now, we are ready to present the proof of the main regret bound.

\subsubsection*{Main regret bound}
Recall the definition of regret is $R(T) = 
\sum_{t = 1}^T \Delta_{a_t}(t)$, where $\Delta_{a_t}(t) = x_{a_t^*}(t)^\top \beta^* - x_{a_t}(t)^\top \beta^*$.
Next, we partition the whole time horizon $[T]$ in to two parts, namely $\{t : 1\leq t \leq T_0\}$ and $\{t : T_0 \leq t \leq T\}$, where $T_0$ will be chosen later. Thus, the regret can be written as 
$$ 
R(T) = \underbrace{\sum_{t=1}^{T_0} \Delta_{a_t}(t)}_{R(T_0)} + \underbrace{\sum_{t = T_0+1}^T \Delta_{a_t}(t)}_{\tilde{R}(T)}.
$$
All notations for expectation operators and probability measures are given in Appendix \ref{sec: proof posterior contraction}.

Now by Assumption \ref{assumptions: contexts}\ref{item: context_bound} and \ref{assumptions: arm-separation}\ref{item: beta-l1} we have the following inequality: 
\begin{equation}
    \label{eq: R(T_0) bound}
    \bbE\{ R(T_0) \} \leq 2 \xmax \bmax T_0.
\end{equation}
Next, we focus on the term $\tilde{R}(T)$. First, we define a few quantities below:

\begin{equation}
\label{eq: uniform compatibility numbers}
\begin{aligned}
&\overline{\phi}_t(s) : = \inf_\delta \left\{ \frac{\norm{X_{t} \delta}_2 \abs{S_\delta}^{1/2}}{t^{1/2} \norm{\delta}_1} : 0 \neq \abs{S_\delta} \leq s\right\},\\
&\widetilde{\phi}_t(s) : =  \inf_\delta \left\{ \frac{\norm{X_{t} \delta}_2 }{t^{1/2} \norm{\delta}_2} : 0 \neq \abs{S_\delta} \leq s\right\}.
\end{aligned}
\end{equation}

Now set 
\[
\overline{\psi}_t(S) = \bar{\phi}_t \left( \left( 2 + \frac{40}{A_4} + \frac{128 A_4^{-1}\xmax^2}{\Psi(S, \widehat{\Sigma}_t)} \right) \abs{S} \right) ,
\]

\[
\widetilde{\psi}_t(S) = \widetilde{\phi}_t \left( \left( 2 + \frac{40}{A_4} + \frac{128 A_4^{-1} \xmax^2}{\Psi(S, \widehat{\Sigma}_t)} \right) \abs{S} \right).
\]
Note that $\bar{\phi}_t(s) \geq \widetilde{\phi}_t(s)$, hence $\bar{\psi}_t(S) \geq \widetilde{\psi}_t(S)$. 

Recall that 
\begin{equation}
    \label{eq: lambda range}
    5 \overline{\lambda}_t/3 \leq \lambda_t \leq 2 \overline{\lambda}_t,
\end{equation}
under $\cG_{t,1}$ and $\cG_{t,2}$. Also define the following events:
\[
\cE_t := \left\{ \norm{\tilde{\beta}_{t+1} - \beta^*}_1 \leq   Q_4 \sigma\xmax   K \xi  (D_* + s^*) \sqrt{\frac{\log d + \log t}{t}}\right\}.
\]
where $D_* = \{1 + (40/A_4) + 128 A_4^{-1}  \xmax^2/\Psi(S^*, \widehat{\Sigma}_t)\}s^*$ and $Q_4$ is large enough universal constant as specified in Theorem \ref{thm: posterior contraction}.
Also we have

\[
\bar{\psi}_t(S)  \leq \bar{\phi}_t \left( \left( 2 + \frac{40}{A_4} + \frac{64 A_4^{-1}\xmax^2}{\Psi(S, \widehat{\Sigma}_t)} \frac{\lambda}{\bar{\lambda}} \right) \abs{S} \right), \text{and}
\]

\[
\widetilde{\psi}_t(S)  \leq \widetilde{\phi}_t \left( \left( 2 + \frac{40}{A_4} + \frac{64 A_4^{-1}\xmax^2}{\Psi(S, \widehat{\Sigma}_t)} \frac{\lambda}{\bar{\lambda}} \right) \abs{S} \right).
\]

 Next, by Proposition \ref{prop: restricted eigenvalue empirical}, the event 
\[
\cG_{t,3}:= \left\{ \Psi(S^*; \widehat{\Sigma}_t) \geq \frac{1}{C_1  \xi K }\right\}
\]
holds with probability of at least  $1 - 2 \exp\{- c_2 \kappa^2(\xi, \vartheta, K)t + C s^* \log K + Cs^* \log(3d/\varepsilon)\}
 \quad - 2\exp\left\{ - c_8 t \log K + C s^* \log K + Cs^* \log(3d/\varepsilon)\right\}$. For, notational brevity, we define
 
 $$\tilde{C} := C_1  \xi K .$$
 Also, define the event 
 \[
 \cG_{t,4}:= \left\{\widetilde{\psi}_t^2(S^*) \geq \frac{1}{8 K \xi} \right\}.
 \]
Noting that $\widetilde{\psi}_t^2(S^*) = \phi_{\min}(\tilde{C}_1 s^*; \widehat{\Sigma}_t)$ with 
\[
\tilde{C}_1 = 2 + \frac{40}{A_4} + 128 A_4^{-1}\xmax^2 \tilde{C},
\]
an argument similar to the proof of Proposition \ref{prop: empirical SRC} yields
 \begin{equation}
    \label{eq: psi_tilde}
    \begin{aligned}
    \pr\left( \cG_{t,4}\right) \geq & \; 1-  2 \exp\{- c_2 \kappa^2(\xi, \vartheta, K)t + \tilde{C}_1 s^* \log K + \tilde{C}_1 s^* \log(3d/\varepsilon)\}\\
& - \exp\left\{ - c_8  \log (K) t + \tilde{C}_1 s^* \log K + \tilde{C}_1 s^* \log(3d/\varepsilon)\right\}.
    \end{aligned}
\end{equation}


Finally, Using Proposition \ref{prop: error-design correlation} with $\gamma = \gamma_d$ and the result of Theorem \ref{thm: posterior contraction}, we have the following:

\begin{equation}
    \label{eq: prob_Et^c}
    \begin{aligned}
    \pr(\cE_t^c) & = \bbE_{X_{t}} \bbE_t^X \left( \ind_{\cE_t^c}\right)\\
     & = \bbE_{X_{t}} \bbE_{t,\br_t}^X\left\{ \Pi_t^X \left(\cE_t^c \mid  \br_t \right)\right\} \\
    & = \bbE_{X_{t}} \bbE_{t,\br_t}^X\left\{ \Pi_t^X \left(\cE_t^c \mid  \br_t \right)\ind_{\cT_t(\lambda_0(\gamma_t))\cap \cG_{t,2}} \right\} + \bbE_{X_{t}} \bbE_{t,\br_t}^X\left\{ \Pi_t^X \left(\cE_t^c \mid  \br_t \right) \ind_{\cT_t^c(\lambda_0(\gamma_t)) \cup \cap \cG_{t,2}^c}\right\} \\
    & \leq \frac{M_1}{d^{s^*}} + \frac{2}{t} + 2 \exp\{- c_2 \kappa^2(\xi, \vartheta, K)t + C s^* \log K + Cs^* \log(3d/\varepsilon)\} \\
    & \quad + \exp\left\{ - c_8 t \log K + C s^* \log K + Cs^* \log(3d/\varepsilon)\right\}
    \end{aligned}
\end{equation}
for some large universal constant $M_1>0$.
Next, let $\cG_t = \cap_{i =1}^4 \cG_{t,i}$. Under the event $\cE_t \cap \cG_t$, we have
\[
D_* +s ^*\leq \underbrace{\left(2 + \frac{40}{A_4} + \frac{128 C_1  K \xi \xmax^2 }{A_4}\right)}_{:= \rho} s^* ,
\]
and,
\[
\norm{\tilde{\beta}_{t+1} - \beta^*}_1 \leq M_2 \rho \sigma \xmax   \xi K  \left\{\frac{ s^{*2}(\log d + \log t)}{t}\right\}^{1/2},
\]
where  $M_2$ is an universal constant depending on $A_4$.
Now we set 
\[
\delta_t =  M_2 \rho \sigma \xmax^2  \xi K  \left\{\frac{ s^{*2}(\log d + \log t)}{t}\right\}^{1/2}.
\]
It follows that under $\cE_{t-1} \cap \cG_{t-1}$, we have the following almost sure inequality:
\begin{align*}
    \Delta_{a_t}(t)  &= x_{a_t^*}^\top(t) \beta^* - x_{a_t}^\top (t) \beta^*\\
    & = x_{a_t^*}^\top (t) \beta^*  - x_{a_t^*}^\top(t) \tilde{\beta}_t + \underbrace{(x_{a_t^*}^\top(t) \tilde{\beta}_t -  x_{a_t}^\top (t) \tilde{\beta}_t)}_{\leq 0} +  x_{a_t^*}^\top (t) \tilde{\beta}_t - x_{a_t}^\top (t) \beta^*\\
    & \leq \norm{x_{a_t^*}(t)}_\infty \Vert \tilde{\beta}_t - \beta^*\Vert_1 + \norm{x_{a_t} (t)}_\infty \Vert \tilde{\beta}_t - \beta^*\Vert_1\\
    & \leq 2 \delta_{t-1}.
\end{align*}

Finally define the event 
\[
\cM_t : = \left\{ x_{a_t^*}^\top \beta^* > \max_{i \neq a_{t}^*} x_{a_t}^\top \beta^* + h_{t-1} \right\}.
\]
Under $\cM_t \cap \cE_{t-1} \cap \cG_{t-1}$, we have the following for any $i \neq a_t^*$:
\begin{align*}
    x_{a_t^*}^\top(t) \tilde{\beta}_t - x_i^{\top}(t) \tilde{\beta}_t & = \innerprod{x_{a_t^*}(t), \tilde{\beta}_t - \beta^*} + \innerprod{x_{a_t}(t) - x_i(t), \beta^*} + \innerprod{x_i(t), \beta^* - \tilde{\beta}_t}\\
    & \geq - \delta_{t-1} + h_{t-1} -  \delta_{t-1}.
\end{align*}
Thus, if we set $h_{t-1} = 3  \delta_{t-1}$ then $ x_{a_t^*}^\top(t) \tilde{\beta}_t - \max_{i \neq a_t^*}x_i^{\top}(t) \tilde{\beta}_t \geq \delta_{t-1}$. As a result, in $t$th round the regret is 0 almost surely as the optimal arm will be chosen with probability 1.
Thus, finally using Assumption \ref{assumptions: arm-separation}\ref{item: margin-cond}, we have

\begin{equation}
    \label{eq: expected summand}
    \begin{aligned}
     \bbE(\Delta_{a_t}(t)) & = \bbE\{ \Delta_{a_t}(t) \ind_{\cM_t^c}\}\\
     & = \bbE\{ \Delta_{a_t}(t) \ind_{\cM_t^c \cap \cE_{t-1} \cap \cG_{t-1}}\} + \bbE\{ \Delta_{a_t}(t) \ind_{\cM_t^c \cap (\cE_{t-1} \cap \cG_{t-1})^c}\}\\
     & \leq 2 \delta_{t-1} \pr (\cM_t^c) + 2 \xmax \bmax \pr(\cE_t^c \cup \cG_t^c) \\
     & \leq 2  \delta_{t-1} \pr(\cM_t^c) + 
     \frac{2 M_1 \xmax \bmax}{d^{s^*}} + \frac{2 \xmax \bmax}{d} \\
     & \quad  + M_3 \xmax \bmax \exp\{- c_2 \kappa^2(\xi, \vartheta, K)t + D s^* \log K + D s^* \log(3d/\varepsilon)\}\\
    & \quad + M_4 \xmax \bmax\exp\left\{ - c_8  \log (K) t + D s^* \log K + D s^* \log(3d/\varepsilon)\right\},
    \end{aligned}
\end{equation}
where $M_3, M_4$ are large enough universal positive constants and $D = \max \{C, \tilde{C}_1\} = \Theta(\phi_u \vartheta^2 \xi K \log K)$. Thus, if we set 
\begin{equation}
\label{eq: T_0}
T_0 = M_5 \max \left\{ \frac{1}{\kappa^2(\xi, \vartheta, K)}, \frac{1}{\log K} \right\}(D s^* \log K + D s^* \log(3d/\varepsilon)),
\end{equation}
for some large universal constant $M_5>0$. Thus, we have

\[
\bbE\{\tilde{R}(T)\} \leq  \underbrace{2 \sum_{t = T_0 + 1}^ T \delta_{t-1} \pr(\cM_t^c)}_{I_\omega} + M_6 \xmax \bmax \exp \left\{- M_7 (D s^* \log K + D s^* \log(3d/\varepsilon)) \right\} + O(\log T).
\]
Recall that 
\[
\delta_t =  M_2 \rho\sigma \xmax^2  \xi K  \left\{\frac{ s^{*2}(\log d  + \log t)}{t}\right\}^{1/2}.
\]

For $\omega \in [0,1]$ we have
\begin{equation}
    \label{eq: I_w 0-1}
    \begin{aligned}
     I_\omega & \leq  2 \sum_{t = T_0 + 1}^ T \delta_{t-1} \left(\frac{3 \delta_{t-1}}{\Delta_*}\right)^\omega\\
     & \asymp \frac{\left\{ 3 M_2 \rho\sigma \xmax^2  \xi K   \right\}^{1+\omega} s^{* 1+ \omega} }{\Delta_*^\omega} \int_{T_0}^T (\log d + \log u)^{\frac{1+ \omega}{2}}u^{- \frac{1 + \omega}{2}}\; du\\
     & \lesssim  \begin{cases}
     \frac{\left [ 3 M_2 \rho\sigma \xmax^2  \xi K    \right]^{1+\omega} s^{* 1+ \omega} (\log d)^{\frac{1+ \omega}{2}} T^{\frac{1 - \omega}{2}}}{\Delta_*^\omega}, & \text{for}\; \omega \in [0,1),\\
     
     \frac{\left [ 3 M_2 \rho\sigma \xmax^2  \xi K \right]^{2} s^{* 2} (\log d + \log T) \log T}{\Delta_*},  & \text{for}\; \omega =1.
     
     \end{cases}
    \end{aligned}
\end{equation}
For $\omega \in (1, \infty) $ we have
\begin{equation}
    \label{eq: I_w 1-infty 1}
    \begin{aligned}
     I_\omega \leq 2 \sum_{t = T_0 + 1}^ T \delta_{t-1} \min \left\{1 , \left(\frac{3 \delta_{t-1}}{\Delta_*}\right)^\omega\right\}.
    \end{aligned}
\end{equation}
Note that 

\[
\frac{3 \delta_{t-1}}{\Delta_*} \leq 1 \Rightarrow t \geq T_1 : =  \left [ 3 M_2 \rho\sigma \xmax^2  \xi K    \right]^2 \frac{s^{*2} \log d }{\Delta_*^2} + 1.
\]
Thus. from Equation \eqref{eq: I_w 1-infty 1} we have 
\begin{align*}
    I_\omega &\leq 2 \sum_{t = T_0 + 1}^{T_1} \delta_{t-1}  + 2\sum_{t = T_1+1}^T \delta_{t-1} \left( \frac{3 \delta_{t-1}}{\Delta_*}\right)^\omega \\
    & \leq 2 \int_{T_0}^{T_1}M_2 \rho\sigma \xmax^2  \xi K   \left\{\frac{ s^{*2}(\log d + \log u)}{u}\right\}^{1/2}\; du + 
     \quad 2  \sum_{t = T_1+1}^T \delta_{t-1} \left( \frac{3 \delta_{t-1}}{\Delta_*}\right)^\omega\\
     & \lesssim 4 \left[M_2 \rho\sigma \xmax^2  \xi K   \right]^2 \left\{ \frac{s^{*2}  (\log d + \log T)}{\Delta_*} \right\} + 2 J_\omega,
\end{align*}
where $J_\omega :=  \sum_{t = T_1+1}^T \delta_{t-1} \left( \frac{3 \delta_{t-1}}{\Delta_*}\right)^\omega$.

Finally, we give bound on the term $J_\omega$:

\begin{equation}
    \label{eq: J_w}
    \begin{aligned}
     J_\omega & = \sum_{t = T_1 + 1}^T \delta_{t-1} \left( \frac{3 \delta_{t-1}}{\Delta_*}\right)^\omega\\
     & = \sum_{t =2}^T \delta_{t-1} \left( \frac{3 \delta_{t-1}}{\Delta_*}\right)^\omega \ind \{3 \delta_{t-1}/\Delta_* \leq 1 \}\\
     & \leq  \left(\dfrac{ 3^{\frac{\omega}{1+\omega}}M_2 \rho\sigma \xmax^2  \xi K  }{\Delta_*^{ \frac{\omega}{1 + \omega}}}\right)^{1 + \omega}  \int_{1}^T \{s^{*2} (\log d + \log u)\}^{\frac{1+\omega}{2}} u^{- \frac{1 + \omega}{2}} \ind\{ u \geq T_1\}\; du\\
     & \leq  \left(\dfrac{ 3^{\frac{\omega}{1+\omega}}M_2 \rho\sigma \xmax^2  \xi K  }{\Delta_*^{ \frac{\omega}{1 + \omega}}}\right)^{1 + \omega} \{s^{*2} (\log d + \log T)\}^{\frac{1+\omega}{2}} \int_{T_1}^\infty u^{- \frac{1 + \omega}{2}} \; du\\
      & =   2 \left(\dfrac{ 3^{\frac{\omega}{1+\omega}}M_2 \rho\sigma \xmax^2  \xi K  }{\Delta_*^{ \frac{\omega}{1 + \omega}}}\right)^{1 + \omega} \{s^{*2} (\log d + \log T)\}^{\frac{1+\omega}{2}} \frac{T_1^{- \frac{\omega -1}{2}}}{\omega -1}\\
      &  \asymp \left\{ \frac{6 \left[M_2 \rho\sigma \xmax^2  \xi K  \right]^2 }{(\omega - 1)}\right\} \left(\frac{s^{*2 } \log d}{\Delta_*}\right).
    \end{aligned}
\end{equation}
    Finally, for $\omega = \infty$ it is easy to see that $J_\omega = 0$. Hence, the result follows from combing Equation \eqref{eq: R(T_0) bound}, \eqref{eq: I_w 0-1}, \eqref{eq: I_w 1-infty 1} and \eqref{eq: J_w}.


\section{Posterior contraction result}
We briefly describe the probability space under which we are working. Given $\beta$, the bandit environment (along with the specific policy $\pi$) gives rise to the chosen contexts $X_t$ and rewards $\br_t$. Here we note that the chosen contexts depend not only on the arm-specific distributions, but also on the sequence of actions taken under $\pi$ till time $t$. Let $\mathcal{Q}_t$ denote the joint distribution of $(\beta, X_t, \br_t)$ under $\beta\sim\Pi$ (prior) and $(X_t, \br_t)\mid\beta \sim \text{SLCB}_t(\beta, \pi, \cP_{\epsilon})$ where the latter indicates the joint distribution of the observed contexts and rewards (till time $t$) under the SLCB environment with policy $\pi$, true parameter $\beta$ and $\cP_{\epsilon}$ denotes the noise distribution. We work under a likelihood misspecified regime, which we now discuss.

We assume that the true parameter is $\beta^*$ and the observations $(X_t, \br_t)$ is generated from $\cQ_t^*:=\text{SLCB}_t(\beta^*, \pi, \cP_{\epsilon}^*)$, where $\pi$ is the policy given by the TS and $\cP_{\epsilon}^*$ is an arbitrary sub-Gaussian distribution. We denote by $\cQ^{*X}_{t,\br_t}$ the conditional distribution of $\br_t$ given $X_t$ arising from the joint $\cQ_t^*$ and $\bbE_{t,\br_t}^X$ to be the expectation under $\cQ^{*X}_{t, \br_t}$. Furthermore, we denote by $\cQ^*_{X_t}$ the marginal distribution of $X_t$ under the joint $\cQ_t^*$ and $\bbE_{X_t}$ to be the corresponding expectation.

For modelling purpose, we place prior $\Pi$ on $\beta$ and model the likelihood as $(X_t, \br_t)\mid \beta \sim \text{SLCB}_t(\beta, \pi, \cP_{\epsilon})$, where $\cP_{\epsilon}$ is taken to be $\sfN(0,\sigma^2)$. This gives rise to a joint distribution $\cQ_t$, as discussed above. Now, let $\Pi_t^{X}(\cdot \mid \br_t)$ denote the posterior distribution of $\beta$ given all others, i.e. it is the conditional measure of $\beta$ given $X_t, \br_t$ arising from the joint $\mathcal{Q}_t$. 

Thus, given a measurable set $B$, $\Pi_t^X(B|\br_t)$ is a random measure, whose randomness is due to $(X_t, r_t)$. In the following result, we consider $\bbE_{t,\br_t}^X \Pi_t^X (B|\br_t)$, which is the expectation of the above under $\cQ^{*X}_{t,\br_t}$. Thus, this quantity itself is a random variable, whose randomness is due to $X_t$. The following result shows that, for $B$ taken as the complement of an appropriate ball around the true $\beta^*$, this random variable is small, almost surely $\cQ_{X_t}^*$.

\label{sec: proof posterior contraction}
\begin{theorem}
\label{thm: posterior contraction}
Consider the bandit problem in \eqref{eq: base model} and let Assumption \ref{assumptions: contexts}-\ref{assumptions: noise} hold. Also, assume that the prior on parameter $\beta$ is modeled as \eqref{eq: prior} with 
\[
(5/3)\overline{\lambda}_t\leq \lambda \leq 2 \overline{\lambda}_t, \quad \overline{\lambda}_t = \xmax \sqrt{2t(\log d + \log t)}
\]

Then the following is true:

\[
 \bbE_{t, \br_t}^{X} \left\{\Pi_t^X\left( 
\norm{\beta - \beta^*}_1  \geq Q_4 \sigma\xmax K \xi  (D_* + s^*) \sqrt{\frac{\log d + \log t}{t}}
\;\bigg\vert\; \br_t\right) \ind_{\cG_t \cap \cT_t(\lambda_0(\gamma_t))}\right\}\lesssim d^{-s^*},
\]
almost sure $X_t$,
where $Q_4$ is a universal constant and $D_* = D_1 s^* + \frac{D_2 \xmax^2 s^*}{\phi_\comp^2(S^*; X_t)} $ with $D_1 = 1 + (40/A_4)$ and $D_2 = 128 A_4^{-1}$.
\end{theorem}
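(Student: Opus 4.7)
\textbf{Proof plan for Theorem \ref{thm: posterior contraction}.}

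The overall approach is to reduce the problem, conditional on the design $X_t$, to the familiar posterior contraction setup of \cite{castillo2015bayesian} for sparse linear regression with the complexity prior \eqref{eq: prior}-\eqref{eq: sparsity-level prior bound}, and then carefully quantify how two bandit-specific complications change the constants: (i) the design $X_t$ is \emph{sequentially chosen} so the rows are dependent, and (ii) the true noise $\cP_{\epsilon}^\ast$ is only sub-Gaussian while the working likelihood is Gaussian (i.e.\ mild misspecification). The plan is to condition on $X_t$ throughout, so that under the working model the likelihood is exactly the classical Gaussian likelihood in $\beta$, and then argue that on the event $\cG_t\cap\cT_t(\lambda_0(\gamma_t))$ every quantity Castillo et al.\ need (compatibility constant of $X_t$, size of $\|X_t^\top \epsilon_t\|_\infty$) is under control with the correct $\sqrt{t\log d}$ scaling.

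The argument has three ingredients, mirroring \cite{castillo2015bayesian}. First, a \emph{prior mass} lower bound: because $\pi_d$ satisfies \eqref{eq: sparsity-level prior bound} and the Laplace slab density is lower bounded on bounded sets, one can show $\Pi(\|\beta-\beta^*\|_1\le\varepsilon_t) \gtrsim \exp(-C s^*\log d)$ for $\varepsilon_t\asymp s^*\sqrt{(\log d+\log t)/t}$, which yields a lower bound on the evidence of order $\exp(-C' s^*\log d)$ in $\bbE^{X}_{t,\br_t}$-probability. Here the misspecification is handled by using the Gaussian likelihood ratio at $\beta^\ast$ and bounding its log-expectation under the true sub-Gaussian noise; this only inflates constants because $\bbE[e^{\alpha\epsilon(\tau)}\,|\,\cF_{\tau-1}]\le e^{\alpha^2\sigma^2/2}$ by Assumption \ref{assumptions: noise}, so the classical Gaussian lower-evidence bound goes through up to constants. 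Second, a \emph{dimension/support} bound: one shows that the posterior puts negligible mass on models of size $> D_\ast$, by a union bound over supports combined with the $\pi_d$ decay \eqref{eq: sparsity-level prior bound} and an upper bound on the likelihood ratio; the inflation factor $1+40/A_4+128 A_4^{-1}\xmax^2/\phi_{\comp}^2(S^\ast;X_t)$ is precisely the one produced by this step in Castillo et al. Third, the \emph{parameter contraction}: once the posterior is concentrated on models of size $\le D_\ast$, one controls $\|\beta-\beta^\ast\|_1$ by the standard translation of the predictive deviation $\|X_t(\beta-\beta^\ast)\|_2$ through the compatibility constant of $X_t$, using the empirical compatibility bound $\phi_{\comp}^2(S^\ast;X_t)\gtrsim (K\xi)^{-1}$ guaranteed on $\cG_{t,3}$ by Proposition \ref{prop: restricted eigenvalue empirical}.

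The technical link that makes everything work in the bandit setting is the control of the cross term $X_t^\top \epsilon_t$. In the i.i.d.\ Gaussian case of \cite{castillo2015bayesian}, one uses a simple sub-Gaussian maximal inequality to get $\|X_t^\top\epsilon_t\|_\infty \lesssim \|X_t\|\cdot\sigma\sqrt{\log d}$. In our case $(x_{a_\tau}(\tau))$ is adapted to the history, so we instead use Proposition \ref{prop: error-design correlation} (which is Lemma EC.2 of \cite{bastani2020online}, a Freedman/martingale-Bernstein bound) with $\gamma=\gamma_t=\sqrt{2\log t}$ to obtain, on $\cT_t(\lambda_0(\gamma_t))$, exactly the bound $\|X_t^\top\epsilon_t\|_\infty \le \sigma\,\overline\lambda_t$, which matches what the Gaussian i.i.d.\ argument would give and is compatible with the prior scaling $\lambda\asymp\overline\lambda_t$ chosen in the theorem. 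Substituting this into the Castillo et al.\ style bound produces the stated $\ell_1$ rate.

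The hard part is not any single estimate but the \emph{bookkeeping}: keeping a clean separation between the modelling measure $\cQ_t$ (under which $\Pi_t^X(\cdot\mid\br_t)$ is a genuine posterior) and the data-generating measure $\cQ_t^\ast$ (with sub-Gaussian noise and bandit-chosen $X_t$), while passing expectations under $\bbE_{t,\br_t}^X$ through the posterior. The misspecification shows up in the denominator of the posterior and in the evidence lower bound; both need to be re-derived so that every expectation under the Gaussian model is dominated by a corresponding expectation under $\cQ_t^\ast$ using the sub-Gaussian m.g.f.\ bound. The intersection with $\cG_t\cap\cT_t(\lambda_0(\gamma_t))$ in the statement is precisely what frees us from having to deal with rare $X_t$ for which the compatibility number or the cross term $X_t^\top\epsilon_t$ are pathological, so that the final rate has the clean form $Q_4\sigma\xmax K\xi(D_\ast+s^\ast)\sqrt{(\log d+\log t)/t}$.
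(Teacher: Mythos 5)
Your overall architecture is the same as the paper's: condition on $X_t$, work on the event $\cG_t\cap\cT_t(\lambda_0(\gamma_t))$ so that the compatibility number of $X_t$ and the cross term $\norm{X_t^\top\epsilon_t}_\infty\leq\sigma\overline{\lambda}_t$ are under control (Propositions \ref{prop: error-design correlation} and \ref{prop: restricted eigenvalue empirical}), and then run the three-step scheme of \cite{castillo2015bayesian} (dimension control, prediction control, estimation control via compatibility). One remark on the misspecification bookkeeping: since the indicator $\ind_{\cG_t\cap\cT_t(\lambda_0(\gamma_t))}$ is already in the statement, no m.g.f.-domination argument between $\cQ_t$ and $\cQ_t^*$ is needed at all; on $\cT_0$ the cross term $\epsilon_t^\top X_t(\beta-\beta^*)$ is bounded deterministically by H\"older, and the denominator is handled by the deterministic Lemma 2 of \cite{castillo2015bayesian} (which only needs the bound on $\norm{X_t}$ supplied by $\cG_{t,1}$), so the sub-Gaussianity of the true noise never enters the proof of this theorem except through the probability of $\cT_t$, which is charged elsewhere.

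There is, however, a genuine gap in your first ingredient. You claim $\Pi\bigl(\norm{\beta-\beta^*}_1\leq\varepsilon_t\bigr)\gtrsim\exp(-Cs^*\log d)$ and hence an evidence lower bound of order $\exp(-C's^*\log d)$, arguing that ``the Laplace slab density is lower bounded on bounded sets.'' This is false for the prior actually prescribed here, because the slab scale is $\lambda\asymp\xmax\sqrt{t(\log d+\log t)}$, not $O(1)$: any $\beta$ in an $\ell_1$-ball of radius $\varepsilon_t$ around $\beta^*$ has $\norm{\beta}_1\gtrsim\norm{\beta^*}_1$, so the prior mass of that ball is at most of order $e^{-\lambda\norm{\beta^*}_1/2}$, i.e.\ as small as $e^{-c\sqrt{t\log d}}$ when $\norm{\beta^*}_1=\Theta(1)$, which is far below $d^{-Cs^*}$ for large $t$. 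The correct argument (the paper's, following Castillo et al.) keeps the factor $e^{-\lambda\norm{\beta^*}_1}$ in the evidence bound and cancels it in the numerator through the decomposition $\lambda\norm{\beta^*}_1+\overline{\lambda}\norm{\beta-\beta^*}_1\leq\lambda\norm{\beta}_1+(\overline{\lambda}-3\lambda/4)\norm{\beta-\beta^*}_1+\tfrac12\norm{X_t(\beta-\beta^*)}_2^2+2s^*\lambda^2/(t\phi_\comp^2(S^*;X_t))$, proved by a case analysis on whether $7\norm{\beta_{S^*}-\beta^*_{S^*}}_1\leq\norm{\beta_{S^{*c}}}_1$ together with Young's inequality and the compatibility condition; the surviving $\lambda\norm{\beta}_1$ is then absorbed by the Laplace factor of the prior in the numerator integral. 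This cancellation mechanism is the crux of why the huge $\lambda$ does not destroy the rate, and it is missing from your plan; as stated, your evidence bound would fail and the downstream constants (in particular the $\exp(2s^*\lambda^2/(t\phi_\comp^2))$-type terms that produce $D_*$) would not be recovered. The remaining two steps of your plan are sound and match the paper once this step is repaired.
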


\begin{proof}
Without loss of generality we assume that $\sigma = 1$ as the bandit reward model can be viewed as 
\[
(\br_t/\sigma) = X_t (\beta^*/\sigma) + (\epsilon_t/\sigma).
\]
In this case $\overline{\lambda}_t = \xmax  \sqrt{2t (\log d + \log t)} =: \overline{\lambda}$. 


Next, define the event 
\[
\cT_0 :=   \left\{ \max_{j \in [d]}  \abs{\epsilon_t^\top X_{t}^{(j)}}\leq \bar{\lambda} \right\}.
\]
By Lemma \ref{prop: error-design correlation} and condition \eqref{eq: feature-noise corr}, it follows that fro any measurable set $\cB\subset \bbR^d$,
\begin{equation*}
    \bbE_{t, \br_t} \Pi_t^X( \cB \mid \br_t) \leq  \left[ \bbE_{t,\br_t} \left\{\Pi_t^X( \cB \mid \br_t) \ind_{\cT_0}\right\}\right]^{1/2} + \frac{2}{t}.
\end{equation*}
Recall that the errors $\epsilon_{t}$ is modeled as isotropic standard Gaussian independent of the features. Thus, conditioned on the matrix $X_{t}$, model likelihood ration takes the following form:
\[
\cL_{t, \beta, \beta^*}(\br_t):= \exp\left\{-\frac{\norm{X_{t}\beta - X_{t}\beta^*}_2^2}{2 } + (\br_t - X_{t}\beta^*)^\top (X_{t} \beta - X_{t}\beta^*)\right\}.
\]

Then by Lemma 2 of \cite{castillo2015bayesian} it follows that
\begin{equation*}
\label{eq: lower bound int likelihood}
    \int \cL_{t, \beta, \beta^*}(\br_t) \; d\Pi(\beta) \geq \frac{\pi_d(s^*)}{p^{2s^*}} e^{- \lambda \norm{\beta^*}_1} e^{-1},
\end{equation*}
where $\Pi$ is given by \eqref{eq: prior}. The only change that is needed in their proof to run the argument in our case is the following upper bound:
\[
\norm{X \beta}_2 \leq \norm{\beta}_1 \norm{X} \leq c_9 \vartheta^2 \phi_u \log K/(1- 3 \varepsilon).
\]
The last inequality follows from the fact that we are on the event $\cG_{t,1}$ by assumption. The rest of the proof follows from the fact that $\lambda \in (5\overline{\lambda}/3, 2 \overline{\lambda})$.

Thus by Bayes's formula it follows that
\begin{equation}
\label{eq: posterior prob upper bound 1}
\begin{aligned}
    \Pi_t^X(\cB \mid \br_t) & = \frac{\int_{\cB} \cL_{t, \beta, \beta^*}(\br_t) \; d\Pi(\beta)}{\int \cL_{t, \beta, \beta^*}(\br_t) \; d\Pi(\beta)}\\
    & \leq \frac{ed^{2s^*}}{\pi_d(s^*)} e^{\lambda \norm{\beta^*}_1 } \int_{\cB} \exp \left\{-\frac{\norm{X_{t}\beta - X_{t}\beta^*}_2^2}{2} + (\br_t - X_{t}\beta^*)^\top (X_{t} \beta - X_{t}\beta^*)\right\}\; d\Pi(\beta).
\end{aligned}
\end{equation}
Using Holder's inequality, we see that on $\cT_0$,
\begin{equation}
\label{eq: error design correlation bound}
\begin{aligned}
    (\br_t - X_t \beta^*)^\top X_{t}(\beta - \beta^*)& = \epsilon_t^\top X_{t}(\beta - \beta^*)\\
    & \leq \norm{\epsilon_t^\top X_t}_\infty \norm{\beta - \beta^*}_1\\
    & \leq \bar{\lambda} \norm{\beta - \beta^*}_1.
\end{aligned}
\end{equation}
Therefore, on the event $\cT_0$, the expected value under $\bbE_{\beta^*}$ of the integrand on the right hand side of \eqref{eq: posterior prob upper bound 1} is bounded above by 
\begin{align*}
    & e^{- (1/2)\norm{X_t(\beta - \beta^*)}_2^2  + \overline{\lambda} \norm{\beta - \beta^*}_1} .
\end{align*}

Thus, we have 
\begin{equation}
\label{eq: posterior prob upper bound}
    \Pi_t^X(\cB \mid \br_t) \ind_{\cT_0} \leq \frac{e p^{2s^*}}{\pi_d(s^*)}  \int_\cB e^{\lambda \norm{\beta}_1 - (1/2)\norm{X_t(\beta - \beta^*)}_2^2  + \overline{\lambda} \norm{\beta - \beta^*}_1
    } \; d\Pi(\beta).
\end{equation}
Now, by triangle inequality,
\begin{equation}
\label{eq: beta decomposition 1}
\begin{aligned}
    \lambda \norm{\beta^*}_1 + \overline{\lambda} \norm{\beta - \beta^*}_1 & = \lambda \norm{\beta^*_{S^*}}_1 + \overline{\lambda} \norm{\beta - \beta^*}_1\\
    & \leq \lambda \norm{\beta^*_{S^*} - \beta_{S^*}}_1 + \lambda \norm{\beta_{S^*}}_1 + \overline{\lambda} \norm{\beta_{S^*} - \beta^*_{S^*}}_1 + \overline{\lambda} \norm{\beta_{S^{*c}}}\\
    & = \lambda \norm{\beta_{S^*}}_1 + \overline{\lambda} \norm{\beta_{S^{*c}}}_1 + (\lambda + \overline{\lambda}) \norm{\beta_{S^*} - \beta^*_{S^*}}_1\\
    & = \lambda \norm{\beta}_1 + (\overline{\lambda} - \lambda) \norm{\beta_{S^{*c}}}_1 + (\lambda + \overline{\lambda}) \norm{\beta_{S^*} - \beta^*_{S^*}}_1 .
\end{aligned}
\end{equation}
\paragraph{Case 1:} Suppose $7 \norm{\beta_{S^*} - \beta^*_{S^*}}_1 \leq \norm{\beta_{S^{*c}}}_1$. Then the following holds:
\begin{align*}
    (\lambda + \overline{\lambda}) \norm{\beta_{S^*} - \beta^*_{S^*}}_1 & =  (\overline{\lambda} - 3 \lambda /4) \norm{\beta_{S^*} - \beta^*_{S^*}}_1 + (7 \lambda/4) \norm{\beta_{S^*} - \beta^*_{S^*}}_1\\
    & \leq (\overline{\lambda} - 3 \lambda /4) \norm{\beta_{S^*} - \beta^*_{S^*}}_1 + (\lambda / 4) \norm{\beta_{S^{*c}}}_1.
\end{align*}
Using the above inequality in \eqref{eq: beta decomposition 1} we get
\begin{equation}
    \label{eq: case 1 beta bound}
    \begin{aligned}
    \lambda \norm{\beta^*}_1 + \overline{\lambda} \norm{\beta - \beta^*}_1 & \leq \lambda \norm{\beta}_1 + (\overline{\lambda} - 3\lambda/4) \norm{\beta_{S^{*c}}}_1 + (\overline{\lambda} - 3 \lambda/4) \norm{\beta_{S^*} - \beta^*_{S^{*}}}_1\\
    & = \lambda \norm{\beta}_1 + (\overline{\lambda} - 3\lambda/4) \norm{\beta - \beta^*}_1 
    \end{aligned}
 \end{equation}
 
 \paragraph{Case 2:} Now assume $7 \norm{\beta_{S^*} - \beta^*_{S^*}}_1 > \norm{\beta_{S^{*c}}}_1$. We again focus on the inequality \eqref{eq: beta decomposition 1}, i.e.,
 \begin{equation}
 \label{eq: case 2 beta decomposition}
     \begin{aligned}
    \lambda \norm{\beta^*}_1 + \overline{\lambda} \norm{\beta - \beta^*}_1 & \leq \lambda \norm{\beta}_1 + (\overline{\lambda} - \lambda) \norm{\beta_{S^{*c}}}_1 + (\lambda + \overline{\lambda}) \norm{\beta_{S^*} - \beta^*_{S^*}}_1 \\
    & =  \lambda \norm{\beta}_1 + (\overline{\lambda} - \lambda) \norm{\beta_{S^{*c}}}_1 + ( \overline{\lambda} - \lambda) \norm{\beta_{S^*} - \beta^*_{S^*}}_1 \\
     & \quad  + 2\lambda \norm{\beta_{S^*} - \beta^*_{S^*}}_1\\
     &  = \lambda \norm{\beta}_1  + ( \overline{\lambda} - \lambda) \norm{\beta - \beta^*}_1  + 2\lambda \norm{\beta_{S^*} - \beta^*_{S^*}}_1\\
     & \leq \lambda \norm{\beta}_1  + ( \overline{\lambda} - 3\lambda/4) \norm{\beta - \beta^*}_1  + 2\lambda \norm{\beta_{S^*} - \beta^*_{S^*}}_1.
\end{aligned}
 \end{equation}
 Finally, by compatibility condition and Young's inequality we get 
 \[
 2 \lambda \norm{\beta_{S^*} - \beta^*_{S^*}} \leq 2\lambda \frac{\norm{X_t(\beta - \beta^*)}_2 s^{* 1/2}}{t^{1/2} \phi_{\comp}(S^*;X_t)} \leq \frac{\norm{X_t(\beta - \beta^*)}_2^2}{2} +  \frac{2 s^* \lambda^2 }{t \phi^2_\comp(S^*;X_t)}.
 \]
 Using the above inequality in \eqref{eq: case 2 beta decomposition} we get 
\begin{equation}
    \label{eq: case 2 beta bound}
    \lambda \norm{\beta^*}_1 + \overline{\lambda} \norm{\beta - \beta^*}_1 \leq \lambda \norm{\beta}_1  + ( \overline{\lambda} - 3\lambda/4) \norm{\beta - \beta^*}_1 +\frac{\norm{X_t(\beta - \beta^*)}_2^2}{2} +  \frac{2 s^* \lambda^2 }{t \phi^2_\comp(S^*;X_t)}.
\end{equation}
 Thus combining \eqref{eq: case 1 beta bound} and \eqref{eq: case 2 beta bound} we can conclude
 \begin{equation}
    \label{eq: final beta bound}
    \lambda \norm{\beta^*}_1 + \overline{\lambda} \norm{\beta - \beta^*}_1 \leq \lambda \norm{\beta}_1  + ( \overline{\lambda} - 3\lambda/4) \norm{\beta - \beta^*}_1 +\frac{\norm{X_t(\beta - \beta^*)}_2^2}{2} +  \frac{2 s^* \lambda^2 }{t \phi^2_\comp(S^*;X_t)}.
\end{equation}
 Using the above result  and recalling that $5 \overline{\lambda}/3 \leq \lambda \leq 2 \overline{\lambda}$, we see that the right hand side of \eqref{eq: posterior prob upper bound} is bounded by 
 \begin{align*}
 \Pi_t^X(\cB \mid \br_t) \ind_{\cT_0} &\leq \frac{e d^{2s^*}}{\pi_d(s^*)} e^{\frac{2 s^* \lambda^2 }{t \phi^2_\comp(S^*;X_t)}} \int_\cB  e^{  \lambda \norm{\beta}_1    -(\lambda/4) \norm{\beta - \beta^*}_1 } \; d\Pi(\beta)
 \end{align*}
 
 \paragraph{Controlling sparsity:}
 For the set $\cB = \{\beta : \abs{S_\beta}> L\}$ and $L\geq s^*$, the above integral can be bounded by
 \begin{align*}
     &\sum_{S: s=\abs{S} >L} \frac{\pi_d(s)}{\binom{d}{s}} \left(\frac{\lambda}{2}\right)^s \int e^{- (\lambda/4)\norm{\beta_S - \beta^*_S}}\; d\beta_S\\
     & \leq \sum_{s = L+1}^\infty \pi_d(s) 4^s\\
     & \leq \pi_d(s^*) 4^{s^*} \left(\frac{4 A_2}{d^{A_4}}\right)^{L+1-s^*} \sum_{j = 0}^{\infty} \left(\frac{4 A_2}{d^{A_4}}\right)^j
 \end{align*}
 
 Thus, we have
 \begin{align*}
 &\bbE^X_{t,\br_t}\left\{ \Pi_t^X(\cB\mid \br_t) \ind_{\cT_0}\right\}\\
 &\lesssim \exp \left\{ 4s^* \log d + \frac{2 s^* \lambda^2 }{t \phi^2_\comp(S^*;X_t)} + s^* \log 4 - (A_4/4)(L+1-s^*) \log d\right\}\\
 & \leq \exp \left\{ 5s^* \log d + \frac{4 s^*  \lambda \overline{\lambda} }{t\phi^2_\comp(S^*;X_t)}  - (A_4/4)(L+1-s^*) \log d\right\}
 \end{align*}
 
 Now recall that $\overline{\lambda}^2 = 2 t \xmax^2  (\log d + \log t) \leq  4 t \xmax^2 \log d $. Using this inequality in the above display we have
 
 \[
 \bbE^X_{t,\br_t}\left\{ \Pi_t^X(\cB\mid \br_t) \ind_{\cT_0}\right\} \lesssim \exp \left\{ 5s^* \log d + \frac{16 s^*  (\lambda /\overline{\lambda}) \xmax^2 \log d }{ \phi^2_\comp(S^*;X_t)}  - (A_4/4)(L+1-s^*) \log d\right\}.
 \]
 
 Thus, setting $L \geq  40s^*/A_4 + s^* + \frac{64 A_4^{-1} s^* \xmax^2 }{\phi^2_\comp(S^*;X_t)} (\lambda /\overline{\lambda}) $ then there exists a universal constant $Q_1>0$ such that 
 \[
 \bbE^X_{t,\br_t}\left\{ \Pi_t^X(\cB\mid \br_t) \ind_{\cT_0}\right\} \leq Q_1 d^{-s^*}.
 \]
 
 \paragraph{Control on prediction:}
 Recall that $\lambda/\overline{\lambda} \leq 2$. Using this and the result in the previous part, we can conclude that the posterior distribution is asymptotically supported on the even $\cB_1 = \left\{\beta : \abs{S_\beta} \leq D_* \right\}$, where 
$D_* = D_1 s^* + \frac{D_2  \xmax^2 s^*}{\phi_\comp^2(S^*; X_t)} $ where $D_1 = 1 + (40/A_4)$ and $D_2 = 128 A_4^{-1}$. By combining \eqref{eq: posterior prob upper bound 1}, \eqref{eq: error design correlation bound} and the inequality $\lambda \norm{\beta^*}_1 \leq 2 \overline{\lambda} \norm{\beta - \beta^*}_1 + \lambda \norm{\beta}_1$ we can conclude that any Borel set $\cB$,
\[
\Pi_t^X(\cB\mid \br_t)\ind_{\cT_0} \leq \frac{ed^{2s^*}}{\pi_d(s^*)}  \int_{\cB} \exp \left\{-\frac{\norm{X_{t}\beta - X_{t}\beta^*}_2^2}{2} + 3 \overline{\lambda} \norm{\beta - \beta^*}_1 + \lambda \norm{\beta}_1\right\}\; d\Pi(\beta).
\]
By the definition in \eqref{eq: uniform compatibility numbers} we have,
\begin{equation}
    \label{eq: pred cotrol beta decomposition}
    \begin{aligned}
     (4-1) \overline{\lambda} \norm{\beta - \beta^*}_1 &\leq \frac{4 \overline{\lambda} \norm{X_t(\beta - \beta^*)}_2 \abs{S_{\beta - \beta^*}}^{1/2}}{t^{1/2}\overline{\phi}_t(\abs{S_{\beta - \beta^*}})} - \overline{\lambda} \norm{\beta - \beta^*}_1\\
     & \leq \frac{1}{4}\norm{X_t(\beta - \beta^*)}_2^2 + \frac{16 \overline{\lambda}^2 \abs{S_{\beta - \beta^*}}}{t \overline{\phi}_t(\abs{S_{\beta - \beta^*}})^2} - \overline{\lambda} \norm{\beta - \beta^*}_1.
    \end{aligned}
\end{equation}
Since $\abs{S_{\beta - \beta^*}} \leq \abs{S_\beta} + s^* \leq D_* + s^*$ on the event $\cB_1$, it follows that
\begin{equation}
    \label{eq: pred control probability upper bound}
    \begin{aligned}
     \Pi_t^X(\cB\mid \br_t)\ind_{\cT_0} &\leq \frac{ed^{2s^*}}{\pi_d(s^*)} e^{16 \overline{\lambda}^2(D_*+ s^*)/( t \overline{\psi}_t (S^*)^2 )}\\
     & \quad \times \int_{\cB} e^{-(1/4)\norm{X_t(\beta - \beta^*)}_2^2 - \overline{\lambda} \norm{\beta - \beta^*}_1 + \lambda \norm{\beta}_1}\; d\Pi(\beta). 
    \end{aligned}
\end{equation}
Now we set $\cB = \cB_2 := \{\beta \in \cB_1: \norm{X_t(\beta - \beta^*)}_2 > L\}$, where $L$ will be chosen shortly.
Recall that $\pi_d(s^*) \geq (A_1 p^{-A_3})^{s^*} \pi_p(0)$. It follows that for set $\cB$, the right hand side of \eqref{eq: pred control probability upper bound} is upper bounded by
\begin{align*}
  &\frac{ed^{2s^*}}{\pi_d(s^*)} e^{16 \overline{\lambda}^2(D_*+ s^*)/( t \overline{\psi}_t (S^*)^2)} e^{-(1/4)L^2} \int e^{- \overline{\lambda} \norm{\beta - \beta^*}_1 + \lambda \norm{\beta}_1} \; d \Pi(\beta)\\
  & \lesssim d^{(2+A_3)s^*} A_1^{-s^*} e^{16 \overline{\lambda}^2(D_*+ s^*)/(t \overline{\psi}_t (S^*)^2)} e^{-(1/4)L^2} \underbrace{\sum_{s=0}^d \pi_d(s^*) 2^s}_{O(1)}.
\end{align*}
Hence by a calculation similar to previous discussion yields that for
\[
\frac{1}{4}L^2 = (3+A_3) s^* \log d + \frac{16 \overline{\lambda}^2 (D_* + s^*)}{t \overline{\psi}_t (S^*)^2} \leq Q_2  \xmax^2 (D_* + s^*) \frac{\log d + \log t}{\overline{\psi}_t (S^*)^2} =: L_*^2,
\]
where $Q_2>0$ is  sufficiently large universal constant, then we have
\[
 \bbE^X_{t,\br_t}\left\{ \Pi_t^X(\cB_2 \mid \br_t) \ind_{\cT_0}\right\} \leq \frac{1}{d^{s^*}}.
\]

\paragraph{Control on estimation:}
Similar to \eqref{eq: pred cotrol beta decomposition} we have
\begin{align*}
\overline{\lambda} \norm{\beta - \beta^*}_1 &\leq \frac{ \overline{\lambda} \norm{X_t(\beta - \beta^*)}_2 \abs{S_{\beta - \beta^*}}^{1/2}}{t^{1/2}\overline{\phi}_t(\abs{S_{\beta - \beta^*}})}\\
 & \leq \frac{1}{4} \norm{X_t(\beta - \beta^*)}_2^2 + \frac{\overline{\lambda}^2 \abs{S_{\beta - \beta^*}}}{t \overline{\phi}_t(\abs{S_{\beta - \beta^*}})^2 }.
\end{align*}
On the event $\cB_2 $, we thus have
\[
\overline{\lambda} \norm{\beta - \beta^*}_1 \leq Q_3 \xmax^2 (D_* + s^*) \frac{\log d + \log t}{\overline{\psi}_t(S^*)^2}.
\]
Finally on the event $\cB_2 \cap \cG_t$ we have $\overline{\lambda} =  \xmax \sqrt{2 t (\log d + \log t)}$ and $\overline{\psi}_t(S^*)^2 \gtrsim ( K \xi  )^{-1}$ and it follows that 
\[
\norm{\beta - \beta^*}_1 \leq Q_4 K \xi \xmax (D_* + s^*) \sqrt{\frac{\log d + \log t}{t}}.
\]
\end{proof}

\section{Technical lemmas}

\subsection{Proof of Lemma \ref{lemma: minimum sparse eigen on net}}
\label{sec: proof minimum sparse eigen on net}
As $A$ is symmetric positive definite matrix, by Cholesky decomposition there exists as lower triangular matrix $L$ such that $A = LL^\top$. Let $v \in \bbS_0^{d-1}(s)$. Then there exists a index set $J$ of size $s$, such that $\supp(v) \subseteq J$. Hence we have $v \in E_J$. Now consider the net $\cN_{\varepsilon, J}$ and let $u$ be the nearest point to $v$ in $\cN_{\varepsilon, J}$. Thus we have $\norm{v-u}_2 \leq \varepsilon < 1$ and $\norm{v-u}_0 \leq s$. Then we have the following:
\begin{equation}
    \label{eq: quad break}
\begin{aligned}
    v^\top A v  &= (v-u)^\top A (v-u)  + 2 (v-u)^\top A u + u^\top A u\\
    & \geq u^\top A u - 3 \varepsilon \phi_{\max}(s; A).
\end{aligned}
\end{equation}
The second inequality follows from the following facts:
\[
\abs{(v-u)^\top A (v-u)} \leq \norm{v-u}_2^2 \phi_{\max}(s; A) \leq \varepsilon \phi_{\max}(s; A),
\]
\[
\begin{aligned}
 \abs{(v - u)^\top A u} &= \abs{(v - u)^\top LL^\top u}\\
 & \leq \sqrt{(v-u)^\top L L^\top (v-u)} \sqrt{u^\top LL^\top u} \quad \\
 & = \sqrt{(v-u)^\top A (v-u)} \sqrt{u^\top A u} \\
 & \leq \varepsilon \phi_{\max}(s;A)
\end{aligned}
\]
Then the result follows from taking infimum over $u$ and $v$ in both sides.

\subsection{Proof of Lemma \ref{lemma: maximum sparse eigen on net}}
\label{sec: proof maximum sparse eigen on net}
The lower bound result is trivial. Hence, we focus on the upper bound part.
As $A$ is symmetric positive definite matrix, by Cholesky decomposition there exists as lower triangular matrix $L$ such that $A = LL^\top$. Let $v \in \bbS_0^{d-1}(s)$. Then there exists a index set $J$ of size $s$, such that $\supp(v) \subseteq J$. Hence we have $v \in E_J$. Now consider the net $\cN_{\varepsilon, J}$ and let $u$ be the nearest point to $v$ in $\cN_{\varepsilon, J}$. Thus we have $\norm{v-u}_2 \leq \varepsilon < 1/3$ and $\norm{v-u}_0 \leq s$.
By a similar argument as in the proof of Lemma \ref{lemma: minimum sparse eigen on net}, we can conclude that
\[
v^\top A v \leq 3 \varepsilon \phi_{\max}(s; A) + \max_{u \in \cN_\varepsilon} u^\top A u.
\]
Thus by taking supremum over $v$ on the left-hand side of the above display we get
\[
(1 - 3 \varepsilon)\phi_{\max}(s; A) \leq  \max_{u \in \cN_\varepsilon} u^\top A u \Longleftrightarrow \phi_{\max}(s;A) \leq \frac{1}{1 - 3 \varepsilon} \max_{ u\in \cN_\varepsilon} u^\top A u.
\]

\section{Pseudo code of VBTS and other tables}
\label{sec: pseudo-code VBTS}
\begin{algorithm}[ht!]
\SetAlgoLined
 Set  $\his_0 = \emptyset$\;
 \For{$t=1,\cdots, T$}{
  \If{$t \leq 1$}{Choose action $a_t$ uniformly over $[K]$\;}
  \Else{
    Compute VB posterior $\tilde{\Pi}_{t-1}$ from $\Pi(\cdot\mid \cH_{t-1})$ 
    using CAVI \; 
  Generate sample $\tilde{\beta}_t\sim \tilde{\Pi}_{t-1}$\;
  Play arm: $a_t = \argmax_{i \in {[K]}} \;x_{i}(t)^\top \tilde{\beta}_t$\;
  }
  Observe reward $r_{a_t}(t)$\;
  Update $\his_{t} \leftarrow \his_{t-1} \cup \{(a_t, r_{a_t}(t), x_{a_t}(t))\}$.
 }
 \caption{Variational Bayes Thompson Sampling}
 \label{alg: VBTS}
\end{algorithm}

\renewcommand{\arraystretch}{1.5}
\setlength{\tabcolsep}{1pt}
\begin{table}[ht!]
\small
    \centering
    \caption{Time comparison among the competing algorithms.}
    \vspace{0.1in}
    \begin{tabular}{ |c|c|c|c| }
    \hline
    \multirow{2}{*}{Type} & \multirow{2}{*}{Algorithm} & \multicolumn{2}{|c|}{Mean time of execution (seconds)} \\
    \cline{3-4}
    & & Equicorrelated & Auto-regressive \\
    \hline
    \hline
    \multirow{4}{5em}{\hspace{5pt}Non-TS } & LinUCB & 15.41 & 16.30 \\ & DR Lasso & 3.12 & 3.13 \\ 
    & Lasso-L1 & 3.57 & 3.59\\ 
    & ESTC & 1.01 & 1.05 \\
    \hline
    \hline
    \multirow{3}{5em}{\hspace{15pt}TS} & LinTS & 1344.39 & 1346.46\\ 
    & BLasso TS & 1511.68 & 1455.53\\
    & \textbf{VBTS} & \textbf{29.33} & \textbf{27.65} \\ 
    \hline
    \end{tabular}
    
    \label{tab: time table 2}
\end{table}

\end{appendices}

\end{document}